\DeclareMathOperator{\Var}{Var}
\DeclareMathOperator{\argmin}{arg\,min}
\numberwithin{equation}{section}
\title{Explainable post-training bias mitigation \\ with distribution-based fairness metrics}
\author{ Ryan Franks\thanks{Discover Financial Services, Riverwoods, IL, USA 
\\  \textit{\parindent=1em{\color{white}\hspace{1em} }The research was conducted while at Discover Financial Services (acquired by Capital One in May 2025).}} 
 \textsuperscript{,$\!\!\!$}
\thanks{first author, ryanfranks1996@gmail.com, ORCID:0000-0002-6007-1017}, \and
  Alexey Miroshnikov\textsuperscript{\specificthanks{1},}\thanks{principal investigator, amiroshn@terpmail.umd.edu, ORCID:0000-0003-2669-6336}, \and  Konstandinos Kotsiopoulos\textsuperscript{\specificthanks{1},}\thanks{kkotsiop@gmail.com, ORCID:0000-0003-2651-0087}  }
\date{}
\begin{document}

\makeatletter
\def\namedlabel#1#2{\begingroup
    #2%
    \def\@currentlabel{#2}
    \phantomsection\label{#1}\endgroup
}
\newcommand{\specificthanks}[1]{\@fnsymbol{#1}}
\newcommand{\lrarrow}{\mathrel{\mathpalette\lrarrow@\relax}}
\newcommand{\lrarrow@}[2]{%
  \vcenter{\hbox{\ooalign{%
    $\m@th#1\mkern6mu\rightarrow$\cr
    \noalign{\vskip1pt}
    $\m@th#1\leftarrow\mkern6mu$\cr
  }}}%
}
\makeatother
\newcommand{\tF}{\widetilde{F}}
\newcommand{\Lip}{{\rm Lip}}
\newcommand{\cT}{\mathcal{T}}
\newcommand{\loss}{\mathcal{L}}
\newcommand{\family}{\mathcal{F}}
\newcommand{\eloss}{\bar{\mathcal{L}}}
\newcommand{\E}{\mathbb{E}}
\newcommand{\1}{\mathbbm{1}}
\renewcommand{\P}{\mathbb{P}}
\newcommand{\Q}{\mathbb{Q}}
\newcommand{\T}{\top}
\newcommand{\trp}{^{\text{T}}}
\newcommand{\diag}{{\text{diag}}}
\newcommand{\Def}{\mathrm{Def}}
\newcommand{\f}[2]{\frac{#1}{#2}}
\newcommand{\gr}[1]{\nabla {#1}}
\newcommand{\Id}{{\bf I}}
\newcommand{\TT}{\mathcal{T}}
\newcommand{\nna}{{\rm num}}
\newcommand{\na}{ {\rm na} }
\newcommand{\EPE}{ {\rm EPE}}
\newcommand{\mat}{\hbox{Mat}}
\newcommand{\favdir}{\varsigma}
\def\del{\partial}
\def\torus{\mathbb{T}}
\def\Real{{\mathop{\hbox{\msym \char '122}}}}
\newcommand{\n}{\mbox{\boldmath $ \nu$}}
\newcommand{\cd}{{\cal D}}
\newcommand{\sma}{_{{ A}}}
\newcommand{\cE}{{\cal E}}
\newcommand{\sgn}{{\rm sgn}}
\newcommand{\dist}{{\mbox{dist}}}
\newcommand{\cS}{{\cal S}}
\newcommand{\cA}{{\cal A}}
\newcommand{\cC}{{\cal C}}
\newcommand{\cP}{{\cal P}}
\newcommand{\cB}{{\cal B}}
\newcommand{\cL}{{\cal L}}
\newcommand{\cG}{{\cal G}}
\newcommand{\cF}{{\cal F}}
\newcommand{\cI}{{\cal I}}
\newcommand{\cR}{{\cal R}}
\newcommand{\cK}{{\cal K}}
\newcommand{\hj}{^{J}}
\newcommand{\m}{\mbox{\boldmath $ \mu$}}
\newcommand{\nutx}{\nu_{t,x}}
\newcommand{\mutx}{\mu_{t,x}}
\newcommand{\hjm}{^{J-1}}
\newcommand{\cu}{{\cal U}}
\newcommand{\tn}{{\tilde\|}}
\newcommand{\rbar}{\overline{r}}
\newcommand{\dxdt}{\; dxdt}
\newcommand{\dx}{\; dx}
\newcommand{\oeps}{\overline{\varepsilon}}
\newcommand{\cgl}{\hbox{Lie\,}{\cal G}}
\newcommand{\ttd}{{\tt d}}
\newcommand{\ttdel}{{\tt \delta}}
\newcommand{\ns}{\nabla_*}
\newcommand{\csl}{{\cal SL}}
\newcommand{\spsi}{_{{{\Psi}}}}
\newcommand{\kr}{\hbox{Ker}}
\newcommand{\Tcal}{\mathcal{T}}
\newcommand{\Pcal}{\mathcal{P}}
\newcommand{\qinfo}{\stackrel{\circ}{Q}_{\infty}}
\newcommand{\eq}[1]{\begin{equation*}#1\end{equation*}}
\newcommand{\diagmtx}[3]{\begin{bmatrix} #1 \; \quad \; \quad\\\quad \; #2 \; \quad \\\quad \; \quad \; #3\end{bmatrix}}

\newcommand{\B}{\mathcal{B}}
\newcommand{\cof}{\mathrm{cof\, }}
\newcommand{\RR}{\mathbb{R}}
\newcommand{\PP}{\mathbb{P}}
\newcommand{\eps}{\varepsilon}
\newcommand{\Chi}{\mathcal{X}}
\newcommand{\D}{\mathrm{D}}
\newcommand{\cD}{\mathcal{D}}
\newcommand{\pd}[2]{\frac{\partial #1}{\partial #2}}
\newcommand{\BBR}[1]{\left( #1 \right)}
\newcommand{\BBS}[1]{\left[ #1 \right]}
\newcommand{\BBF}[1]{\left\{ #1 \right\}}
\newcommand{\BBN}[1]{\left\| #1 \right\|}
\newcommand{\BBA}[1]{\left | #1 \right |}
\newcommand{\mdd}[1]{M^{#1\times #1}}
\newcommand{\mddplus}[1]{M^{#1\times #1}_+}
\newcommand{\mddplusb}[1]{{\bar{M}}^{#1\times #1}_+}
\newcommand{\PHD}{\frac{\partial{\varPhi^A}}{\partial{F_{i\alpha}}}}
\newcommand{\PSD}{\frac{\partial{\Xi^A}}{\partial{F_{i\alpha}}}}

\newcommand{\rbcBias}{\widehat{B}ias}
\newcommand{\Bias}{\text{\rm Bias}}
\newcommand{\MSE}{\mbox{\rm MSE}}
\newcommand{\MISE}{\mbox{\rm MISE}}
\newcommand{\VAR}{\mathbb{V}}
\newcommand{\opt}{\mbox{opt}}
\newcommand{\Rhat}{\widehat{R}}
\newcommand{\fhat}{\hat{f}}
\newcommand{\fbar}{\bar{f}}
\newcommand{\ftilde}{\tilde{f}}
\newcommand{\Ytilde}{\widetilde{Y}}
\newcommand{\phat}{\widehat{p}}
\newcommand{\yhat}{\widehat{y}}
\newcommand{\Yhat}{\widehat{Y}}
\newcommand{\signbias}{ \widetilde{bias} }
\newcommand{\bias}{ bias }
\newcommand{\perf}{ \text{\it perf}\, }
\newcommand{\modbias}{ \Bias }

\newcommand{\PDP}{ P\!D\!P }
\newcommand{\SHAP}{ S\!H\!A\!P}
\newcommand{\QSHAPP}{ S\!H\!A\!P^{\mathcal{P}}}
\newcommand{\ME}{ \text{\tiny \it ME}}
\newcommand{\CE}{ \text{\tiny \it CE}}
\newcommand{\ICE}{ \text{\tiny \it ICE}}
\newcommand{\IBE}{ \text{\tiny \it IBE}}
\newcommand{\vpdp}{v^{\ME}}
\newcommand{\vce}{v^{\CE}}
\newcommand{\vme}{\vpdp}
\newcommand{\cvpdp}{\tilde{v}^{\PDP}}
\newcommand{\cvce}{\tilde{v}^{\CE}}
\newcommand{\vpdpP}{v^{\ME,\mathcal{P}}}
\newcommand{\vceP}{v^{\CE,\mathcal{P}}}
\newcommand{\A}{\mathcal{A}}

\newcommand{\txq}[1]{\quad \mbox{#1} \quad}
\newcommand{\txc}[1]{\; \mbox{#1} \;}
\newcommand{\txs}[1]{\: \mbox{#1} \:}
\newcommand{\tx}{\mbox}

\def\clap#1{\hbox to 0pt{\hss#1\hss}}
\renewcommand{\dddot}[1]{\overset{\small\mathrm{\raisebox{-0.15ex}{\clap{$\displaystyle\hspace{0.1em}.\hspace{-0.1em}.\hspace{-0.1em}.$}}}}{#1}}
\mathchardef\mhyphen="2D

\newtheorem{theorem}{Theorem}[section]
\newtheorem{definition}{Definition}[section]
\newtheorem{lemma}{Lemma}[section]
\newtheorem{remark}{Remark}[section]
\newtheorem{example}{Example}[section]
\newtheorem{proposition}{Proposition}[section]
\newtheorem{corollary}[theorem]{Corollary}
\newtheorem*{convention*}{Convention}
\newtheorem*{corollary*}{Corollary}

\mathchardef\mhyphen="2D
\newcommand{\cbias}{c\mhyphen bias}
\newcommand{\signcbias}{ \widetilde{c\mhyphen bias} }

\maketitle

\begin{abstract}
We develop a novel bias mitigation framework with distribution-based fairness constraints suitable for producing demographically blind and explainable machine-learning models across a wide range of fairness levels. This is accomplished through post-processing, allowing fairer models to be generated efficiently without retraining the underlying model. Our framework, which is based on stochastic gradient descent, can be applied to a wide range of model types, with a particular emphasis on the post-processing of gradient-boosted decision trees. Additionally, we design a broad family of global fairness metrics, along with differentiable and consistent estimators compatible with our framework, building on previous work. We empirically test our methodology on a variety of datasets and compare it with alternative post-processing approaches, including Bayesian search, optimal transport projection, and direct neural network training.
\end{abstract}

\smallskip

\textbf{Keywords.} ML fairness, ML interpretability, bias mitigation, post-processing.

\vspace{5pt}

\textbf{AMS subject classification.}  49Q22, 65K10, 91A12, 68T01.

\section{Introduction}

Machine learning (ML) techniques have become ubiquitous in the financial industry due to their powerful predictive performance.  However, ML model outputs may lead to certain types of unintended bias, which are measures of unfairness that impact protected sub-populations.

Predictive models, and strategies that rely on such models, are subject to laws and regulations that ensure fairness. For instance, financial institutions (FIs) in the U.S. that are in the business of extending credit to applicants are subject to the Equal Credit Opportunity Act (ECOA) \citep{ECOA} and the Fair Housing Act (FHA) \citep{FHA1974}, which prohibit discrimination in credit offerings and housing transactions. The protected classes identified in the laws, including race, gender, age (subject to very limited exceptions), ethnicity, national origin, and marital status, cannot be used as attributes in lending decisions.

While direct use of protected attributes is prohibited under ECOA when training any ML model, other attributes can still act as their ``proxies'', which may potentially lead to discriminatory outcomes. For this reason, it is crucial for FIs  to evaluate predictive models for potential bias without sacrificing their high predictive performance.

In this work, we propose a framework that addresses the problem of ML model bias in the financial industry setting under regulatory constraints. In such contexts, bias mitigation methods must
satisfy multiple criteria: they must accommodate diverse predictive models, avoid direct use of the protected
attribute, yield models that maintain explainability, and be computationally efficient to enable the construction
of models along a fairness–performance frontier. Furthermore, it is often crucial to employ global bias metrics
that evaluate fairness over the entire score distribution, since the decision thresholds of classifiers are not
typically known at the training stage and may differ across downstream applications. To accomplish this, we employ global bias metrics such as those used in \cite{Kwegyir-Aggrey2023, Chzhen2020, Chzhen2022} and gradient descent based optimization techniques as in \cite{Vogel2021, Jiang2020Jul}.

There is a comprehensive body of research on fairness metrics and bias mitigation. The bias mitigation approaches discussed in the survey paper \citep{mehrabi2022} depend on the operational flow of model development processes and fall into one of three categories: pre-processing methods, in-processing methods, and post-processing methods. Pre-processing methods modify datasets before model development to reduce the bias in trained models. In-processing methods modify the model development procedure itself. Finally, post-processing methods adjust already-trained models to be less biased. Each category presents trade-offs that affect their suitability for real-world applications, particularly in regulated domains such as finance.

Pre-processing methods may reduce the strength of relationships between the features and protected class as in \citep{delBarrio, Feldman2015}, which apply optimal transport methods to adjust features. Alternatively, they may re-weight the importance of observations as in \citep{calders2009, Jiang2020}, or adjust the dependent variable  \citep{Kamiran2009}. By employing these techniques, one can reduce the bias of any model trained on the modified dataset.

In-processing methods modify the model selection procedure or adjust the model training algorithm to reduce bias. For example, \citep{Perrone2020} introduces bias as a consideration when selecting model hyperparameters using Bayesian search. For tree-based models, \citep{Kamiran2010} modifies the splitting criteria and pruning procedures used during training to account for bias. For neural networks, \citep{Vogel2021} alters the loss function with a bias penalization based on receiver operating characteristic curves. Similarly, \citep{Jiang2020Jul} proposes training logistic regression models using a bias penalization based on the 1-Wasserstein barycenter \cite{AguehCarlier2011,Brizzi2025} of subpopulation score distributions.

Post-processing methods either reduce the bias in classifiers derived from a given model, as in \cite{Hardt2015,Dwork2012}, or reduce the model bias according to a global metric (e.g., the Wasserstein bias \cite{Miroshnikov2020}). To this end, \cite{Kwegyir-Aggrey2023,Chzhen2020,Chzhen2022} adjust score subpopulation distributions via optimal transport, while \cite{Miroshnikov2020b} optimizes a bias-penalized loss through Bayesian search  over a family of models constructed by scaling inputs to a trained model. Lastly, \cite{pangia2024} proposes a novel re-processing approach for tree ensembles, where boosting is applied first and the resulting decision trees are then adjusted to satisfy a fairness constraint based on information value \cite{siddiqi2006credit}.

While these approaches are often useful, their application in financial services may be limited by the aforementioned practical and regulatory considerations. To elaborate, we seek a bias mitigation strategy that meets the following criteria:
\begin{itemize}
\item [$(i)$] \textsc{Global bias metrics}. Binary decisions are made by thresholding a model score by a cut-off value unknown at the model development stage. Thus, the methodology should support a range of metrics that evaluate classifier bias across decision thresholds of interest, such as the metrics in \cite{Vogel2021,Jiang2020Jul, Miroshnikov2020, Becker2024}.
\item  [$(ii)$] \textsc{Model flexibility.} The methodology should be  applicable to different types of models, such as generalized linear models, neural networks, tree ensembles, etc., to accommodate a range of tasks.
\item [$(iii)$] \textsc{Demographic-blindness.} Fairer models must have no explicit dependence on the protected attribute. Its use for inference may be prohibited by law, and furthermore, collecting information on it may be practically infeasible, except for proxy information such as in \cite{Elliot2009} for validation purposes.

\item [$(iv)$]  \textsc{Explainability.} Fairer models should be explainable, as regulations in FIs require applicants to be informed of factors leading to adverse credit decisions\footnote{See \citep{Hall2021} for further discussion of regulatory constraints impacting FIs, and Section  \ref{sec::distfairness} for further details on explainability.}. That is, we require that an existing explainability algorithm (e.g., \cite{Ribeiro2016,Strumbelj2014,LundbergTreeSHAP,LundbergLee,LundbergIntervTreeSHAP,filom2024,yang2021gami})  can be applied to or extended for the fairer model at a reasonable computational cost.

\item [$(v)$] \textsc{Efficient frontiers.} The method must be computationally fast to allow for the construction of a range of predictive models with different bias values, enabling the selection of a model with an appropriate bias-performance trade-off at a later stage.
\end{itemize}

For many existing in-processing bias mitigation approaches, these criteria pose challenges. Some notable methods, such as \cite{Jiang2020Jul,Vogel2021}, are appealing in light of their use of distribution-based bias metrics and gradient-based optimization. Specifically, \cite{Jiang2020Jul} trains logistic regression models with a loss penalized by the 1-Wasserstein barycenter of subpopulation score distributions, while \cite{Vogel2021} introduce a custom loss integrating a ROC-based fairness metric into neural network training. However, they limit model flexibility $(ii)$ and explainability $(iv)$, and they preclude the use of tree ensembles, which are often preferred for tabular data \citep{schwartz-ziv2022}. Similarly, \cite{Ravichandran2020} proposes an extension of XGBoost that integrates fairness constraints into its gradient and Hessian computations. These constraints must be expressed as expected losses, making them incompatible with most integral probability metrics such as Wasserstein that lack per-sample loss formulations.

Some in-processing methods avoid reliance on specific model architectures, improving flexibility. For example, \cite{Perrone2020} introduces a model-agnostic approach that incorporates fairness into hyperparameter tuning via Bayesian optimization. While this can satisfy criteria $(i)$–$(iv)$, the reliance on Bayesian search limits optimization power \citep{frazier2018}, posing challenges for criterion $(v)$.

Post-processing methods such as \cite{Miroshnikov2020b} also rely on Bayesian optimization and therefore have similar limitations in computational efficiency. Other post-processing methods, such as \cite{Jiang2020Jul} and \cite{Kwegyir-Aggrey2023}, employ optimal transport to align score distributions across subpopulations. These methods address criteria $(i)$, $(ii)$, and $(v)$, but produce models that explicitly depend on the protected attribute -- an exception being \cite{miroshnikovpatent}, where the dependence is removed. Furthermore, these approaches transform the trained model in ways that hinder explainability.

Overall, existing pre-processing and in-processing approaches often require costly retraining to obtain fairer models across different bias levels (i.e., the efficient bias-performance frontier), which may be computationally impractical for large datasets. Additionally, such approaches reduce flexibility, as the fairness metric is embedded into the training process. This makes later adaptation to regulatory or institutional changes difficult. Conducting bias mitigation in the post-processing stage, by contrast, allows practitioners to select or refine the fairness metric after model training, providing flexibility and adaptability.

Our novel post-processing framework for bias mitigation, motivated by \cite{Jiang2020Jul,Vogel2021}, addresses all criteria $(i)$–$(v)$. We introduce a broad family of global bias metrics and develop differentiable estimators suitable for optimization via gradient descent. We then construct parameterized families of post-processed models that can be adapted from any trained, demographically blind, and explainable model architecture. These model families are designed to remain explainable and allow efficient gradient-based optimization. Finally, we construct bias-performance efficient frontiers of these model families by optimizing under fairness constraints, expressed through our proposed global bias estimators. This optimization is carried out using stochastic gradient descent, in the same manner that one may train a neural network. 

Unlike in-processing approaches, which must embed the fairness metric into the training process, our method performs fairness optimization over a family of correctors (encoders), enabling fast and flexible construction of fair models. This is particularly well suited for tree-based models, such as gradient-boosted decision trees, where fairness constraints are hard to incorporate during training. We elaborate on the most salient technical details of this strategy below.

To ground the discussion, we consider the joint distribution $(X, Y, G)$, where $X=(X_1,\dots,X_n)$ is a vector of features, $Y\in \RR$ is the response variable, and $G \in \{0,1,\dots,K-1\}$ represents the protected attribute. Addressing criterion $(i)$, we propose a broad class of distribution-based bias metrics that quantify disparities between the distributions of model scores across protected groups. When $G \in \{0,1\}$ is binary, the metric takes the form
\begin{equation}\label{intro:bias}
\cB(f|X,G) := \int c\bigl(F_{f(X)|G=0}(t), F_{f(X)|G=1}(t)\bigr) \, \mu(dt),
\end{equation}
where $c(\cdot,\cdot)$ is a cost function, $F_{f(X)|G=k}$ is the cumulative distribution function of $f(X) | G=k$, and $\mu$ is a probability measure signifying the importance of the classifier associated with threshold $t \in \mathbb{R}$ which may in principle, depend on $(f, P_G, P_{X|G})$ as in \cite{Vogel2021,Becker2024}. For a raw probability score, $f$ in \eqref{intro:bias} is replaced with $\operatorname{logit}(f)$. This formulation encompasses many metrics including the 1-Wasserstein metric, the energy distance \cite{Szekely1989}, and others \cite{Becker2024, Vogel2021}. Moreover, it generalizes naturally to non-binary protected attributes as in \cite{Jiang2020Jul, Miroshnikov2020}. We design estimators for these metrics via relaxation methods inspired by \cite{Vogel2021} and perform an asymptotic analysis to justify their use in numerical optimization; see Section \ref{sec::biasapprox}.

We can now design generic, model-flexible methods to produce families of post-processed models. Given a trained regressor or raw probability score model  $f_*$, we select a vector  $w=(1,w_1,\dots, w_m)(x;f_*)$ of weight functions (or encoders) and construct the corresponding family of models:
\begin{equation}\label{intro:family}
\cF(f_*;w) := \big\{f_{\theta}:  f_{\theta}(x;f_*):=f_*(x) - \theta \cdot w(x;f_*), \,\, \theta \in \RR^{m+1} \big\},
\end{equation}
where $\theta \in \RR^{m+1}$ is learnable, and $w$ may generally depend on the underlying model representation; see Section \ref{sec::outputperturbation}. In accordance with criteria $(iii)$ and $(iv)$, we typically require that both $f_*$ and $w$ be demographically blind and explainable.

Finally, we tackle criterion $(v)$ and seek models in $\cF(f_*,w)$ whose bias-performance trade-off is optimal -- that is, the least biased among similarly performing models. To construct the efficient frontier of $\cF(f_*;w)$, adapting the approaches in \cite{Jiang2020Jul,Vogel2021}, we solve a minimization problem with a fairness penalization \citep{Karush,KuhnTucker}: 
\[
\theta^*(\omega) := {\rm argmin}_{\theta} \{\cL(f_\theta) + \omega \B(f_{\theta}|X,G)\},
\]
where $\cL$ is a given loss function and $\omega\geq 0$ is a bias penalization coefficient.

Crucially, the above minimization problem is linear in $w$. Unlike the post-processing approach in \cite{Miroshnikov2020b}, this setup circumvents the lack of differentiability of the trained model, enabling the use of gradient-based methods as opposed to Bayesian search even when $f_*$ is discontinuous (e.g., tree-based ensembles). Consequently, we may efficiently post-process any model while optimizing a high-dimensional parameter space via stochastic gradient descent.

Furthermore, given an explainer map $(x,f,X) \mapsto E(x;f,X) \in \RR^n$, assumed to be linear in $f$, the explanation of any model in \eqref{intro:family} can be expressed in terms of those of the trained model and the encoders\footnote{In some cases, our method is compatible with explanations that are not linear in $f$ such as path-dependent TreeSHAP \cite{LundbergTreeSHAP}.}. Thus, the explanations for any model in the family  can be efficiently reconstructed  for an entire dataset.

Clearly, the choice of encoder family is key to ensuring both demographic blindness and explainability in post-processed models. We propose three such families -- based on additive models, weak learners (for tree ensembles), and model explanations -- that yield interpretable and flexible corrections; see Section~\ref{sec::methods_perturb}.

Our approach enables fast construction of demographically blind, explainable models with favorable bias-performance trade-offs. We evaluate it on both synthetic and real-world datasets \citep{adult_2, bank_marketing_222, compas}, comparing against the post-processing method of \cite{Miroshnikov2020b} and an in-processing neural network approach adapted from \cite{Vogel2021}. We find that our post-processing method yields strong bias-performance frontiers. We also examine how dataset characteristics influence outcomes and suggest strategies to mitigate overfitting.

\vspace{5pt}

\noindent{\bf Structure of the paper.}  In Section \ref{sec::preliminaries}, we introduce the requisite notation and fairness criteria for describing the bias mitigation problem, approaches to defining model bias, and an overview of model explainability. In Section \ref{sec::biasapprox}, we provide differentiable estimators for various bias metrics as well as asymptotic analysis. In Section \ref{sec::methods_perturb}, we introduce post-processing methods for explainable bias mitigation using stochastic gradient descent. In Section \ref{sec::experiments}, we systematically compare these methods on synthetic and real-world datasets. In the appendix, we provide various auxiliary lemmas and theorems as well as additional numerical experiments.

\section{Preliminaries}
\label{sec::preliminaries}

\subsection{Notation and hypotheses}\label{sub:notation_and_hypotheses}

In this work, we investigate post-training bias mitigation methods that address distribution-based fairness constraints while preserving model explainability. We are given a joint distribution triple $(X, G, Y)$ composed of predictors $X=(X_1,X_2,\dots,X_n)$, a response variable $Y$, and a demographic attribute $G\in \{0,1,\dots, K-1\}=:\mathcal{G}$ which reflects the subgroups that we desire to treat fairly. We assume that all random variables are defined on the common probability space $(\Omega,\mathcal{F},\PP)$, where $\Omega$ is a sample space, $\PP$ a probability measure, and $\mathcal{F}$ a $\sigma$-algebra of sets. Finally, the collection of Borel functions on $\RR^n$ is denoted by  $\mathcal{C}_{\mathcal{B}(\RR^n)}$.

With this context, the bias mitigation problem seeks to find Borel models $f(x)$ that typically approximate the regressor $\E[Y | X=x]$ or classification score $\PP(Y = 1 | X=x)$ (if $Y\in\{0,1\}$) that are less biased according to some model bias definition. Typically these definitions require one to determine the key fairness criteria for the business process employing $f(x)$, how deviations from these criteria will be measured, and finally how these deviations relate to the model $f(x)$ itself. Below, we review this process to properly contextualize model-level bias metrics of interest.

Given a model $f$ and features $X$, we set $Z:=f(X)$ and denote model subpopulations as $Z_k:=f(X)|G=k$, $k \in \mathcal{G}$. The subpopulation cumulative distribution function (CDF) of  $Z_k$ is denoted by 
\[F_k(t):=F_{f(X)|G=k}(t)=\PP(f(X)\leq t|G=k),\]
and the corresponding generalized inverse (or quantile function) $F_k^{[-1]}$ is defined by 
\[
F_k^{[-1]}(p):=F_{f(X)|G=k}^{[-1]}(p)=\inf_{x \in \RR }\big\{ p \leq F_k(x) \big\},\]
for each $k \in \cG$. Finally, given a threshold $t \in \RR$, the corresponding binary classifier is defined as $f_t(x;f)=\1_{\{f(x)>t\}}$.


For simplicity, we focus on the case where $G\in \{0, 1\}$ with $G=0$ corresponding to the non-protected class and $G=1$ corresponding to the protected. Extension to cases when the protected attribute is multi-labeled may be achieved using approaches in \cite{Jiang2020Jul,Miroshnikov2020}; see Appendix \ref{sec::multattr} for such extensions.

\subsection{Classifier fairness definitions and biases}

A common business use-case for models is in making binary classification decisions. For example, a credit card company may classify a prospective applicant as accepted or rejected. Because these decisions may have social consequences, it is important that they are fair with respect to sensitive demographic attributes. In this work, we focus on controlling deviations from parity-based (global) fairness metrics for ML models as described in  \cite{Jiang2020Jul,Vogel2021, Miroshnikov2020, Kwegyir-Aggrey2023, Becker2024}.  These global metrics are motivated by measures of fairness for classifiers \citep{Hardt2015,Feldman2015,Miroshnikov2020}, some of which we are given as follows.

\begin{definition}\label{def::genparity} Let $(X,G,Y)$ be a joint distribution as in Section \ref{sub:notation_and_hypotheses}. Suppose that $Y$ and $G$ are binary with values in $\{0,1\}$. Let $\hat{y}=\hat{y}(x)$ be a classifier associated with the response variable $Y$, and let $\Yhat=\hat{y}(X)$. Let $y^*\in\{0,1\}$ be the favorable outcome of $\Yhat$.
\begin{itemize}[label=$\bullet$]

\item  $\Yhat$ satisfies statistical parity if $\PP(\Yhat=y^*|G=0) = \PP(\Yhat=y^*|G=1).$

\item $\Yhat$ satisfies equalized odds if $\PP(\Yhat=y^*|Y=y,G=0) = \PP(\Yhat=y^*|Y=y,G=1)$, $ y\in\{0,1\}$

\item $\Yhat$ satisfies equal opportunity if $\PP(\Yhat=y^*|Y=y^*,G=0) = \PP(\Yhat=y^*|Y=y^*,G=1)$
\item Let $\mathcal{A}=\{A_j\}_{j=1}^M$ be a collection of disjoint subsets of $\Omega$. $\Yhat$ satisfies $\mathcal{A}$-based parity if
\begin{equation*}\label{genparity}
\PP(\Yhat=y^*|A_m, G=0)=\PP(\Yhat=y^*| A_{m}, G=1 ), \quad m \in \{1,\dots,M\}.
\end{equation*}
\end{itemize}
\end{definition}

Numerous works have investigated statistical parity \citep{Kamiran2009, Feldman2015, delBarrio, Jiang2020Jul} and equal opportunity \citep{Kamiran2010, Vogel2021} fairness criteria. Meanwhile, the $\mathcal{A}$-based parity may be viewed as a generalization of statistical parity, equalized odds, and equal opportunity biases. For example, letting $\mathcal{A}=\{\Omega\}$ produces the statistical parity criterion, letting $\mathcal{A}=\{\{Y=0\}, \{Y=1\}\}$ produces the equalized odds criterion, and letting $\mathcal{A}=\{\{Y=1\}\}$ produces the equal opportunity criterion. It may also be viewed as an extension of conditional statistical parity in \cite{verma2018} where the true response variable $Y$ is treated as a factor in determining fairness. 

The methods introduced in this work may be adapted to any $\mathcal{A}$-based parity criterion, but we focus on statistical parity (i.e., where $\mathcal{A}=\{\Omega\}$) for simplicity. We now present the definition of the classifier bias for statistical parity.

\begin{definition}\label{def::statbias} Let $\Yhat$, $y^*$, and $G$ be defined as in Definition \ref{def::genparity}. The classifier $\Yhat$ bias is defined as
\[\bias^C (\Yhat, G):=|\PP(\Yhat=y^*| G=0) - \PP(\Yhat=y^*| G=1 )|.
\]
\end{definition}

We may view the classifier bias as the difference in acceptance (or rejection) rates between demographics. Note that in some applications, we may prefer $\bias^{C}$ to be some other function of the rates $\PP(\Yhat=y^*| G=0)$ and $\PP(\Yhat=y^*| G=1 )$. For example, the ratio between these quantities is known as the adverse impact ratio (AIR) and may be written as
\begin{equation*}\label{eq:label}
{\rm AIR}(\Yhat | G) = \frac{\PP(\Yhat=y^*| G=1 )}{\PP(\Yhat=y^*| G=0)}\,.
\end{equation*}
In this case, fairness is achieved when ${\rm AIR}(\Yhat | G)=1$ so some natural classifier bias metrics based on AIR may be $1-{\rm AIR}$ (the negated AIR) or $-\log({\rm AIR})$ (the negated log AIR). Considering these alternatives may naturally lead one to consider a much broader family of bias metrics at both the classifier and model levels. 

To this end, we provide a generalization for the statistical parity bias using a cost function:
\begin{definition}\label{def::costclassifbias} Let $c(\cdot,\cdot) \geq 0$ be a cost function defined on $[0,1]^2$. Let $\Yhat$, $y^*$, and $G$ be defined as in Definition \ref{def::genparity}. The classifier $\Yhat$ bias associated with the cost function $c$ is defined by
\[\bias^C_c (\Yhat, G):=c(\PP(\Yhat=y^*| G=0),\PP(\Yhat=y^*| G=1)).\]
\end{definition}

\begin{remark} \rm
One can use $c(x,y)=d(x,y)^p$, where $d(\cdot,\cdot)$ is a metric on $\RR$, with $p \geq 1$.
\end{remark}

\subsection{Distribution-based fairness metrics}\label{sec::distfairness}
Businesses may seek to address model bias during the model development stage, even before detailed plans for the model's use have been made. To be specific, a single model $f=f(x)$ can be used to produce a range of classifiers $\{f_t\}_{t \in \RR}$ with different properties, and we may be unsure which classifiers will be selected for use in business decisions. To mitigate bias before this information is known, we require an appropriate definition of model bias. The work \cite{Miroshnikov2020} introduces model biases based on the Wasserstein metric as well as other integral probability metrics for fairness assessment of the model at the distributional level. Similar (transport-based) approaches for bias measurement have been discussed in \cite{Dwork2012,Jiang2020Jul, Kwegyir-Aggrey2023, Becker2024}.

\begin{definition}[\bf Wasserstein-1 model bias \cite{Miroshnikov2020}]\label{def::modbias} Let $(X,G)$ be as in Definition \ref{def::statbias}, and $f \in \mathcal{C}_{\mathcal{B}(\RR^n)}$ be a model with $\E[|f(X)|]<\infty$. The Wasserstein-1 model bias is given by 
\begin{equation}\label{modbias}
\Bias_{W_1}(f|X,G) = W_1(P_{f(X)|G=0},P_{f(X)|G=1}),
\end{equation}
where $P_{f(X)|G=k}$ is the pushforward probability measure of $f(X)|G=k$, $k \in \{0,1\}$, and $W_1(\cdot,\cdot)$ is the Wasserstein-1 metric on the space of probability measures $\mathscr{P}_1(\RR)$.
\end{definition}

It is worth noting that $\Bias_{W_1}(f|X,G)$ is the cost of optimally transporting the distribution of $f(X)|G=0$ into that of $f(X)|G=1$. This property leads to the bias explainability framework developed in \cite{Miroshnikov2020}.

In general, one can utilize $W_p$ metric, $p \geq 1$, for the bias measurement. However, the case $p=1$ is special, due to its relationship with statistical parity. It can be shown that the $W_1$-model bias is consistent with the statistical parity criterion as discussed in the lemma below, and which can be found in \cite{Jiang2020Jul,Miroshnikov2020}.

\begin{lemma}\label{lmm::ave_parity_transport}
Let a model $f$ and the random variables $(X,G)$ be as in Definition \ref{def::modbias}. Let $f_t(x) = \1_{\{f(x) > t\}}$ denote a derived classifier. The $W_1$-model bias can be expressed as follows:

\begin{equation}\label{modbiasgenparcons}
\begin{aligned}
\Bias_{W_1}(f|X,G) &= \int_0^1 |F^{[-1]}_{f(X)|G=0}(t)-F^{[-1]}_{f(X)|G=1}(t)| \, dt
= \int_{\RR} \bias^C (f_t | X, G)\, dt.
\end{aligned}
\end{equation}
\end{lemma}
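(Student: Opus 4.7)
The plan is to chain together three standard identities: the quantile representation of $W_1$, a Fubini-based conversion to a CDF representation, and finally a rewriting of the integrand as the classifier bias. Throughout, I will use the shorthand $F_k(t) = F_{f(X)|G=k}(t)$ and note that since $f \in \cC_{\cB(\RR^n)}$ with $\E[|f(X)|]<\infty$, the pushforward measures $P_{f(X)|G=k}$ lie in $\mathscr{P}_1(\RR)$ so that all quantities below are finite.

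First, I will invoke the classical one-dimensional formula for the Wasserstein-1 distance in terms of quantile functions,
\[
W_1(P_{f(X)|G=0}, P_{f(X)|G=1}) \;=\; \int_0^1 \bigl|F_0^{[-1]}(t) - F_1^{[-1]}(t)\bigr|\, dt,
\]
which follows from the fact that the monotone rearrangement $(F_1^{[-1]}\circ F_0)$ is an optimal transport map on $\RR$ for any convex cost (see e.g. standard references on optimal transport). This gives the first equality in \eqref{modbiasgenparcons} directly.

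Next, to obtain the CDF-based representation I will use Fubini's theorem on the set $\{(t,s) : F_0(s)\wedge F_1(s) < t \le F_0(s)\vee F_1(s)\}$, writing
\[
\int_0^1 \bigl|F_0^{[-1]}(t) - F_1^{[-1]}(t)\bigr|\, dt \;=\; \iint \1_{\{F_0^{[-1]}(t) \wedge F_1^{[-1]}(t) < s \le F_0^{[-1]}(t) \vee F_1^{[-1]}(t)\}}\, ds\, dt,
\]
and then swapping the order of integration using the duality $F_k^{[-1]}(t) \le s \iff t \le F_k(s)$. This produces $\int_\RR |F_0(s) - F_1(s)|\, ds$.

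Finally, I will identify the integrand with the classifier bias. Taking the favorable outcome $y^* = 1$ for the derived classifier $f_t(x) = \1_{\{f(x)>t\}}$, one has $\PP(f_t(X)=1 \mid G=k) = 1 - F_k(t)$, and hence
\[
\bias^C(f_t \mid X,G) \;=\; \bigl|(1-F_0(t)) - (1-F_1(t))\bigr| \;=\; |F_0(t) - F_1(t)|.
\]
Substituting this into the CDF representation yields the second equality in \eqref{modbiasgenparcons}. The main (minor) obstacle is the Fubini step, which requires care near the atoms of $F_0$ and $F_1$; this is handled by observing that the set of $t$ at which $F_k^{[-1]}$ is discontinuous has Lebesgue measure zero, so the indicator identity above holds almost everywhere. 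No other technicalities arise, and the result follows.
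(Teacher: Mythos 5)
Your proof is correct, and it is essentially the standard derivation of an identity the paper simply attributes to Shorack and Wellner (1986) without reproducing. The three steps you chain together --- the quantile formula for $W_1$ in one dimension, the Fubini/Galois-duality passage from $\int_0^1|F_0^{[-1]}-F_1^{[-1]}|\,dt$ to $\int_\RR|F_0-F_1|\,ds$, and the observation that $\PP(f_t(X)=1\mid G=k)=1-F_k(t)$ so that $\bias^C(f_t\mid X,G)=|F_0(t)-F_1(t)|$ regardless of which label is favorable --- are precisely the content of the cited reference plus a definitional unwinding, and they are all handled correctly. Your remark about atoms is well placed: the equality $\1_{\{F_k^{[-1]}(t)<s\}}=\1_{\{t\le F_k(s)\}}$ can fail on the graph $\{s=F_k^{[-1]}(t)\}$, which has Lebesgue measure zero in the $(t,s)$-plane, so the Fubini--Tonelli step (everything nonnegative) goes through. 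One small imprecision worth noting: you justify the first equality by saying the monotone rearrangement $F_1^{[-1]}\circ F_0$ is an optimal transport \emph{map}; when $P_{f(X)|G=0}$ has atoms such a map may not exist, and the correct statement (see the paper's Theorem~\ref{thm::transportprop}) is that the monotone \emph{plan} $(F_0^{[-1]},F_1^{[-1]})_{\#}\lambda|_{[0,1]}$ is optimal, which already yields $W_1=\int_0^1|F_0^{[-1]}-F_1^{[-1]}|\,dt$ without any atomlessness hypothesis. With that caveat, the argument is complete and matches what the paper delegates to the reference.
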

\begin{proof}
The result follows from \citet{Shorack1986}.
\end{proof}

Thus, when $\Bias_{W_1}(f|X,G)$ is zero, there is no difference in acceptance rates between demographics for any classifier $\1_{\{f(x)>t\}}$ and (equivalently) no difference between the distributions $P_{f(X)|G=0}$ and $P_{f(X)|G=1}$. 

If $f$ is a classification score with values $f(x) \in [0,1]$, the relation \eqref{modbiasgenparcons} can be written as
\begin{equation}\label{scorebias}
\Bias_{W_1}(f|X,G) = \int_0^1 | F_{f(X)|G=0}(t)-F_{f(X)|G=1}(t)| dt = \E_{t\sim \mathcal{U}_{[0,1]}}[\bias^C (f_t | X, G)],
\end{equation}
where $\mathcal{U}_{[0,1]}$ is the uniform distribution on $[0,1]$ (e.g. see \cite{Jiang2020Jul}). This formulation lends the model bias a useful practical interpretation as the  average classifier bias across business decision policies $f_t$ where $t$ is sampled uniformly across the range $[0,1]$ of thresholds. When $f$ is a regressor with a finite  support, \eqref{scorebias} trivially generalizes to the integral  normalized by the size of the support \cite{Becker2024}.

A key geometric property of \eqref{scorebias} is that it changes in response to monotonic transformations of the model scores (in fact, it is positively homogeneous). In some cases, a distribution-invariant approach may be desired. To address this, \cite{Becker2024} has proposed a modification to \eqref{scorebias}  which removes its dependence on the model score distribution. Specifically, if the distribution of model scores $P_{Z}=P_{f(X)}$ is absolutely continuous with respect to the Lebesgue measure with the density $p_{Z}$, the distribution-invariant model bias for statistical parity is defined by
\begin{equation}\label{inv_bias}
{\rm bias}_{{\rm IND}}^{f}(f|X, G) := \int \bias^C (f_t | X, G)\cdot p_{Z}(t) \, dt =
\E_{t\sim P_{Z}}[\bias^C (f_t | X, G)].
\end{equation}

The distribution invariant model bias may be preferred over the Wasserstein model bias when one wants to measure bias in the rank-order induced by $Z$'s scores. Another method for measuring biases in a distribution invariant manner is to employ the ROC-based metrics of \cite{Vogel2021} which only depend on $Z$'s rank-order.

According to \cite{Becker2024}, when the score distributions are continuous, \eqref{inv_bias} equals $W_1(P_{F_{Z}(Z_0)},P_{F_{Z}(Z_1)})$, where $Z_k$, $k \in \{0,1\}$, are subpopulation scores. When $P_Z$ has atoms, the above relationship generally does not hold (see Example \ref{ex::disc_score_inv}). Nevertheless, it can be generalized. Specifically, we have the following result.

\begin{proposition}\label{prop:gen_invariant_bias}
Let a model $f$ and the distribution $(X,G)$ be as in Definition \ref{def::modbias}. Let $f_t(x) = \1_{\{f(x) > t\}}$, $Z=f(X)$, and $Z_k=f(X)|G=k$, $k \in \{0,1\}$, and let $\tF_Z$ be the left-continuous realization of $F_Z$. Then
\begin{equation}\label{inv_bias_gen}
{\rm bias}_{{\rm IND}}^{f}(f|X, G) := \int \bias^C (f_t | X, G) \, P_Z(dt)    = W_1(P_{\tF_{Z}(Z_0)},P_{\tF_{Z}(Z_1)}).
\end{equation}
\end{proposition}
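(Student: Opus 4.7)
The plan is to reduce both sides of~\eqref{inv_bias_gen} to the common integral $\int_0^1 |F_0(F_Z^{[-1]}(u)) - F_1(F_Z^{[-1]}(u))|\,du$ (writing $F_k := F_{Z_k}$), using a probability integral transform on the left and the one-dimensional representation $W_1(\mu,\nu) = \int_{\RR}|F_\mu - F_\nu|\,dt$ on the right.

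First, I would verify that $\bias^C(f_t \,|\, X, G) = |F_0(t) - F_1(t)|$ by expanding $\PP(f_t(X) = y^* \,|\, G = k)$; the identity holds regardless of whether $y^* = 0$ or $y^* = 1$. The probability integral transform---if $U \sim \mathcal{U}_{[0,1]}$, then $F_Z^{[-1]}(U)$ has law $P_Z$---then converts the left-hand side of~\eqref{inv_bias_gen} into the common integral above. For the right-hand side, set $U_k := \tF_Z(Z_k)$ and invoke the one-dimensional formula $W_1(P_{U_0}, P_{U_1}) = \int_\RR |F_{U_0}(u) - F_{U_1}(u)|\,du$. Since $\tF_Z$ is non-decreasing and left-continuous, the sublevel set $\{z : \tF_Z(z) \le u\}$ equals $(-\infty, b_u]$ with $b_u := \sup\{z : \tF_Z(z) \le u\}$, so $F_{U_k}(u) = F_k(b_u)$, and the right-hand side reduces to $\int_0^1 |F_0(b_u) - F_1(b_u)|\,du$ (the integrand vanishes outside $[0,1]$ since $U_k$ is supported there).

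The hard part is reconciling $b_u$ with $F_Z^{[-1]}(u)$, which can differ precisely where $F_Z$ has a flat piece. My plan is to check $F_Z^{[-1]}(u) \le b_u$ always, and to show that strict inequality forces $F_Z \equiv u$ on the open interval $(F_Z^{[-1]}(u), b_u)$: any $z$ there satisfies $F_Z(z) \ge u$ (by monotonicity together with the standard fact $F_Z(F_Z^{[-1]}(u)) \ge u$) and $F_Z(z-) = \tF_Z(z) \le u$, and pinching for $z_1 < z_2$ in the interval gives $u \le F_Z(z_1) \le F_Z(z_2-) \le u$. Hence the exceptional set $\{u \in (0,1) : b_u > F_Z^{[-1]}(u)\}$ consists exactly of the flat levels of $F_Z$, which correspond to the at-most-countable jumps of the monotone quantile $F_Z^{[-1]}$ and therefore form a Lebesgue null set. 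Consequently $F_k(b_u) = F_k(F_Z^{[-1]}(u))$ for a.e. $u \in (0,1)$, matching the two reductions and completing the proof.
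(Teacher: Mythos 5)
Your proof is correct and follows essentially the same route as the paper: both sides are reduced to a common integral over $[0,1]$ via the probability integral transform and the one-dimensional $W_1$--CDF representation, and the two realizations of the quantile function are reconciled Lebesgue-a.e.\ using the countability of $F_Z$'s flat levels. The only cosmetic difference is that the paper packages the identity $F_{\tF_Z(Z_k)}(u)=F_{Z_k}\bigl(\tF_Z^{[-1]}(u)\bigr)$ into separate lemmas on adjusted Galois inequalities (Lemmas \ref{lmm::Galois_ineq_adj}--\ref{lmm::quant_transf} and Corollary \ref{corr::abs-dist-cost}), whereas you rederive the same fact inline from the sup characterization $b_u=\sup\{z:\tF_Z(z)\le u\}$.
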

\begin{proof}
The result follows from Lemma \ref{lmm::ave_parity_transport} and Corollary \ref{corr::abs-dist-cost}.
\end{proof}

In practice, even when the specific classifiers used in business decisions are unknown, knowledge about which thresholds (or quantiles) are more likely to be used typically exists. This is discussed in \cite{Vogel2021} where distribution invariant AUC-based metrics (used in bias mitigation) are restricted to an interval of interest  (typically determined by the business application) with the objective of improving the bias-fairness trade-off. See also Remark \ref{rem::nonunif-quantiles-bias}, which discusses a variation of \eqref{inv_bias} involving non-uniformly weighted quantiles.

For instance, there are applications where a threshold is chosen for business use according to some model $t=\tau(a)$, where $a\in \RR^m$  is a vector of observed external factors. In that case, given $a$, the statistical parity bias for the classifier $\hat{Y}_a(x):=f_{\tau(a)}(x)$ is given by
\[
\bias^C(\hat{Y}_a|X,G)=|F_{f(X)|G=0} (\tau(a)) - F_{f(X)|G=1}(\tau(a))|.
\]

When the value of $a$ is not known at the time of measurement, but is modeled as a realization of a random vector $A$, one can instead estimate the expected bias across all values $a \sim P_A$, which gives
\[
\E_{a \sim P_A}[ bias(\hat{Y}_a|X,G)] = \E_{t \sim \mu}[ bias(f_t|X,G)] = \int_0^1 | F_{f(X)|G=0}(t)-F_{f(X)|G=1}(t)| \, \mu(dt) 
\]
where $\mu=P_{\tau(A)}$ is the probability distribution of thresholds induced by $\tau(A)$.

This together with the above definitions of the bias motivates the following generalization.
\begin{definition}\label{def::cost_bias}
Let $c(\cdot,\cdot) \geq 0$ be a cost function on $\RR^2$, $f$ a model and $X,G,F_0,F_1$ as in Section \ref{sub:notation_and_hypotheses}. Let $\mu \in \mathscr{P}(\RR)$ be a Borel probability measure which encapsulates the importance of each threshold. Define
\begin{equation}\label{gen-cost-bias}
Bias^{(c)}_{\mu}(f|X,G):=\int c(F_0(t),F_1(t)) \, \mu(dt) = \E_{t \sim \mu} \big[c(F_0(t),F_1(t))\big].
\end{equation}
\end{definition}

The above formulation covers a large family of metrics that generalizes average statistical parity. Suppose $f$ is a classification score with values in $[0,1]$ and $\mu(dt) = \1_{[0,1]}dt$. Consider $c(x,y)=|x-y|^p$. When $p=1$, we obtain the average statistical parity which (in light of Lemma \ref{lmm::ave_parity_transport}) equals $W_1(Z_0,Z_1)$. For $p=2$, the metric  equals Cram\'{e}r's  distance \cite{cramer1928}, which coincides (in the univariate case) with the scaled energy distance \cite{Szekely1989}.  Finally, when $c(x,y)=|\log(x)-\log(y)|$, we get the absolute log-AIR.

In the spirit of \cite{Becker2024}, under certain conditions on {$\mu$}, one can express \eqref{def::cost_bias} as the minimal transportation cost with the cost function $c$ (see Definition \ref{def::transport-cost}). Specifically, we have the following result.

\begin{proposition}\label{prop::transp-cost-form}
Let $c(x,y)=h(x-y) \geq 0$, with $h$ convex. Let $X,G,Z_k,F_k$ and $\mu$ be as in Definition \ref{def::cost_bias}. Suppose the supports of $P_{Z_0}$, $P_{Z_1}$ and $\mu$ are identical and connected. Finally, suppose  the CDFs $F_{0},F_{1}, F_{\mu}$ are continuous and strictly increasing on their supports. Then
\[
Bias^{(c)}_{\mu}(f|X,G) = \int c(F_0(t),F_1(t)) \, \mu(dt) = \mathscr{T}_c({F_0}_{\#}\mu,{F_1}_{\#}\mu).
\]

where $\mathscr{T}_c$ is the minimal transport cost from ${F_0}_{\#}\mu$ to ${F_1}_{\#}\mu$ for the cost $c$.
\end{proposition}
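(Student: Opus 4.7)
The plan is to reduce the claim to the classical one-dimensional optimal transport formula for convex costs of the difference, via a change of variables that identifies the integrand with the comonotone coupling of the pushforward measures $\nu_k := {F_k}_{\#}\mu$. Since $F_\mu$ is continuous and strictly increasing on its support, setting $U = F_\mu(T)$ for $T\sim\mu$ yields $U\sim \mathcal{U}_{[0,1]}$, and $t = F_\mu^{[-1]}(u)$ provides a valid change of variables sending $\mu(dt)$ to Lebesgue measure on $[0,1]$. Thus
\begin{equation*}
\int c(F_0(t),F_1(t))\,\mu(dt) = \int_0^1 c\bigl(F_0(F_\mu^{[-1]}(u)),\, F_1(F_\mu^{[-1]}(u))\bigr)\, du.
\end{equation*}

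The second step is to recognize $u\mapsto F_k(F_\mu^{[-1]}(u))$ as the quantile function of $\nu_k = {F_k}_{\#}\mu$. Since $F_0,F_1,F_\mu$ are continuous and strictly increasing on the common support, a short computation gives $F_{\nu_k}(s) = F_\mu(F_k^{-1}(s))$, whence $F_{\nu_k}^{[-1]}(u) = F_k(F_\mu^{[-1]}(u))$. Substituting, the integral becomes
\begin{equation*}
\int_0^1 c\bigl(F_{\nu_0}^{[-1]}(u),\, F_{\nu_1}^{[-1]}(u)\bigr)\, du = \int_0^1 h\bigl(F_{\nu_0}^{[-1]}(u)-F_{\nu_1}^{[-1]}(u)\bigr)\, du.
\end{equation*}

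The final step is to invoke the classical result that, for a cost $c(x,y)=h(x-y)$ with $h$ convex, the minimal Kantorovich transport cost between two Borel probability measures on $\RR$ is attained by the comonotone (quantile) coupling, so
\begin{equation*}
\mathscr{T}_c(\nu_0,\nu_1) = \int_0^1 h\bigl(F_{\nu_0}^{[-1]}(u)-F_{\nu_1}^{[-1]}(u)\bigr)\, du,
\end{equation*}
see e.g.\ Santambrogio or Villani. Chaining the three equalities yields the proposition.

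The main obstacle is essentially bookkeeping: verifying the identification $F_{\nu_k}^{[-1]} = F_k \circ F_\mu^{[-1]}$ requires the continuity and strict monotonicity hypotheses on $F_0,F_1,F_\mu$ over a common connected support in order to treat the generalized inverses as ordinary inverses and to ensure the change of variables is bijective; without these assumptions, atoms or flat pieces in any of the CDFs would force one to work with left-continuous realizations (as in Proposition \ref{prop:gen_invariant_bias}) and an extra argument would be needed to match the integrals. Beyond that, the proof is a direct citation of a standard 1D optimal transport fact.
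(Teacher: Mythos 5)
Your proof is correct and follows essentially the same route as the paper's: a change of variables via $F_\mu^{[-1]}$ to push $\mu$ onto Lebesgue measure on $[0,1]$, identification of $F_k\circ F_\mu^{-1}$ as the quantile function of ${F_k}_{\#}\mu$ (using the continuity/strict-monotonicity hypotheses to treat generalized inverses as genuine inverses), and the classical 1D optimal transport formula for convex costs $c(x,y)=h(x-y)$ realized by the comonotone coupling. The paper packages these steps through its Proposition~\ref{prop::push_mu}, Theorem~\ref{prop::changeofvar}, and Theorem~\ref{thm::transportprop}, but the content is the same.
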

\begin{proof}
See Appendix \ref{app::prop::transp-cost-form}.
\end{proof}

\begin{remark} \rm
Proposition \ref{prop::transp-cost-form} remains valid if $c(x,y)=d(x,y)^p$, where $d(\cdot,\cdot)$ is a metric, with $p \geq 1$. In that case, we have 
\[\mathscr{T}_c({F_0}_{\#}\mu,{F_1}_{\#}\mu)^{1/p}=W_p({F_0}_{\#}\mu,{F_1}_{\#}\mu;d).\]
\end{remark}

\subsection{Model explainability}\label{sec::modelexplainability}

Due to regulations, model explainability is often a crucial aspect of using models to make consequential decisions. Therefore, this work seeks to mitigate bias in models while preserving explainability. Following \cite{miroshnikov2024}, we define a generic model explanation method.

\begin{definition}\label{def::genexplainer}
Let $X=(X_1, ..., X_n)$ be predictors. A local model explainer is the map $x \to E(x;f,X)\allowbreak=(E_1,\dots,E_n)$ that quantifies the contribution of each predictor $X_i$, $i\in N:=\{1,\dots,n\}$, to the value of a model $f \in \mathcal{C}_{\mathcal{B}(\RR^n)}$ at a data instance $x \sim P_X$.  The explainer is called additive if $f(x)=\sum_{i=1}^n E_i(x;f,X)$. 
\end{definition}

The additivity notion can be slightly adjusted to take into account the model's expectation.

\begin{definition}\label{def::genexplainer-centered}
The explainer $E(\cdot;f,X)$ is called $P_X$-centered if $\E_{x \sim P_X}[E_i(x;f,X)]=0$, $i \in N$. We say that $E$ satisfies $P_X$-centered additivity if $f(x)-\E[f(X)]=\sum_{i=1}^n E_i(x;f,X)$.
\end{definition}

In practice, model explanations are meant to distill the primary drivers of how a model arrives at a particular decision, and the meaningfulness of the model explanation depends on the particular methodology.

Some explanation methodologies of note include global methods such as \cite{Friedman,Lakkaraju2017} which quantify the overall effect of features, local methods such as locally-interpretable methods \cite{Ribeiro2016, hu2018}, and methods such as \cite{Strumbelj2014,LundbergLee,ChenLShapley2019} which provide individualized feature attributions based on the  Shapley value \cite{Shapley}.

The Shapley value, defined by 
\begin{equation*}\label{shap-form}
\varphi_i[N, v] := \sum_{S\subseteq N\backslash \{i\}} \frac{|S|!(|N|-|S|-1)!}{|N|!}(v(S\cup \{i\})-v(S)) \quad (i\in N=\{1,2,\dots,n\}),
\end{equation*}
where $v$ is a cooperative game (set function on $N$) with $n$ players,  is often a popular choice for the game value (in light of its properties such as symmetry, efficiency, and linearity), but other game values and coalitional values (such as the Owen value \cite{Owen}) have also been investigated in the ML setting \cite{Wang-Lundberg,filom2024,Kotsiopoulos2023,miroshnikov2024}.

In the ML setting, the features $X=(X_1,X_2,\dots,X_n)$  are viewed as $n$ players in a game $v(S;x,X,f)$, $S \subseteq N$,  associated with the observation $x\sim P_X$, random features $X$, and model $f$. The game value $\varphi_i[N,v]$ then assigns the contributions of each respective feature to the total payoff $v(N;x,X,f)$ of the game. Two of the most notable games in the ML literature \cite{Strumbelj2014,LundbergLee} are given by
\begin{equation}\label{margcondgamedet}
\vce(S; x,X,f)=\E[f(X)|X_S=x_S], \quad \vme(S;x,X,f)=\E[f(x_S,X_{-S})],
\end{equation}
 where $\vce(\varnothing; x, X, f)=\vpdp(\varnothing; x, X, f):=\E[f(X)]$.

The efficiency property of $\varphi$ allows the total payoff $v(N)$ to be disaggregated into $n$ parts that represent each player's contribution to the game: $\sum_{i=1}^n \varphi_{i}[N,v] = v(N)$. The games defined in \eqref{margcondgamedet} are not cooperative, as they do not satisfy $v(\varnothing)=0$. In this case, the efficiency property takes the form:
\[
\sum_{i=1}^n \varphi_{i}[N,v] = v(N)-v(\varnothing) = f(x)-\E[f(X)], \quad v \in \{\vce(\cdot;x,X,f),\vme(\cdot;x,X,f)\}.
\]

An important property of the games in \eqref{margcondgamedet} is that of linearity with respect to models. Since $\varphi[N,v]$ is linear in $v$, the linearity (with respect to models) also extends to the marginal and conditional Shapley values. That is, given two continuous bounded models $f,g$ we have
\begin{equation*}\label{shap-linearity}
\varphi[N,v(\cdot;X,\alpha \cdot f+g)]=\alpha \cdot \varphi[N,v(\cdot;X,f)]+\varphi[N,v(\cdot;X,g)], \quad  v\in\{\vce,\vme\}.
\end{equation*}

For simplicity, this work explores explainability through the feasibility of computing marginal Shapley values, with $\varphi_i^{\ME}(x,f):=\varphi_i[N, v^{\ME}(\cdot; x, X, f)]$, $i \in N$, denoting the marginal Shapley value of the $i$-th predictor. Computing such explanations is an interesting challenge owing to their prominence in the ML literature and exponential $O(2^n)$ computational complexity. However, our proposed bias mitigation approaches are compatible with any explainability method satisfying linearity.

\section{Bias metrics approximations for stochastic gradient descent}\label{sec::biasapprox}

In this subsection, we consider approximations of the bias metrics discussed in Section \ref{sec::distfairness} that allow us to employ gradient-based optimization methods. Here, for simplicity, we focus on classification score models, whose subpopulation distributions are not necessarily continuous. 

In what follows, we let $f=f(\cdot;\theta)$ denote a predictive model, parameterized by $\theta$, with values in $\RR$, and $F_k(\cdot;\theta)$ be the CDF of $Z_k ^{\theta}\sim P_{f(X;\theta)|G=k}$, $k\in\{0,1\}$.  To simplify the exposition, we assume that the cost function $c(a,b)=h(a-b)$, where $h \geq 0$ is Lipschitz continuous on $[-1,1]$, and let $\mu \in \mathscr{P}(\RR)$ denote a Borel probability measure, describing the distribution of thresholds.  Consider the bias metric
\begin{equation}\label{gen_bias}
Bias^{(h)}_{\mu}(f(\cdot;\theta)|X,G):= \int h(F_0(t;\theta)-F_1(t;\theta)) \, \mu(dt) \, < \, \infty.
\end{equation}

Clearly, \eqref{gen_bias} depends on the parameter $\theta$ in an intricate way and care must be taken to differentiate this quantity or its approximation with respect to $\theta$. For motivation, note that when one only has access to a finite number of samples $x_k^{(1)}, \dots, x_k^{(m)} \sim P_{X | G=k}$, we may seek to substitute the CDFs $F_k$ with their empirical CDF analogs when computing metrics. In this case, we have
\[
\hat{F}_k(t;\theta) = \sum_{i=1}^m \1_{\{f(x_k^{(i)};\theta) \leq t \}}\,.
\]

However, due to the presence of indicator functions, $\hat F_k(t{\color{brown};}\theta)$ is in general not differentiable in $\theta$. Thus, substituting $F_k$ for $\hat F_k$ in \eqref{gen_bias} may result in bias metrics that are not differentiable. To address this issue, we adopt a relaxation of the formulation \eqref{gen_bias}, inspired by \cite{Vogel2021}, which enables the construction of various differentiable approximations suited to stochastic gradient descent.

\subsection{Relaxation approximation}\label{sec::relax_approx}

Let $H(z)=\1_{\{z>0\}}$ be the left-continuous version of the Heaviside function. Let $\{r_s(t)\}_{s \in \RR_+}$ be a family of continuous functions such that $r_s(z)$ is non-decreasing and Lipschitz continuous on $\RR$, and satisfies $r_s(z) \to 0$ as $z \to -\infty$,  $r_s(z)\to 1$ as $z \to \infty$, and $\lim_{s \to \infty} r_s(z)=H(z)$ for all $z \in \RR$. Now, define functions

\begin{equation}\label{relax_cdf}
F^{(s)}_k(t;\theta):=1-\E[r_s(Z_k^{\theta}-t)], \quad k \in \{0,1\}.
\end{equation}
By Lemma \ref{lmm::relax_stat}, $F^{(s)}_k$ is a globally Lipschitz CDF approximating $F_k$, where $\lim_{s \to \infty} F_k^{(s)}(t;\theta)=F_k(t;\theta)$, $\forall t \in \RR$, and 
\begin{equation}\label{relax_limit}\tag{RL}
Bias^{(h)}_{\mu}(f|X,G):= \int h(F_0(t;\theta)-F_1(t;\theta)) \mu(dt) = \lim_{s \to \infty} \int  h \big( F_0^{(s)}(t;\theta) - F_1^{(s)}(t;\theta) \big) \mu(dt).
\end{equation}

Clearly, \eqref{relax_limit} suggests that one can approximate \eqref{gen_bias} by computing the bias between smoother CDFs $F_{k}^{(s)}$. Furthermore, it can be shown that their estimators are also differentiable w.r.t. $\theta$. To this end, define
\begin{equation*}\label{loc_bias_relax}
B(t;\theta) := F_0(t;\theta) - F_1(t;\theta), \quad B_s(t;\theta) := F^{(s)}_0(t;\theta) - F^{(s)}_1(t;\theta).
\end{equation*}
Let $D_k=\{x^{(1)}_k,\dots,x^{(m_k)}_k\}$ be a dataset of samples from the distribution $P_{X|G=k}$, $k \in \{0,1\}$. Then the estimator of $B_s(t;\theta)$ can be defined in terms of subpopulation model scores:
\begin{equation}\label{Bs_estimator}
\hat{B}_{s,m_0,m_1}(t; \theta) := \frac{1}{m_1} \sum_{i=1}^{m_1} r_s(f(x^{(i)}_1;\theta)-t) - \frac{1}{m_0} \sum_{i=1}^{m_0} r_s(f(x^{(i)}_0;\theta)-t).
\end{equation}

Note that if $r_s$ is differentiable, then the map $(t,\theta) \mapsto \hat{B}_{s,m_0,m_1}(t; \theta)$ is differentiable -- assuming, of course, that the map $\theta \mapsto f(\cdot; \theta)$ is differentiable. If $r_s$ is globally Lipschitz, the weak gradient $\nabla_{\theta}\hat{B}_{s,m_0,m_1}$ is well-defined.

\begin{remark} \rm
Here are two examples of the relaxation family $\{r_s\}_{s \in \RR_+}$. Define $r_s(z)=r(s z)$, where $r(z)=0$ for $z \leq 0$, $r(z)=z$, for $z \in (0,1)$ and $r(z)=1$, for $z \geq 1$. Alternatively, set $r_s(z)=\sigma(s  (z-\frac{1}{\sqrt{s}}))$ where $\sigma(z)$ is the logistic function. In this case $F_k^{(s)}$ are infinitely differentiable.
\end{remark}

\begin{remark} \rm
If $\mu$ is atomless, the requirement $\lim_{s \to \infty} r_s(0)=0=H(0)$ can be dropped, in which case  \eqref{relax_cdf} still holds, and $\lim_{s \to \infty} F^{(s)}_k(t) \to F_k(t)$ at any points of continuity of $F_k$; see Lemma \ref{lmm::relax_stat}. For instance, one can use  $r_s(z)=\sigma(sz)$ where $\lim_{s \to \infty} r_s(0)=\frac{1}{2}$. 
\end{remark}

\subsection{Bias estimators}

Following the discussion above, we propose several methods to estimate the relaxed bias metric
\begin{equation}\label{relax_bias}
Bias^{(h)}_{\mu,s}(f(\cdot;\theta)|X,G):= \int h(F^{(s)}_0(t;\theta)-F^{(s)}_1(t;\theta)) \mu(dt) = \int h(B_s(t;\theta)) \mu(dt)
\end{equation}
employing the estimator \eqref{Bs_estimator}. Throughout, we assume that $\Lip (r_s) \leq s$ for all $s \in \RR_+$, and that $\mu$ is fixed. We extend the estimators introduced below to cases where $\mu$ depends on $(f, P_G, P_{X|G})$, such as the metrics in \cite{Vogel2021, Becker2024}, in Appendix \ref{sec::invar_metrics}. 


\subsubsection{Threshold-MC estimator} 

Let $D_{\tau}=\{t^{(1)},\dots,t^{(T)}\}$ be samples from the distribution $\mu(dt)$.  Then
\begin{equation}\label{full-mc-est}
\begin{aligned}
\int h \big( B_s(t;\theta) \big) \mu(dt) = \E_{t \sim \mu} [ h(B_s(t))] \approx \frac{1}{T} \sum_{j=1}^T h(B_s(t^{(j)};\theta))  \approx \frac{1}{T} \sum_{j=1}^T h(\hat{B}_s(t^{(j)};\theta)).
\end{aligned}
\end{equation}

More formally, we have the following approximation result.
\begin{theorem}\label{thm::MCestimator}
Let $h \geq 0$ be Lipschitz on $[-1,1]$, and let $\mu \in \mathscr{P}(\RR)$. For $k \in \{0,1\}$, let  ${\bf D}_k = \{{\bf x}^{(1)}_k,\dots,{\bf x}^{(m_k)}_k\}$ be i.i.d.\ samples  from $P_{X | G=k}$, and let  ${\bf D}_{\tau} = \{{\bf t}^{(1)},\dots,{\bf t}^{(T)}\}$ be i.i.d. samples from $\mu$, independent of ${\bf D_0} \cup {\bf D}_1$. Let $B_s(t;\theta)$ be defined as in \eqref{loc_bias_relax}, and let $\hat{\bf B}_{s,m_0,m_1}(t;\theta)$ denote its estimator from \eqref{Bs_estimator}, evaluated using the samples ${\bf D}_k$. Then,
\begin{equation}\label{full-mc-est-exact}
\frac{1}{T} \sum_{j=1}^T h\big(\hat{\bf B}_{s,m_0,m_1}({\bf t}^{(j)};\theta)\big) = \int h\big(B_s(t;\theta)\big) \, \mu(dt) + \mathcal{E}^{(s)}_{m_0,m_1,T},
\end{equation}
where $\mathcal{E}^{(s)}_{m_0,m_1,T} \to 0$ in $L^2(\P)$ as $m_0, m_1, T \to \infty$, with an optimal rate of $O(T^{-1/2})$,  uniformly in $s \in \RR^+$,  when $T = c \cdot \min(m_0, m_1)$ for some fixed constant $c$.
\end{theorem}

\begin{proof}
See Appendix \ref{proof::MCestimator}.
\end{proof}

As a consequence of the above theorem, the left-hand side of \eqref{full-mc-est} is a (weakly) consistent estimator of the integral on the right. Also, note that $h$ is Lipschitz on $[-1,1]$ containing the image of both $B_s$ and $\hat{B}_s$.

\begin{remark}\rm
For the estimator \eqref{full-mc-est},  $\mu$ is not required to have a density (that is, it can contain atoms) and can be easily adapted to invariant metrics, such as $\mu = P_{f(X;\theta)}$ from \cite{Becker2024}; see Appendix \ref{sec::invar_metrics}.
\end{remark}

\subsubsection{Threshold-discrete estimator}

In this section, for simplicity of exposition, we assume that $f=f(\cdot;\theta)$ is a classification score function taking values in $[0,1]$, and that the measure $\mu(dt)=\rho(t)dt$, where $\rho$ is Lipschitz continuous function supported on $[0,1]$. Given a uniform partition $\mathcal{P}_T := \{t_0=0 < t_1 < \dots < t_T=1\}$  of $[0,1]$, we consider a threshold-discretized bias estimator:
\begin{equation}\label{int_approx_est}
\int_0^1 h \big( B_s(t;\theta) \big) \mu(dt) = \int_0^1 h \big( B_s(t;\theta) \big) \rho(t) dt \approx \Big( \sum_{j=1}^T h(\hat{B}_s(t_j;\theta)) \rho(t_j) \Delta t \Big)
\end{equation}
where the approximation error scales as $O((1+s)/T)$ in $L^2(\P)$ as established by the theorem below.
 
\begin{theorem}\label{thm::DiscrEstimator}
Let $h$, $\mu$, ${\bf D}_0$, ${\bf D}_1$, $B_s$, and $\hat{\bf B}_{s,m_0,m_1}$ be as in Theorem \ref{thm::MCestimator}. 
 Assume $\mu(dt)=\rho(t)dt$ on $[0,1]$, where $\rho(t)$ is Lipschitz continuous on $[0,1]$, and $\operatorname{supp}(\mu) \subseteq [0,1]$. Suppose the relaxation function $r_s$, from Definition \ref{relax_family}, is Lipschitz on $\RR$ with $\Lip(r_s)\le sC_r$ for some constant $C_r$. Let $\mathcal{P}_T = \{t_0 =0 < \dots < t_T=1\}$ be the uniform partition of $[0,1]$, with $\Delta t = t_{i+1}-t_i=\frac{1}{T}$. Then, for any $s>0$, 
\begin{equation}\label{int_approx_est_exact}
\frac{1}{T}\sum_{j=1}^T h(\hat{\bf B}_{s,m_0,m_1}(t_j;\theta)) \rho(t_j) = \int_0^1 h \big( B_s(t;\theta) \big) \rho(t) dt + \mathcal{E}^{(s)}_{m_0,m_1,T},
\end{equation}
where $\mathcal{E}^{(s)}_{m_0,m_1,T}\to 0$ in $L^2(\P)$ as $m_0,m_1,T\to \infty$ with an optimal rate of $O((1+s)/T)$ when $T/(1+s)=c\cdot \sqrt{\min(m_0,m_1)}$ for some fixed constant $c$.
\end{theorem}

\begin{proof}
See Appendix \ref{proof::DiscrEstimator}.
\end{proof}

Note that if $r_s$ is differentiable, then the estimators in \eqref{full-mc-est} and \eqref{int_approx_est} are differentiable with respect to $\theta$ in view of \eqref{Bs_estimator}. Similar conclusions to those in Section \ref{sec::relax_approx} apply if $r_s$ is Lipschitz continuous on $\RR$. 

Moreover, if $s=O(T^{\alpha})$ for some $\alpha \in (0,1)$, then the error in Theorem \ref{thm::DiscrEstimator} has rate $O(T^{\alpha-1})$ as $T \to \infty$. In particular, if $\alpha>0.5$, this rate is faster than the  $O(T^{-1/2})$ rate in Theorem \ref{thm::MCestimator}.

Finally, note that the estimator \eqref{int_approx_est} and the proof of Theorem \ref{thm::DiscrEstimator} can be readily adapted to cases where the density $\rho$ has finite support or decays exponentially fast.

\begin{remark}\label{rem::higherorder}\rm
The approximation in \eqref{int_approx_est} may be improved by using higher order numerical integration schemes. For example, if $h$ and $r_s$ are twice continuously differentiable with bounded first and second derivatives on $[-1,1]$ and $\RR$, respectively, then using the trapezoidal rule, we obtain  the error $O( ((s+1)\Delta t)^2)$, where we assumed $|r_s''| \leq s^2$.
\end{remark}

\begin{remark}\rm
To ensure numerical convergence of approximation \eqref{int_approx_est} to \eqref{gen_bias} as $\Delta t\to0$ and $s\to\infty$, we see from Theorem \ref{thm::DiscrEstimator} that $s$ must tend to infinity in such a way that $s \Delta t \to 0$.
\end{remark}

\begin{remark}\rm
If $h(\cdot)=|\cdot|$ and $\mu(dt)=dt$, one can drop the relaxation, and instead consider the estimator from \cite{Jiang2020Jul}
\begin{equation}\label{norelax-est}
\int_0^1 |B(t;\theta)| dt = W_1(P_{f(X;\theta)|G=0},P_{f(X;\theta)|G=1}) \approx W_1(\hat{P}_{f(X;\theta)|G=0},\hat{P}_{f(X;\theta)|G=1}),
\end{equation}
where $\hat{P}_{f(X;\theta)|G=k}$, $k \in \{0,1\}$, denotes the empirical distribution of the model's scores. The gradient of the right-hand side in \eqref{norelax-est} can  then be computed explicitly using the optimal transport coupling; see \cite[Lemma 3]{Jiang2020Jul}.
\end{remark}

\subsubsection{Energy estimator} 

Let us assume that $h(\cdot)=2|\cdot|^2$ in \eqref{gen_bias}, and that $\mu$ is atomless. Then, by Proposition \ref{prop::quant_transf_bias}, the bias metric can be expressed as the twice Cram\'{e}r's distance \cite{cramer1928}: 
\begin{equation}\label{energy_bias_mu}
\begin{aligned}
& Bias^{(h)}_{\mu}(f|X,G)) \\
& \quad = 2\int (F_0(t)-F_1(t))^2 \, \mu(dt) 
   = 2\int_0^1 (F_{S^{(\mu)}_0}(q)-F_{S_1^{(\mu)}}(q))^2 \, dq\\
 & \quad = 2 \int_0^1 |s_0-s_1| [P_{S^{(\mu)}_0} \otimes P_{S^{(\mu)}_1}](ds_0,ds_1)  
   - \sum_{k \in \{0,1\}} \int |s_k-\tilde{s}_k| \, [P_{S^{(\mu)}_k} \otimes P_{S^{(\mu)}_k}](ds_k,d\tilde{s}_k),
\end{aligned}
\end{equation}
where $S_{k}^{(\mu)}=F_{\mu}(Z_k)$, and where in the last equality we used the fact that the twice Cram\'{e}r's distance coincides with the squared energy distance \cite{Szekely1989}.

Let $z^{(i)}_k=f(x^{(i)}_k;\theta)$, where $x_k^{(i)} \in D_k$, $k \in \{0,1\}$. Then, since $F_{\mu}(z^{(i)}_k) \sim P_{S_k^{(\mu)}}$, the $E$-statistic \cite{Szekely1989} 
\begin{equation}\label{E-statistic-mu}
\cE^{(\mu)}_{m_0,m_1} := \frac{2}{m_0 m_1} \sum_{i=1}^{m_0}\sum_{j=1}^{m_1} |F_{\mu}(z_0^{(i)})-F_{\mu}(z_1^{(j)})| -  \sum_{k \in \{0,1\}}\frac{1}{m_k^2} \sum_{i=1}^{m_k}\sum_{j=1}^{m_k} |F_{\mu}(z_k^{(i)})-F_{\mu}(z_k^{(j)})|,
\end{equation}
which is always non-negative, can be used to estimate \eqref{energy_bias_mu} with estimation error $O(1/\sqrt{m_0\cdot m_1})$; for details see \cite{Szekely1989}.

\begin{remark} \rm
We note that if $F_{\mu}$ is differentiable, then \eqref{E-statistic-mu} is differentiable with respect to $\theta$. Similar conclusions to those in Section \ref{sec::relax_approx} apply to \eqref{E-statistic-mu} if $F_{\mu}$ is  Lipschitz continuous on $\RR$. 
\end{remark}

\begin{remark}\rm
If $\mu$ contains atoms, then the estimator \eqref{E-statistic-mu} can be adjusted by replacing $F_{\mu}$ with $F_{\mu}^{(s)}$, which denotes its relaxed version. This results in an approximation of the relaxed bias metric, that is, \eqref{energy_bias_mu} with $\mu$ replaced by $\mu^{(s)}$, a probability measure with CDF $F_{\mu}^{(s)}$, allowing for a differentiable estimator even when $\mu$ is not atomless.
\end{remark}

\begin{remark} \rm
The relaxation limit \eqref{relax_limit} and the estimators in \eqref{full-mc-est}, \eqref{int_approx_est}, and \eqref{E-statistic-mu},  can be generalized to any cost function $c(\cdot,\cdot)$ which is continuous on $[0,1]^2$.
\end{remark}

\section{Bias mitigation via model perturbation}
\label{sec::methods_perturb}

In this section, we introduce novel post-processing methods for explainable bias mitigation without access to demographic information at inference time. By ``explainable'', we refer to the property that the existing explainability algorithms -- such as those used to compute the marginal game value\footnote{Explanations based on game values are often designed as post-hoc techniques (for example, \cite{LundbergTreeSHAP,LundbergIntervTreeSHAP}), but they may naturally arise in some cases as explanations of inherently interpretable models \cite{filom2024}.} (e.g., Shapley or Owen values) or other types of explanations -- can be directly and efficiently applied to the post-processed models. This avoids the need to design new explanation techniques,  or rely on existing general purpose techniques which are often computationally expensive; see Section \ref{sec::outputperturbation} for further details.

\subsection{Demographically blind optimization with global fairness constraints}

To motivate our approaches, consider a general setting for demographically-blind fairness optimization. Let  $\cF$ be a parametrized collection of models,
\begin{equation*}\label{model_family}
\cF :=\big\{f(x; \theta) \in \mathcal{C}_{\mathcal{B}(\RR^n)}, \,\,\, \theta \in \Theta \big\},
\end{equation*}
where $\Theta$ denotes a parameter space, $(X,Y,G)$ a joint distribution as in Section \ref{sub:notation_and_hypotheses}, $L(y,f(x))$ a loss function, and $\Bias(f|X,G)$ a non-negative bias functional. Define:
\begin{equation*}\label{param_loss_bias}
\cL(\theta) := \E[L(f(X;\theta),Y)], \quad \B(\theta):=\Bias(f(\cdot,\theta)| X,G), \quad \theta \in \Theta.
\end{equation*}

In the fairness setting, one is interested in identifying models in $\cF$ whose bias-performance trade-off is optimal, -- that is, among models with similar performance, one would like to identify those that are the least biased. More precisely, for each $b \geq 0$, define
\[
\Theta_b := \{\theta \in \Theta: \B(\theta) \leq b\}.
\] 
Then, given $b \geq 0$, minimize $\cL$ on $\Theta_b$, that is, find $\theta_{b}^*$ for which $\B(\theta_{b}^*) \leq b$ and $\cL(\theta_{b}^*) \leq \cL(\theta)$, $\theta \in \Theta_b$. Varying the parameter $b$ in this constrained minimization defines the bias-performance efficient frontier. 

Thus, constructing the efficient frontier of the family $\family$ amounts to solving a constrained minimization problem, which can be reformulated in terms of generalized Lagrange multipliers using the Karush-Kuhn-Tucker approach
\citep{Karush,KuhnTucker}:
\begin{equation}\label{minfront}\tag{BM}
\begin{aligned}
\theta^*(\omega) &:= \underset{\theta \in \Theta}{\rm argmin} \big\{  \cL(\theta) + \omega \B(\theta)\big\}, \quad \omega\geq 0.
\end{aligned}
\end{equation}
Here, $\omega$ denotes a bias penalization coefficient that implicitly corresponds to a constraint level $b$; varying $\omega$ traces out the efficient frontier.

The choice of $\cF$ in \eqref{minfront} matters as it leads to conceptually distinct bias mitigation approaches:

\begin{itemize}
  \item [(A1)] Optimization performed during the ML training. In this case, $\cF$ is a family of machine learning models  (e.g. neural networks, tree-based models, etc), and $\Theta$ is the space of model parameters.

  \item [(A2)] Optimization via hyperparameter selection. Here, $f(x;\theta)$ denotes the trained model given a hyperparameter $\theta$, where we suppress the dependence on the fitted model parameters (e.g., weights) for simplicity. The construction proceeds in two steps. First, for a given $\theta$,  training is performed without fairness constraints. Then $\theta$ is adjusted to minimize \eqref{minfront}. Note that adjusting $\theta$ entails retraining.

  \item [(A3)] Optimization is performed over a family   of post-processed models,  performed after training. Namely, given a trained model $f_*$,  the family is constructed by adjusting $f_*$ with $\Theta$ denoting a space of adjustment parameters.

\end{itemize}

 The problem \eqref{minfront} is not trivial for the following reasons. First, the optimization is in general non-convex, which is a direct consequence of the loss and bias terms in the objective function. Second, the dimensionality of the parameter $\theta$ can be high, increasing the complexity of the problem. Finally, in applications where the map $\theta \to f(\cdot;{\theta}) \in \family$ is non-smooth (e.g., discontinuous),  utilizing gradient-based optimization techniques might not always be feasible 
 or straightforward. For example, tree-based models like GBDTs require adapting gradient boosting techniques to handle problem with fairness constraints such as \eqref{minfront}; see \cite{pangia2024}.

There are numerous works proposed in the literature that address \eqref{minfront} in the settings of (A1)-(A3). For approach (A1) where the fairness constraint is incorporated  directly into training, see \cite{Feldman2015,Dwork2012, Zemel2013, Woodworth2017} for its applicability to classifier constraints and \cite{Jiang2020Jul,Vogel2021, pangia2024} for its applicability to global constraints.

For approach (A2) which performs hyperparameter search (using random search, Bayesian search or feature engineering), see \cite{Perrone2020, schmidtpatent} for an application of hyperparameter tuning to bias mitigation and \cite{Bergstra2011} for generic hyperparameter tuning methodologies.

For approach (A3), see the paper \cite{Miroshnikov2020b} and the patent publication \cite{miroshnikovpatent} where the family of post-processed models is constructed from functions of the form $f(\cdot;\theta)=f_*\circ T_{\theta}(x)$, where $T_{\theta}$ is a parametrized transformation. The minimization is then done using derivative-free methods such as Bayesian search to accommodate various metrics and allow for the trained model $f_*$ to be discontinuous. To reduce the dimensionality of the problem, the parametrized transformations are designed using the bias explanation framework of \citep{Miroshnikov2020}.

The post-processing methodologies in \cite{Chzhen2020, Jiang2020Jul, Kwegyir-Aggrey2023,miroshnikovpatent} that make use of optimal transport techniques also fall under purview of (A3), though the methods in \cite{Chzhen2020, Jiang2020Jul, Kwegyir-Aggrey2023} are not demographically-blind. In these works, the distribution $P_{f_*(X)}$ is assumed to have a density. Then the family $\cF$ is obtained by considering linear combinations of the trained model $f_*$ and the repaired model $\bar{f}(X,G)$, which is constructed using Gangbo-\'{S}wi\c{e}ch maps between subpopulation distributions $P_{f(X)|G=k}$, $k \in \cG$, and their $W_1$-barycenter. For the method to be demographically blind, the explicit dependence on $G$ could be removed by projecting the repaired model as proposed in \cite{miroshnikovpatent}; also see Section \ref{app::opttransp}. In this case, $\cF$ is a one-parameter family of models and optimization is not needed to find its efficient frontier.

In what follows, motivated by the optimization ideas of \cite{Jiang2020Jul, Vogel2021}, we propose new scalable bias mitigation approaches that solve \eqref{minfront} over families of explainable post-processed models without explicit dependence on $G$. In particular, model score outputs are adjusted (e.g., by perturbing the model components) rather than their inputs as in \cite{Miroshnikov2020b}, allowing gradient descent to be used instead of Bayesian optimization even if using GBDTs. This allows us to optimize over larger model families which may yield better efficient frontiers.

\subsection{Explainable bias mitigation through output perturbation}\label{sec::outputperturbation}

We now outline the main idea for how to construct a family of perturbed explainable models. Suppose $f_*$ is a trained regressor model. Let  $w=(1,w_1,\dots, w_m)(x;f_*)$ be weight functions (or encoders), whose selection is discussed later. The family of models about $f_*$ associated with the weight map $w$ is then defined by
\begin{equation}\label{linearfam}
\cF(f_*;w) := \bigg\{f: f(x;\theta):=f_*(x) - \theta \cdot w(x;f_*), \,\, \theta=(\theta_0,\dots,\theta_m) \in \Theta \subseteq \RR^{m+1} \bigg\},
\end{equation}
where $\theta \in \Theta$ is a learnable parameter\footnote{For calibration purposes, one can introduce an additional parameter $\alpha$ in \eqref{linearfam}, yielding models of the form $\alpha f_* -\theta \cdot w(x;f_*)$.}. 


\begin{remark} \rm
In some applications, the map $w$ may depend on the distribution of $X$ as well as the model representation $\cR(f)$ in terms of basic ML model structures. In such cases, we write $w=w(\cdot;f,X,\cR(f))$.
\end{remark}

If the trained model is used in classification, the above family is slightly adjusted. Specifically, let $g_*=\sigma \circ f_* $ denote the trained model outputting probability estimates, where $\sigma$ is a link function (e.g., logistic) and $f_*$ is a raw score. We consider  the minimization problem \eqref{minfront} over the family $\cF(f_*;w)$ for the raw score $f_*$ (rather than $g_*$) with adjusted loss and bias metrics: 
\begin{equation}\label{clf-loss-bias}
\cL(\theta):=\E[L(\sigma \circ f_*,Y)], \quad \B(\theta):=Bias(\sigma \circ f_*|X,G).
\end{equation}

It is crucial to point out that the minimization problem \eqref{minfront} over the family \eqref{linearfam} is linear in $w$, since the map $\theta \mapsto \theta \cdot w$ is linear. As we will see, this setup (unlike in \cite{Miroshnikov2020b}) circumvents the lack of differentiability of the trained model and allows for the use of gradient-based methods, even when $f_*$ is discontinuous.

Furthermore, given an explainer map $(x,f,X) \mapsto E(x;f,X) \in \RR^n$, assumed to be linear in $f$ and centered (i.e. $E(x;{\rm const},X)=0$), the explanations of any element of \eqref{linearfam} can be expressed in terms of explanations of the trained model and those of the weight functions:
\begin{equation}\label{familyexpl}
E(x;f(\cdot;\theta),X)=E(x;f_*,X)-\sum_{j=1}^m\theta_j E(x;w_j,X), \quad f(\cdot;\theta) \in \cF(f_*;w).
\end{equation}

 The family \eqref{linearfam} and the explanation relation \eqref{familyexpl} are central to our framework. The representation of postprocessed models enables interpretation of bias adjustments, which -- expressed in terms of encoders -- are explicitly given by $\theta_j w_j$. At the same time, the linear relation \eqref{familyexpl} is particularly useful in industrial applications, where feature-level explanations for a set of models along the bias-performance efficient frontier must be computed quickly across a large dataset of individuals. 

 In our setting, we assume the existence of an explainability algorithm that efficiently computes explanations for the trained model $f_*$ and the encoders $w_j$. By precomputing explanations of the trained model and weights for each individual, the corresponding explanations of any post-processed model in the family can be determined via linear combination. Thus, in view of \eqref{familyexpl},  the total cost of generating new explanations is determined entirely by the complexity of the explanation algorithm for the trained model and that of the encoders.

For instance, when $E=\varphi^{\ME}$, relation \eqref{familyexpl} holds. In this case, it is essential to use explanation algorithms that efficiently compute the marginal Shapley value for both the trained model and the encoders. 

For example, if $f_*$ is a tree ensemble and $T$, $L$ and $D$ denote its number of trees, maximum number of leaves, and background dataset respectively, the per-instance complexity of explanations is $O(T \cdot L \cdot |D|)$ for interventional TreeSHAP \cite{LundbergIntervTreeSHAP} and $O(T\cdot \log{L})$ for the algorithm of \cite{filom2024} for oblivious trees. The dependence on $|D|$ in the former but not the latter can make a big difference on  practical feasibility for large-scale applications. For other cases, model-agnostic approximations (e.g., KernelSHAP \cite{LundbergLee} or Monte Carlo methods \cite{Strumbelj2014}) can be employed. 

Of course, the encoders must also be explainable. For instance, if the encoders are additive functions or involve pairwise interactions, their marginal Shapley values can often be computed analytically (see Section \ref{sec:additive_models}). Alternatively, if the encoders are trees (see Section \ref{sec::tree_rebalancing}), the aforementioned tree-based algorithms can be applied.

\begin{algorithm}[ht!]
\SetAlgoLined
\KwData{Model $f_*$, weight map $w$, initial parameter $\theta_{11}$, boundary conditions $\Theta$, training or holdout set $(X,Y, G)$, test set $(\bar{X},\bar{Y}, \bar{G})$
}
 \KwResult{Models $\{f(\cdot; \theta, f_*) \in \family(f_*, w), \theta\in \Theta\}$ constituting the efficient frontier of $\family(f_*, w)$ in \eqref{linearfam}. } 

{\bf Initialization parameters:} Fairness penalization parameters $\omega = \{ \omega_1, \dots, \omega_J \}$, learning rate $\alpha$, number of batch samples for estimating performance $n_{perf}$, number of batch samples for estimating bias $n_{bias}$, number of batches per epoch $n_{batch}$, and number $n_{epochs}$ of epochs of training for each $\omega_j$.\\
Pre-compute and store $f_*(X)$ and $w(X)$\\
Compute and store $\cL(\theta_{11}; X, Y)$ and $\B(\theta_{11}; X, G)$\\
\For{$j$ in $\{1,\dots,J\}$}
{ 
$\theta_{j1} := \text{argmin}_{\theta_{ji}}\, \cL(\theta_{ji}; X, Y) + \omega_j\cdot \B(\theta_{ji}; X, G)$\\
\For{$i$ in $\{1,\dots,n_{epochs}\}$}
{
 Compute and store $\cL(\theta_{ji}; X, Y)$ and $\B(\theta_{ji}; X)$\\
 $\theta_{j(i+1)} := \theta_{ji}$\\
 \For{$k$ in $\{1,\dots,n_{batches}\}$}
 {
    Produce $(X_{perf}, Y_{perf})$ by sampling $n_{perf}$ samples from $(X, Y)$\\
    Produce $(X_{bias}, G_{bias})$ by sampling $n_{bias}$ samples from $(X, G)$ for each $k\in G$\\
    Retrieve $f_*(X_{perf}), w(X_{perf})$\\
    Retrieve $f_*(X_{bias}), w(X_{bias})$\\
    Compute the gradient $d=\nabla_{\theta}[\cL(\theta; X_{perf}, Y_{perf}) + \omega_j\cdot \B(\theta; X_{bias}, G_{bias})] |_{\theta=\theta_{j(i+1)}}$\\
    Perform a gradient step, e.g., $\theta_{j(i+1)} \leftarrow \theta_{j(i+1)} - \alpha \cdot d$, such that $\theta_{j(i+1)}$ remains in $\Theta$
 }
}
}
Compute $(\theta_{ji}, \B(\theta_{ji};\bar{X}, \bar{G}), \cL(\theta_{ji};\bar{X},\bar{Y}))$ for $(j,i)\in \{1,\dots, J\}\times \{1,\dots, n_{epochs}\}$, giving collection $\mathcal{V}$.\\
 Compute the convex envelope of $\mathcal{V}$ and exclude the points that are not on the efficient frontier.

 \caption{Stochastic gradient descent for linear families with custom loss}\label{GDOalgo}
\end{algorithm}

Constructing the efficient frontier of the family \eqref{linearfam} amounts to solving the minimization problem \eqref{minfront}. Since any model from $\cF(f_*;w)$ is an adjustment of $f_*$ by a linear combination of encoders $\{w_j(\cdot;f_*)\}_{j=0}^m$, we propose employing stochastic gradient descent, as outlined in Algorithm \ref{GDOalgo},  where the map $\theta \mapsto \cL(\theta) + \omega \B(\theta)$ is approximated by appropriately designed differentiable estimators of bias metrics such as those in Section \ref{sec::biasapprox}.

This proposed SGD-based approach empowers us to learn highly complex demographically blind adjustments to our original model. Clearly, the selection of the weight maps $\{w_j\}_{j=0}^m$ is crucial for ensuring the explainability of post-processed models generated by this method. While the encoders may be constructed in a variety of ways, we present three particular approaches for producing families of fairer explainable models: corrections via additive models, tree-rebalancing (for tree ensembles), and finally explanation rebalancing.

\subsubsection{Corrections by additive models}\label{sec:additive_models}

First, we consider a simple case where the weight maps are fixed functions independent of the trained model. Specifically, let $\{q_j(t)\}_{j=0}^m$ be a collection of linearly independent functions defined on $\RR$, with $q_0(t) \equiv 1$. Define the corrective weights $w=\{w_0\} \cup \{w_{ij}\}$ by 
\begin{equation*}\label{additive-weights}
w_0(x)=1 \quad \text{and} \quad w_{ij}(x):=q_j(x_i), \quad i \in N:=\{1,\dots,n\}, \, j \in M:=\{1,\dots,m\}.
\end{equation*}
Then, any model in the family $\cF(f_*;w)$ has the representation
 \begin{equation*}\label{additive-weights-repr}
 f(x;\theta) = f_*(x) -  \bigg( \theta_0 + \sum_{i=1}^n \sum_{j=1}^m \theta_{ij}q_j(x_i) \bigg), \quad \theta:=\{\theta_0\} \cup \{\theta_{ij}\}.
\end{equation*}

Suppose $E(x;f)$, where we suppress the dependence on $X$, is a local model explainer defined for a family of ML models (assumed to be a vector space) that contains $f_*$ as well as the functions $\bar{q}_{ji}(x):=q_j(x_i)$, $i \in N, j \in M$. If $E$ is linear and centered, then the explanations of perturbed models can be obtained by
\begin{equation*}\label{additive-expl-gen}
E_i(x,f(\cdot;\theta))
=E_i(x; f_*) - \sum_{k=1}^n\sum_{j=1}^m \theta_{kj} E_i(x;\bar{q}_{jk}(x)) \,, \quad i \in N.
\end{equation*}

For example, the marginal Shapley value can be computed easily by leveraging the absence of predictor interactions:
\begin{equation*}\label{additive-corr-expl}
\varphi_i^{\ME}(x,f(\cdot;\theta))
=\varphi_i^{\ME}(x; f_*) - \sum_{j=1}^m \theta_{ij} \big( q_j(x_i)-\E[q_{j}(X_i)]\big)\,, \quad i \in N,
\end{equation*}
where we employed the null player property of the Shapley value.

\begin{remark}
\rm Note that, in practice, we may choose to fix some $\theta_{ij}$ to reduce the dimensionality of $\theta$.
\end{remark}

Note that the simplest bias correction approach, where $q_0(t)=1$, $q_1(t)=t$, corresponds to correcting the bias in our trained model scores using a function that is linear in the raw attributes of our dataset, that is, $f(x;\theta) = f_*(x) -  (\theta_0 + \sum_{i=1}^n \theta_{i} x_i)$, $\theta=(\theta_0, \dots \theta_n)$.  However, we may also employ nonlinear functions by letting $\{q_{j}(t)\}_{j=0}^m$ be the first $(m+1)$ basis polynomials of degree at most $m$. Such basis polynomials may be Legendre polynomials \cite{Legendre1785}, Bernstein polynomials \cite{Bernstein1912}, Chebyshev polynomials \cite{Chebyshev1854}, etc.  Another related approach involves replacing $\sum_{j=1}^m \theta_{ij} q_j(x_i)$ with $\cK_i(x_i; \theta_i)$, a single-variable neural network parametrized by weights $\theta_i$, or by using an explainable neural network based on additive models with interactions \cite{yang2021gami}. While outside the scope of this work, these approaches also yield explainable models that can be learned via SGD.

\subsubsection{Tree rebalancing}\label{sec::tree_rebalancing}

In many cases, linear combinations of fixed additive functions are not expressive enough to yield good efficient frontiers because of their modest predictive power \cite{Hastie2016}. Given this, it is worth exploring methods for constructing weight maps that include predictor interactions while remaining explainable.

In what follows, we design model-specific weights for tree ensembles. To this end, let us assume that a trained  model $f_*$ is a regressor (or a raw probability score) of the form 
\begin{equation*} \label{tree}
f_*(x)=\sum_{j=1}^m T_j(x)
\end{equation*}
with $\cR(f_*)=\{T_j\}_{j=1}^m$ being a collection of decision trees used in its representation. Let $A=\{j_1,j_2,\dots,j_r\} \subseteq \{1,\dots,m\}$ be a subset of tree indexes. Define the weights $w^{\small A}=\{w^A_k\}_{k=0}^r$ as follows:  
\begin{equation*}\label{weights-tree}
w_0^A(x;\cR(f_*)) \equiv 1 \quad \text{and} \quad w_k^A(x;\cR(f_*))=T_{j_k}(x), \,\quad k \in \{1,\dots,r\}.
\end{equation*}
Then, any model in the family $\cF(f_*;w^A)$ has the following representation:
 \begin{equation*}\label{weights-tree-repr}
 f(x;\theta) = f_*(x) -  \Big( \theta_0 + \sum_{k=1}^r \theta_{k} T_{j_k}(x) \Big) = \sum_{j \notin A} T_j(x) + \sum_{k=1}^r (1-\theta_k) T_{j_k}(x) - \theta_0,  \quad \theta=(\theta_0,\dots,\theta_k).
\end{equation*}

Suppose $E(x;f)$ is a local model explainer defined on a vector space of tree ensembles. If $E$ is linear and centered then, as $f(x;\theta)$ remains linear in the trees composing $f_*$, one has
\begin{equation}\label{additive-expl-tree-gen}
E_i(x, f(\cdot;\theta))= \sum_{j \notin A} E_i(x;T_j) + \sum_{k=1}^r (1-\theta_k) E_i(x;T_{j_k}), \quad i \in N.
\end{equation}

In particular, for marginal Shapley values we have
 \[
 \varphi_i^{\ME}(x, f(\cdot;\theta))= \sum_{j \notin A} \varphi^{\ME}_i(x;T_j) + \sum_{k=1}^r (1-\theta_k) \varphi_i^{\ME}(x;T_{j_k}),
 \]
where $\varphi^{\ME}_i(x;T_j)$ for each tree can be computed using the algorithms in \cite{filom2024,LundbergIntervTreeSHAP}.

We commonly seek $r\ll m$ to avoid overfitting when $m$ is large (e.g., $m=1000$). However, in practice, selecting which trees to include in $A$ is non-trivial. In search for a more strategic method of weight selection, we may generalize the above approach by considering weights $w=\{w_k\}_{k=0}^r$ that are linear combinations of trees:
\begin{equation}\label{weight-tree-comb}
w_0(x;\cR(f_*)) \equiv 1 \quad \text{and} \quad w_k(x;\cR(f_*))= \sum_{j=1}^m \alpha_{kj} T_j, \quad k \in \{1,\dots,r\},
\end{equation}
for some fixed coefficients $\alpha_{kj}$. Models incorporating such weights may be used without loss of explainability.

In this work, we select $a_{kj}$ using principal component analysis (PCA), which is compatible with formulation \eqref{weight-tree-comb}. Specifically, we design $\alpha_{kj}$ so that $w_k$ becomes the $k$-th most important principal component, in which case $\{w_k\}_{k=1}^r$ contains only the top $r$ principal components.

\begin{remark} \rm
The linearity property of $E$ can be relaxed. For example, if $E$ is homogeneous, centered, and tree-additive (i.e., defined as the sum of explanations of individual trees) then \eqref{additive-expl-tree-gen} holds.  This is the case for path-dependent TreeSHAP \cite{LundbergTreeSHAP}, where the underlying game is not linear with respect to models, but the explanation method -- which is based on the ensemble representation -- is tree-additive.
\end{remark}

\begin{remark} \rm
A drawback of non-sparse dimensionality reduction techniques such as PCA is that they require aggregating the outputs of each tree in the original model $f_*$. If one has a dataset with $10^6$ records and an ensemble with $10^3$ trees, employing PCA naively results in a $10^6\times 10^3$ matrix which may impose memory challenges. More sophisticated approaches, such as computing and aggregating tree outputs in batches, may mitigate this at the loss of parallelization. Alternatively, one may employ sparse PCA or some other sparse dimensionality reduction technique.
\end{remark}

\subsubsection{Explanation rebalancing}
\label{sec::explrebalancing}

In the event that one would like to incorporate predictor interactions without relying on the structure of the model, as we did with tree ensembles, the weights may be determined by explanation methods which, in principle, may be model-agnostic or model-specific.

As before, suppose $f_*$ is a trained model, a regressor, a classification score, or a raw probability score. First, we consider a special case. Let us define the weights $w=\{w_i\}_{i=0}^n$ to be marginal Shapley values: 
\begin{equation*}\label{explan-weights}
w_0(x;f_*) = 1 \quad \text{and} \quad w_{i}(x; f_*):=\varphi_i^{\ME}(x;f_*), \quad i \in N, \, x \in \RR^n.
\end{equation*}
Then, any model in the family $\cF(f_*;w)$ has the representation
 
\begin{equation}\label{mod-repr-game-reb}
\begin{aligned}
 f(x;\theta) &= f_*(x) -  \Big( \theta_0 + \sum_{i=1}^n \theta_i \varphi_i^{\ME}(x;f_*) \Big) 
 \\ &= \big(\E[f(X)]-\theta_0 \big) + \sum_{i=1}^n (1-\theta_i) \varphi_i^{\ME}(x;f_*), \quad \theta=(\theta_0,\dots,\theta_n) \in \RR^{n+1},
\end{aligned}
\end{equation}
where we use the efficiency of the marginal Shapley value, which reads $f(x)-\E[f(X)]=\sum_{i=1}^n \varphi_i^{\ME}(x;f)$.

Due to the predictor interactions incorporated into them, computing explanations for Shapley values $x \mapsto \varphi^{\ME}(x,f)$ themselves may be non-trivial even for explainable models.  However, model-specific algorithms for computing explanations may still be applicable. For example, \citep{filom2024} has established that a tree ensemble of oblivious trees is inherently explainable and its explanations coincide with marginal game values (such as the Shapley and Owen value)  which are constant in the regions corresponding to the leaves of oblivious trees. One practical consequence of this is that computing game values such as the marginal Shapley value for the map $x\mapsto\varphi^{\ME}(x;f)$ becomes feasible if $f$ is an ensemble of oblivious decision trees.

In the broader model agnostic case where Shapley rebalancing explanations are difficult to compute, one may heuristically define the explanations for the model $f(x;\theta)$ in the family $\cF(f_*;\{1\}\cup\{\varphi_i^{\ME}\})$  by setting $\tilde{E}_i(x;f(\cdot;\theta)):=(1-\theta_i)\varphi_i(x, f_*)$  which, by \eqref{mod-repr-game-reb} and the fact that $\E[\varphi_i^{\ME}(X;f_*)]=0$, $i \in N$, gives additivity:
\begin{equation}\label{additiv-game-reb}
\begin{aligned}
 f(x;\theta)-\E[f(x;\theta)] = \sum_{i=1}^n \tilde{E}_i(x;f(\cdot,\theta)), \quad \theta \in \RR^{n+1}.
\end{aligned}
\end{equation}

The above approach can be easily generalized. Suppose $E(x;f)$ is a local model explainer defined for a family of ML models, where we suppress the dependence on $X$. Given a trained model $f_*$, consider the family $\mathcal{F}(f_*,w)$ where $w_0=0$ and $w_i=E_i(\cdot;f_*)$ for $i \in N$. Suppose that $E$ is $P_X$-centrally additive. Suppose also  that it is $P_X$-centered, meaning $\E[E_i(X;f)]=0$, $i \in N$. Then, for any $f(\cdot;\theta) \in \mathcal{F}(f_*,w)$, the additivity property \eqref{additiv-game-reb} holds with $\tilde{E}(x;f(\cdot,\theta)):=(1-\theta_i)E(x;f_*)$. Thus, the heuristic explainer $\tilde{E}$ is $P_X$-centrally additive on the family of models $\mathcal{F}(f_*,w)$. We also note that by design, $\tilde{E}$ is linear on $\mathcal{F}(f_*,w)$.

However, unless predictor interactions are crucial for generating good frontiers, it maybe preferable to use the bias correction methodology via additive functions discussed earlier rather than resorting to a heuristic. Finally, we note that other efficient game values, such as the Owen value, can be used for the weights.

\begin{remark} \rm
The approaches we presented above are not mutually exclusive and may be combined without affecting explainability.
\end{remark}


\section{Experiments}\label{sec::experiments}

In this section, we examine how different bias mitigation methods fare on various synthetic and real world datasets. Specifically, we consider three distinct post-processing strategies: predictor rescaling based on \cite{Miroshnikov2020b}, an optimal transport projection method based on \cite{miroshnikovpatent} adapted for explainability, and the new methods we proposed in Section \ref{sec::methods_perturb} based on gradient descent. We implement these strategies in the same way across all datasets.

We test the gradient-based methods we introduced in Section \ref{sec::methods_perturb}: additive model correction, tree rebalancing, and explanation rebalancing. For additive model correction, we treat all features as numerical and remove missing values by imputing the mean. For tree rebalancing, we employ forty weight values based on equation \eqref{weight-tree-comb} with fixed coefficients $a_{kj}$ determined by PCA to avoid over-parametrization. For explanation rebalancing, we use CatBoost's built-in marginal Shapley value implementation and thus refer to this particular methodology as Shapley rebalancing. Using the stochastic gradient descent strategy of algorithm \ref{GDOalgo}, we minimize a cross-entropy loss subject to a fairness penalization based on \eqref{gen_bias}. Specifically, the penalization uses the cost function $c(a,b)=h(a-b)=(a-b)^2$ that  corresponds to the energy distance bias \eqref{energy_bias_mu} and ensures that both the estimators \eqref{int_approx_est} and \eqref{E-statistic-mu} are unbiased. However, our results are still presented in the context of the Wasserstein metric (i.e. $h(\cdot)=|\cdot|$) due to its practical interpretation and relationship to prior literature (e.g. \cite{Miroshnikov2020}). For further details on our gradient descent approach and custom loss function, see Appendix \ref{app::perturbation}.

To compare with the method of \cite{Miroshnikov2020b}, we perform  rescaling of numerical predictors $X_i$ by substituting them with transformed predictors $X_i'(\theta_i)$ defined by linear transformations of the form $X_i'(\theta_i)=\theta_i(X_i-\bar{X_i})+\bar{X_i}$ where $\bar{X}$ denotes the mean value of $X_i$. We attempt to solve \eqref{minfront} for these parameters using two different strategies, random search (1150 iterations) and Bayesian search (1050 iterations with a prior based on 100 randomly selected models). Due to the dimensionality limitations of these optimization methods, we only rescale a collection of predictors with notable bias explanations using the method of \cite{Miroshnikov2020}. For more details on these predictors and the handling of categorical variables using this method, see appendix \ref{app::predictorrescaling}.

We also compare with optimal transport projection \cite{miroshnikovpatent}, which is implemented by training CatBoost models without access to the protected attribute to approximate the fair models described in the works of \citep{Loubes2020, Chzhen2020, Chzhen2022, Kwegyir-Aggrey2023}, which explicitly rely on the protected attribute. We generate fifteen models of different bias and performance levels by linear interpolation between the original trained models and the new CatBoost models. For more details about this approach, see appendix appendix \ref{app::opttransp}.

In this section, we investigate how our methods perform on a range of synthetic and real world datasets representing classification tasks. We implement our strategies in the same way across all experiments. Predictor rescaling is implemented using both random search (1150 iterations) and Bayesian search (1050 iterations after generating a prior of 100 random models) so our predictor rescaling frontiers reflect the combined result of both search strategies. The methods we proposed in Section \ref{sec::methods_perturb} -- additive model correction, tree rebalancing, and Shapley rebalancing -- are implemented using stochastic gradient descent as described in Algorithm \ref{GDOalgo}. Optimal transport projection is implemented as described in appendix \ref{app::opttransp} using CatBoost classification models with fifteen evenly spaced values of $\sqrt{\alpha} \in [0, 5]$. See appendices \ref{app::predictorrescaling}, \ref{app::perturbation}, \ref{app::opttransp} for further implementation details on the predictor rescaling method, SGD-based methods (i.e. additive model correction, tree rebalancing, and Shapley rebalancing), and the optimal transport based mitigation method respectively.

\subsection{Synthetic datasets}

We begin by studying our methods on the synthetic examples introduced by \citep{Miroshnikov2020b}. In \eqref{realisticmod}, predictors may contribute positively or negatively to the classifier bias of $P(Y=1 | X)$ depending on the selected decision threshold. However, the positive contributions dominate resulting in a true model which only has positive bias. 

\begin{equation}\label{realisticmod}\tag{M1}
\begin{aligned}
&\mu=5, \quad a=\tfrac{1}{20}(10,-4,16,1,-3)\\
&G\sim Bernoulli(0.5)\\
&X_1 | G\sim N(\mu-a_1 (1-G), 0.5+G ), \quad X_2 | G \sim N(\mu-a_2 (1-G), 1 ) \\
&X_3 | G \sim N(\mu-a_3 (1-G), 1 ),     \quad X_4 | G \sim N(\mu-a_4 (1-G), 1-0.5 G ) \\
&X_5 | G \sim N(\mu-a_5 (1-G),1-0.75 G ) \\
&Y | X \sim Bernoulli(g(X)), \quad g(X)=\sigma(f(X))=\P(Y=1|X),\quad f(X)=2(\textstyle{\sum_{i}} X_i-24.5).
\end{aligned}
\end{equation}

We also introduce another data generating model, \eqref{realisticmodasym}, which has two predictors $X_1, X_4$ that positively impact bias at some decision thresholds and negatively impact it at others. As a result, symmetrically compressing these predictors as is done in predictor rescaling may have an unclear impact on overall model bias. This may potentially also have consequences for other strategies that may be viewed as symmetrically compressing the impact of a predictor, such as Shapley rebalancing.
\begin{equation}\label{realisticmodasym}\tag{M2}
\begin{aligned}
&\mu=5, \quad a=\tfrac{1}{10}(2.5,1.0,4.0,-0.25,0.75)\\
&G \sim Bernoulli(0.5)\\
&X_1 | G\sim N(\mu-a_1 (1-G), 0.5+G \cdot 0.75), \quad X_2 | G\sim N(\mu-a_2 (1-G), 1 ) \\
&X_3 | G \sim N(\mu-a_3 (1-G), 1 ),     \quad X_4 | G\sim N(\mu-a_4 (1-G), 1-0.75 G ) \\
&X_5 | G\sim N(\mu-a_5 (1-G), 1) \\
&Y | X \sim Bernoulli(g(X)), \quad g(X)=\sigma(f(X))=\P(Y=1|X),\quad f(X)=2(\textstyle{\sum_{i}} X_i-24.5).
\end{aligned}
\end{equation}

To test our bias mitigation strategies, we generate 20,000 records from the distributions defined by these data-generating models and split them equally among a training dataset and testing dataset. Then we use the training dataset in both training a CatBoost model to estimate $Y | X$ and in applying our strategies to mitigate the bias of those CatBoost models. Table \ref{tbl::synthdatasets} describes some aspects of the datasets we generate along with the $W_1$ biases of our trained CatBoost models.

\begin{table} 
\centering
\begin{tabular}{@{}llllllll@{}}
\toprule
Dataset & $d_{features}$ & $n_{train+test}$ & $n_{unprot}$ & $n_{prot}$ & $W_1$ Bias \\ \midrule
Data Model 1 & 5        & 20000 & 9954 & 10046           & 0.1746     \\
Data Model 2 &  5        & 20000 & 9954 & 10046          & 0.1340     \\
\bottomrule
\end{tabular}
\caption{Summary table for synthetic datasets. $d_{features}$ describes the number of features present in each dataset. $n_{train+test}$, $n_{unprot}$, and $n_{prot}$ describe the number of observations in the dataset, the number of observations of the unprotected class, and the number of observations in the protected class respectively. $W_1$ bias is reported for the CatBoost models trained on these datasets to predict $Y$ which we employ in testing our mitigation techniques.}\label{tbl::synthdatasets}
\end{table}

\begin{figure}[ht!]
\centering
\includegraphics[width=0.66\textwidth]{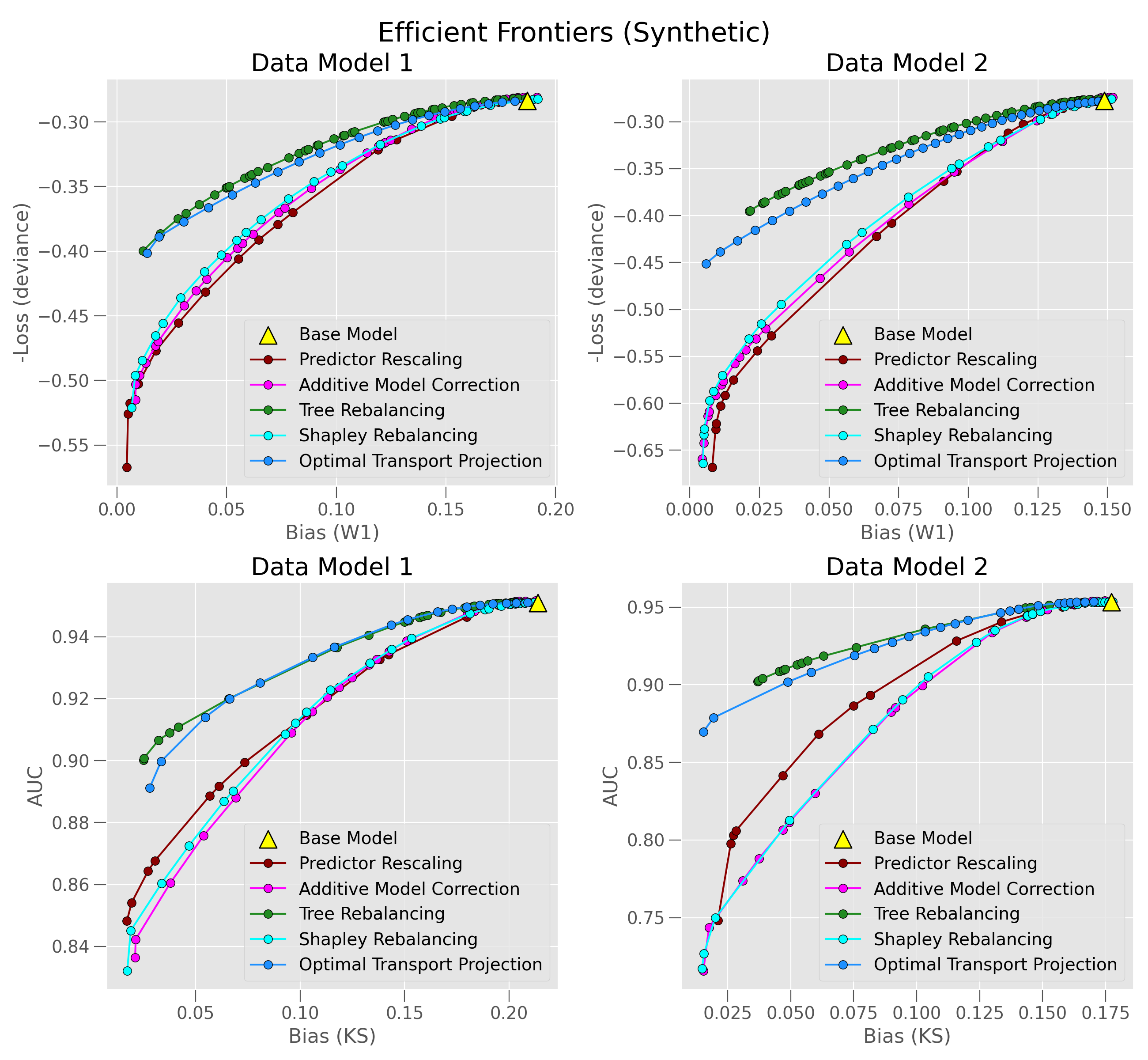}
\caption{\footnotesize Efficient frontiers for data model 1 and data model 2. All results are presented on their respective test datasets.}\label{fig::BPsynthetic}
\end{figure}

The bias performance frontiers resulting from applying our strategies to these data generating models are shown in Figure \ref{fig::BPsynthetic}. We see that tree rebalancing and optimal transport projection produce models with the best bias performance trade-offs. Additive model correction, Shapley rebalancing, and predictor rescaling perform similarly and yield less optimal bias performance trade-offs. Note that the relative performance of bias mitigation strategies is similar across data-generating models (\ref{realisticmod}, \ref{realisticmodasym}) and across bias-performance metric pairings (binary cross-entropy vs Wasserstein-1 bias, AUC vs Kolmogorov-Smirnov bias).

The relative performances of methods on these synthetic datasets can be understood in the context of free parameters. Tree rebalancing has a free parameter for each of the forty principal components that rebalancing is applied to and optimal transport projection learns a new CatBoost model with a flexible number of parameters. In contrast, additive model correction, Shapley rebalancing, and predictor rescaling only have five parameters, one for each predictor. The similarities in the frontiers for additive model correction, Shapley rebalancing, and predictor rescaling are also no coincidence. When one is mitigating models linear in predictors, the model family explored by Shapley rebalancing is equivalent to the model family produced by predictor rescaling, and also equivalent to the model family produced by additive terms linear in the raw attributes assuming no limitations on mitigation parameters. In our synthetic examples, the true score $f(x)$ is linear in predictors, so this correspondence is approximately true when mitigating trained models.

\subsection{Real world datasets}\label{sec::realworld}
We also examine the efficient frontiers produced by our strategies on real world datasets common in the fairness literature: UCI Adult, UCI Bank Marketing, and COMPAS. These datasets contain a range of protected attributes (gender, age, race), prediction tasks, levels of data imbalance, and may yield trained models with relatively high $W_1$ bias. Summary information for these datasets is provided in Table \ref{tbl::datasets}.

\begin{table} 
\centering
\begin{tabular}{@{}llllllll@{}}
\toprule
Dataset      & $G$ & $Y$     & $d_{features}$ & $n_{train+test}$ & $n_{unprot}$ & $n_{prot}$ & $W_1$ Bias \\ \midrule
UCI Adult \citep{adult_2, Vogel2021, Becker2024}        & Gender              & Income       & 12       & 48842  & 32650 & 16192      & 0.1841     \\
UCI Bank Marketing \citep{bank_marketing_222, Jiang2020Jul}   & Age                 & Subscription & 19       & 41188 & 38927 & 2261       & 0.1903     \\ 
COMPAS \citep{compas, zafar2017, Vogel2021, Becker2024}      & Race                & Risk Score   & 5        & 6172 & 2103 & 3175           & 0.1709     \\
\bottomrule
\end{tabular}
\caption{Summary table for datasets used in experiments. $d_{features}$ describes the number of features present in each dataset. $n_{train+test}$, $n_{unprot}$, and $n_{prot}$ describe the number of observations in the dataset, the number of observations of the unprotected class, and the number of observations of the protected class respectively. $W_1$ bias is reported for the CatBoost models trained on these datasets to predict $Y$ which we employ in testing our mitigation techniques.}\label{tbl::datasets}
\end{table}

At a high-level, the UCI Adult dataset contains demographic and work-related information about individuals along with information about income. As in \citep{Jiang2020Jul}, we build a model using this dataset to predict whether an individual's annual income exceeds \$50,000 and then attempt to mitigate this model's gender bias. The UCI Bank Marketing marketing dataset also describes individuals and we employ it to build a model that predicts whether individuals subscribe to a term deposit. We then attempt to mitigate this models' age bias. Lastly, COMPAS is a recidivism prediction dataset. We employ it to build a model that predicts whether individuals are classified as low or medium/high risk for recidivism by the COMPAS algorithm. In this case, we are attempting to mitigate this model's race bias. For all mitigation exercises, we split the datasets 50/50 into train and test sets, with model building and mitigation performed using the training dataset. For more details about these datasets, feature preprocessing steps, and filtration procedures, see appendix \ref{app::datasets}. 

Note that, for predictor rescaling, we consider rescaling all five numerical features in the UCI Adult dataset (seven others are categorical), eight of the thirteen numerical features in the UCI Bank Marketing dataset (six others are categorical) and all five features in the COMPAS dataset. The purpose of limiting the number of features under consideration is in part to reduce the dimensionality of the random/Bayesian search problem. For more details on the predictors considered during rescaling, see appendix \ref{app::predictorrescaling}. For an empirical demonstration of what occurs when predictors are not restricted, see appendix \ref{app:beyondBPEF}.

\begin{figure}[ht!]
\centering
\includegraphics[width=\textwidth]{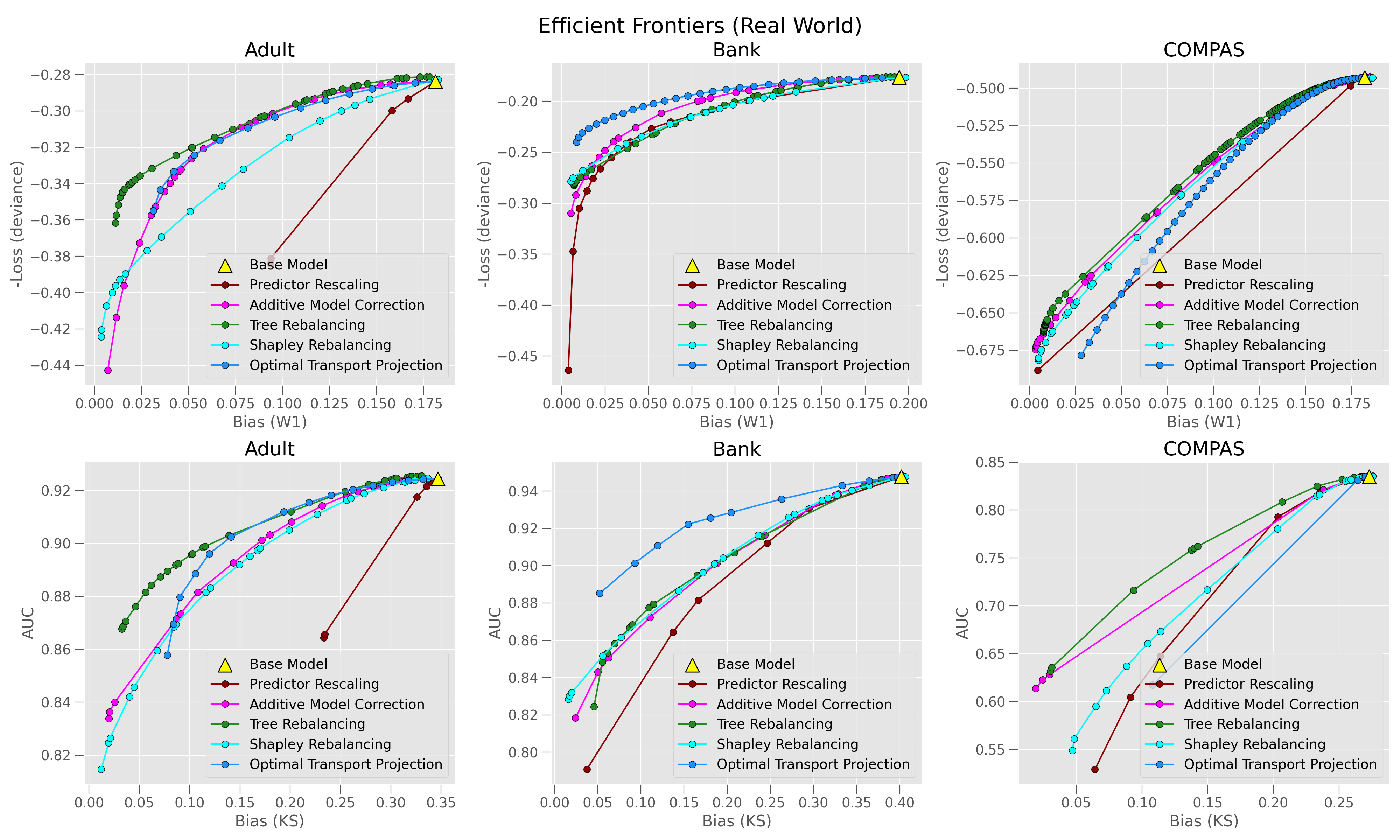}
\caption{Efficient frontiers for UCI Adult, UCI Bank Marketing, and COMPAS datasets, evaluated on their test datasets.}
\label{fig::BPrealworld}
\end{figure}

The results of performing different mitigation strategies on these datasets are shown in Figure \ref{fig::BPrealworld}. Note that no bias mitigation method dominates, but some strategies can perform much better than others in certain contexts. Furthermore, performance of a mitigation approach may depend on the bias performance frontier being targeted. For example, on the UCI Adult dataset, the additive model correction method performs very well (comparably to optimal transport projection) on the cross-entropy vs Wasserstein-1 frontier but more poorly (comparably to Shapley rebalancing) on the AUC vs KS frontier. To get a general sense of how methods perform, we holistically consider both frontiers in this work.

On the UCI Adult dataset, which has a moderate number of features and many observations from both protected and unprotected classes, strategies optimizing higher dimensional spaces are at an advantage. Optimal transport projection and tree rebalancing perform the best, followed by additive model correction, Shapley rebalancing, and then predictor rescaling. Predictor rescaling likely performs more poorly here than on the synthetic datasets because it optimizes a lower dimensional space of five predictors, while Shapley rebalancing and additive model corrections each adjust model scores using all twelve (we apply this restriction because Bayesian / random search struggle using all predictors; see Appendix \ref{app:beyondBPEF}). Other than this difference, the results are similar to those on the synthetic datasets.

In contrast, UCI Bank Marketing has a limited number of observations from the protected class yet more features than UCI Adult. Here, optimal transport projection outperforms the rest of the post-processing methods considerably and remaining differences in frontiers are fairly minor. Since optimal transport projection is the only method that does not explicitly depend on $Y$, its performance may be due to a lack of overfitting relative to the other methods.

Lastly, we may consider COMPAS, which has a low number of features and few observations. In this case, all methods perform similarly. Note that, due to its low number of predictors, COMPAS is the only dataset where predictor rescaling employs all features in the dataset and therefore may compete with Shapley rebalancing and additive model correction. In general however, Shapley rebalancing, predictor rescaling, and additive model correction will no longer have the correspondence seen on the synthetic examples for two reasons: the trained models may not be linear, and Shapley rescaling can more naturally handle categorical variables than predictor rescaling.

\begin{remark}[Addressing overfitting in SGD methods]
Mitigation methods employing high-dimensional optimization may overfit to the response variable $Y$. In such cases, removing the explicit use of $Y$ in \eqref{minfront} may improve test dataset performance. Following Hinton's work on knowledge distillation in the classification setting \cite{hinton2015distilling}, we propose substituting $\cL(\theta)=\E[L( \sigma \circ f(X;\theta),Y)]$ with an alternative loss term $\E[\tilde{L}(\sigma \circ f(X;\theta), \sigma \circ f_*(X))]$, where $\tilde{L}$ is defined as:
\begin{align*}
\tilde{L}(p,q) :&= D_{KL}(\text{Bern}(p) || \text{Bern}(q))=p\log\left(\frac{p}{q}\right) + (1-p)\log\left(\frac{1-p}{1-q}\right)\,,
\end{align*}
where $D_{KL}$ denotes the Kullback–Leibler divergence and $\text{Bern}(p)$ the Bernoulli distribution.
\end{remark}

\begin{appendices}
\section*{Appendix}

\section{Experimental details}\label{sec::experimental_details} 
This section describes the details of the bias mitigation experiments presented in the main body of the paper. In appendix \ref{app::datasets}, we review the datasets used; in appendix \ref{app::models}, we describe the model development procedure we employed to then showcase our bias mitigation approach; and in appendices \ref{app::predictorrescaling}, \ref{app::perturbation}, and \ref{app::opttransp} we present the predictor rescaling, perturbation, and explainable optimal transport projection methods respectively.

\subsection{Datasets}
\label{app::datasets}



\paragraph{UCI Adult.} The UCI Adult dataset includes five numerical variables (Age, education-num, capital gain, capital loss, hours per week) and seven categorical variables (workclass, education, marital-status, occupation, relationship, race, native-country) for a total of twelve independent variables. In addition, UCI Adult includes a numerical dependent variable (income) and gender information (male and female) for each record. We encode the categorical variables  with ordinal encoding and we binarize income based on whether it exceeds \$50,000.

The task on the UCI Adult dataset is to mitigate gender bias in a machine-learning model trained to classify records as having income in excess of \$50,000. To do this, we merge the default train and test datasets associated with UCI Adult together and randomly split them 50/50 into a new train and test dataset. The machine-learning model is trained using the new training dataset, although the early-stopping procedure employs the new test dataset. Bias mitigation techniques are applied using only the training dataset.


\paragraph{UCI Bank Marketing.} The UCI Bank Marketing dataset includes thirteen numerical variables (default, housing, loan, duration, campaign, pdays, previous, poutcome, emp.var.rate, cons.price.idx, cons.conf.idx, euribor3m, nr.employed) and six categorical variables (education, month, day\_of\_week, job, marital, contact). We convert education to a numerical variable based on the length of schooling suggested by its categories. Similarly, we convert month and day\_of\_week to numerical variables with month going from zero to eleven (January through December) and day\_of\_week going from zero to four (Monday through Friday). We encode the rest of the categorical variables using ordinal encoding. Furthermore, we represent categories like unknown or non-existent as missing to leverage CatBoost's internal handling of missing values. The dependent variable is a yes/no classification reflecting whether marketing calls to a client yielded a subscription. UCI Adult also includes age information which is treated as sensitive demographic information.

The task on the UCI Bank Marketing dataset is to mitigate age bias in a machine-learning model trained to predict subscriptions. We base this on two age classes, one for ages in $[25, 60)$ and one for all other ages. To do this,we randomly split the UCI Bank Marketing dataset 50/50 into a train and test dataset. The machine-learning model is trained using the training dataset and the test dataset is used for early-stopping. Bias mitigation techniques are similarly applied using the training dataset.


\paragraph{COMPAS.} The COMPAS dataset includes three numerical variables (priors\_count, two\_year\_recid) and three categorical variables (c\_charge\_degree, sex, age). Age is encoded from zero to two in order of youngest to oldest age group. Sex and c\_charge\_degree are ordinal encoded. The dependent variable describes the risk classification of the COMPAS algorithm, which we binarize as zero if low and one otherwise. COMPAS also includes race information which is treated as sensitive demographic information. Following \citet{zafar2017}, we only include records when days\_b\_screening\_arrest $\in [-30, 30]$, is\_recid $\neq -1$, c\_charge\_degree $\neq 0$ and the risk score is available.

The task on the COMPAS dataset is to mitigate black/white racial bias in a machine-learning model trained to predict the COMPAS risk classification. To do this, we randomly split the filtered COMPAS dataset 50/50 into a train and test dataset. The machine-learning model is trained using the training dataset and the test dataset is used for early-stopping. Bias mitigation techniques are similarly applied using the training dataset.

\subsection{Model construction for experiments}
\label{app::models}

In order to generate realistic models for the purpose of testing mitigation techniques, a simple model building pipeline was implemented for all experiments. First the relevant dataset was standardized and split 50/50 into train and test datasets. Then 100 rounds of random hyperparameter tuning were performed using CatBoost models with hyperparameter combinations described by Table \ref{table::hyperparameters}.

\begin{table}[ht!]
\centering
\begin{tabular}{@{}ll@{}}
\toprule
Hyperparameter & Value Range \\ \midrule
depth & $\in \{3,4,5,6,7,8\}$ \\
max iterations & $\in \{1000\}$ \\
learning rate & $\in \{0.005, 0.01, 0.04, 0.08\}$ \\
bagging temperature & $\in \{0.5, 1.0, 2.0, 4.0, 8.0\}$ \\
l2 leaf reg & $\in \{1, 2, 4, 8, 16, 32\}$ \\
random strength & $\in \{0.5, 1.0, 2.0, 4.0, 8.0\}$ \\
min data in leaf & $\in \{2, 4, 8, 16, 32\}$ \\
early stopping rounds & $\in \{1, 2, 4, 8\}$ \\ \bottomrule
\end{tabular}\caption{Hyperparameter combinations sampled in the model development procedure for our experiments during the random hyperparameter tuning phase.}\label{table::hyperparameters}
\end{table}
Following this, the model with the lowest test loss of all models with train/test loss percent differences below 10\% was selected. If no models with train/test loss percent differences below 10\% existed, the model with lowest train/test loss percent difference was selected. These final models were stored and bias mitigation was attempted on each using all strategies to ensure apples-to-apples comparisons.


\subsection{Predictor rescaling methodology}\label{app::predictorrescaling}

Predictor rescaling as described in \citet{Miroshnikov2020b} was implemented with $n_{prior}=100$, $n_{bo}=50$, and $\{\omega_j\}_{j=1}^{21} = \{\tfrac{1}{2}\cdot (j-1)\}_{j=1}^{21}$; see Algorithm \ref{BHPSalgo} below, which originates from that paper. Bayesian optimization parameters were given by $\kappa=1.5$ and $\xi=0.0$. We also implement predictor rescaling using $1150$ iterations of random search. The results of predictor rescaling we present combine the best of both these methods. In this work, we make use of a linear compressive family with transformations 
\[T(t,a,t^{*})=a(t-t^{*}) + t^{*}\]
for numerical variables. We accommodate interventions on categorical variables by subtracting weights $w_k(x)$ with the intent to cancel out their contributions to the model output. Thus, the final post-processed model is of the form
\[\bar{f}(x; \alpha, x^{*}_M, \beta, w^{*}_N) = f(T(x_M,a,x^{*}_M),x_{-M})+\sum \beta_k (w_k(x)-\bar{w}_k)\,\]
where $w_k(x)$ is some way of determining the contribution of the $k$-th feature to the model output and $\bar{w_k}$ denotes a baseline value. In our experimental implementation we let $w_k(x)=f(x)-f(x_{-\{k\}}, \E[{X}_{\{k\}}])$ and $\bar{w}_k=\E[w_k(X)]$ for categorical predictor indices $k$. We let $\alpha_i \in [0,3]$, $x_i^*\in [X_i^{*}+0.05(X_i^{*}-\min(X_i)),\, X_i^{*}+0.05(\max(X_i)-X_i^{*})]$, and $\beta_k \in [-1.5,1.5]$. To further reduce the search space, we limit the number of compressive transformations and weight adjustments to subsets of predictors following the method of \citep{Miroshnikov2020}. These predictors, along with their bias ranks, are provided below:

\begin{itemize}
\item[$\bullet$] \textbf{(UCI Adult)} The top five most biased predictors: relationship (1st), marital-status (2nd), hours per week (3rd), Age (4th), capital gain (5th).
\item[$\bullet$] \textbf{(UCI Bank Marketing)} The top eight most biased predictors \textemdash nr.employed (1st), euribor3m (2nd), month (3rd), cons.price.idx (4th), duration (5th), emp.var.rate (6th), pdays (7th) , cons.conf.idx (8th).
\item[$\bullet$] \textbf{(COMPAS)} All predictors: priors\_count (1st), age (2nd), two\_year\_recid (3rd), c\_charge\_degree (4th), sex (5th).
\end{itemize}

\begin{algorithm}[ht!]
\SetAlgoLined 
\KwData{Model $f$, training or holdout set $(X,Y)$, test set $(\bar{X},\bar{Y})$, the set $M$ of bias-impactful predictors.
}
 \KwResult{Models on the efficient frontier of the parametrized family of models $\mathcal{F}$} 

{\bf Initialization parameters:} the number $n_{prior}$ of random points  $\gamma = (\alpha, x_{M}^*)$, the prior $P_{prior}(d\gamma)$, fairness penalization parameters $\omega = \{ \omega_1, \dots, \omega_J \}$, the number $n_{bo}$ of Bayesian steps  for each $\omega_j$.\\
Sample $\{\gamma_i\}_{i=1}^{n_{prior}}$ from $P_{prior}(d\gamma)$\\
\For{$i$ in $\{1,\dots,n_{prior}\}$}
{
$loss(\gamma_i;X,Y):=\E[\mathcal{L}(X,\bar{f}(X;\gamma_i))]$, $\bar{f} \in \family(f)$.\\
$bias(\gamma_i;X):=\Bias_{W_1}(\bar{f}(X;\gamma_i)|G)$; see definition \ref{modbias}.
}

\For{$j$ in $\{1,\dots,J\}$}
{ 
\For{$i$ in $\{1,\dots,n_{prior}\}$}
{
 $L(\gamma_i,\omega_j) := loss(\gamma_i; X,Y) + \omega_j \cdot bias(\gamma_i;X)$
}
Pass $\{ \gamma_i, L(\gamma_i,\omega_j)\}_{i=1}^{n_{prior}}$ to the Bayesian optimizer that seeks to minimize $L(\cdot,\omega_j)$.

Perform $n_{bo}$ iterations of Bayesian optimization which produces $\{\gamma_{t,j} \}_{t=1}^{n_{bo}}$.
}
Compute $(\gamma, bias(\gamma;\bar{X}), loss(\gamma;\bar{X},\bar{Y}))$ for $\gamma \in \{\gamma_i\} \cup \{\gamma_{t,j}\}$, giving a collection $\mathcal{V}$.\\
 Compute the convex envelope of $\mathcal{V}$ and exclude the points that are not on the efficient frontier.
 \caption{Efficient frontier reconstruction using Bayesian optimization}\label{BHPSalgo}
\end{algorithm}

\subsection{Perturbation methodology}
\label{app::perturbation}
Algorithm \ref{GDOalgo} was implemented using optimization objectives of the form $(1-\omega_j)\cL + \omega_j\B$. We set $\cL$ to be binary cross-entropy and $\B$ to be based on \eqref{int_approx_est} with $r_s(z)=\sigma(20z)$, $h(z)=z^2$, $\rho(t)=1$, and $\Delta t=1/129$ with some adjustments. Specifically, we improve the discrete estimator using the trapezoidal rule alluded to in Remark \ref{rem::higherorder} and modify the terms we sum to yield an unbiased estimator. This is possible due to this metric's connection with the energy distance; see (\ref{energy_bias_mu}). Furthermore, we let $n_{perf}=n_{bias} = 1024$, learning rate $\alpha=0.01$, and $n_{epochs}=20$. We also let $w_j = (C/20)\cdot j$ for $j\in \{0, 1,\dots, 20\}$ for an appropriate scaling constant $C$ (typically either one or the ratio of binary cross-entropy to the $\B$ bias in the original model). For tree rebalancing, we apply PCA to the dataset $(T_1(X),\dots, T_n(X))$ with $T_i$ being trees in the GBDT targeted for mitigation and $X$ being the dataset. Then rebalancing was done using the most important 40 principal components.

\subsection{Explainable optimal transport projection methodology}
\label{app::opttransp}
The application of optimal transport to bias mitigation procedures has been extensively studied. Bias metrics inspired by the Wasserstein distance have been discussed in \citep{Miroshnikov2020, Becker2024} and model training using penalized losses based on these metrics has been described in \citep{Jiang2020Jul}. Methods for de-biasing datasets using optimal transport have been proposed by \citep{Feldman2015, johndrow2019, delBarrio} and similar techniques have been proposed for post-processing model predictions by \citep{Loubes2020, Chzhen2020, Chzhen2022, Kwegyir-Aggrey2023}. \citep{miroshnikovpatent} proposes eliminating direct use of demographic information through projection as follows:
\[\tilde f(x)=\mathbb{E}[\bar f(X,G) | X=x] = \sum_{k=0}^{K-1} \bar f(x,k)\cdot \mathbb{P}(G=k|X=x)\,,\]
where $\bar f(x, k)$ is any fair model using demographic information such as one produced using \citep{Loubes2020, Chzhen2020, Chzhen2022, Kwegyir-Aggrey2023} and $\mathbb{P}(G=k|X=x)$ are regressors trained on $(X,G)$. However, this approach involves non-linear function compositions and multiplication by regressors $\mathbb{P}(G=k | X=x)$ and thus may be difficult to explain.

To facilitate explainability, we directly estimate $\tilde f(x)=\mathbb{E}[\bar f(X,G) | X=x]$ using an explainable ML model. This may be done in several ways however, for binary classification tasks, we do so by training an explainable model $\tilde f(x)$ on the following constructed dataset
\[(X',Y',W'):=(X, 0, 1-\bar f(X, G)) \oplus (X, 1, \bar f(X, G))\,,\]
where $\oplus$ indicates dataset concatenation and $X',Y',W'$ correspond to predictors, labels, and sample weights. Clearly, $\E[\bar f(X,G) | X=x] = \E[W'\cdot Y' | X'=x]$, so $\tilde f$ is an explainable projection of $\bar f$.

We may now create a simple model family based on projection of fair models produced using optimal transport methods. The fair optimal transport model given by \citep{Chzhen2020} and projection following \citep{miroshnikovpatent} are defined by the following maps,
\begin{equation*}
f\mapsto\bar f(x,k;f):=\left (\sum_{k'=0}^{K-1} p_{k'} F^{[-1]}_{k'} \right)\circ F_{k}(f(x));\quad f\mapsto\tilde f(\cdot;\bar f)= \E[\bar f(X,G) | X=x]],
\end{equation*}
where $p_k=\mathbb{P}(G=k)$. The linear family \eqref{linearfam} can now be constructed using the optimal transport projection weight $w(\cdot; f)=f(\cdot) - \tilde f(\cdot;\bar f(\cdot,\cdot;f))$ to yield a one-parameter model family. Note that any model in this family $\cF(f_*;w)$ has the representation
\begin{equation*}
f(x;\theta) = f_*(x) - \theta ( f_*(x)-\tilde f(x)) = (1-\theta)f_*(x) + \theta \tilde f(x)\,,
\end{equation*}
which happens to include explainable projections of models $f_{\alpha}(x)=(1-\alpha)f_*(x) + \alpha \bar f(x,k,f_*)$. The partially repaired models $f_{\alpha}$ are discussed in \cite{Chzhen2020, Kwegyir-Aggrey2023} as yielding good bias performance trade-offs.

In our experiments, we estimate $\tilde f(\cdot;f)$ by training explainable CatBoost models. CatBoost models were trained with depth 8, learning rate $0.02$, and 10 early stopping rounds with $1000$ maximum iterations. Early stopping was performed based on the test dataset.

\subsection{Impact of dimensionality on predictor rescaling methods}
\label{app:beyondBPEF}
While Bayesian search and random search face challenges when optimizing in higher-dimensional parameter spaces \citep{frazier2018}, higher-dimensional parameter spaces may also encompass models with bias performance trade-offs superior to those available in lower-dimensional spaces. Thus, the empirical impact of constraining the predictors being rescaled in \cite{Miroshnikov2020b} is worthy of investigation. 

This question is the subject of Figure \ref{fig::BayesRealworld}. The first row displays results from the real world experiments presented in the main body of our paper. The second row displays new results from analogous experiments performed by allowing rescaling of all predictors (Note that we perform predictor rescaling of all five COMPAS predictors in our original experiments). To better understand how dimensionality impacts random/Bayesian search, results from all probed models are shown in addition to associated efficient frontiers. The more frequently the search procedure (i.e. Bayesian search, random search) finds models near the frontier, the more confident one can be that the space is being explored well.

On UCI Adult, using all predictors visibly reduces efficient frontier performance and broadly lowers the general quality of models probed by Bayesian search. On UCI Bank Marketing, results are somewhat different: Using all predictors does not meaningfully impact efficient frontier importance but does result in more models near the frontier in the low bias region. 

Given these observations, rescaling a smaller number of important features is, overall, a better bias mitigation approach for the datasets in this article. This justifies the approach to predictor rescaling presented in the main body of this text. Note however that, in general, the number of features appropriate for use in the predictor rescaling method is dataset dependent.

\begin{figure}[ht!]
\centering
\includegraphics[width=\textwidth]{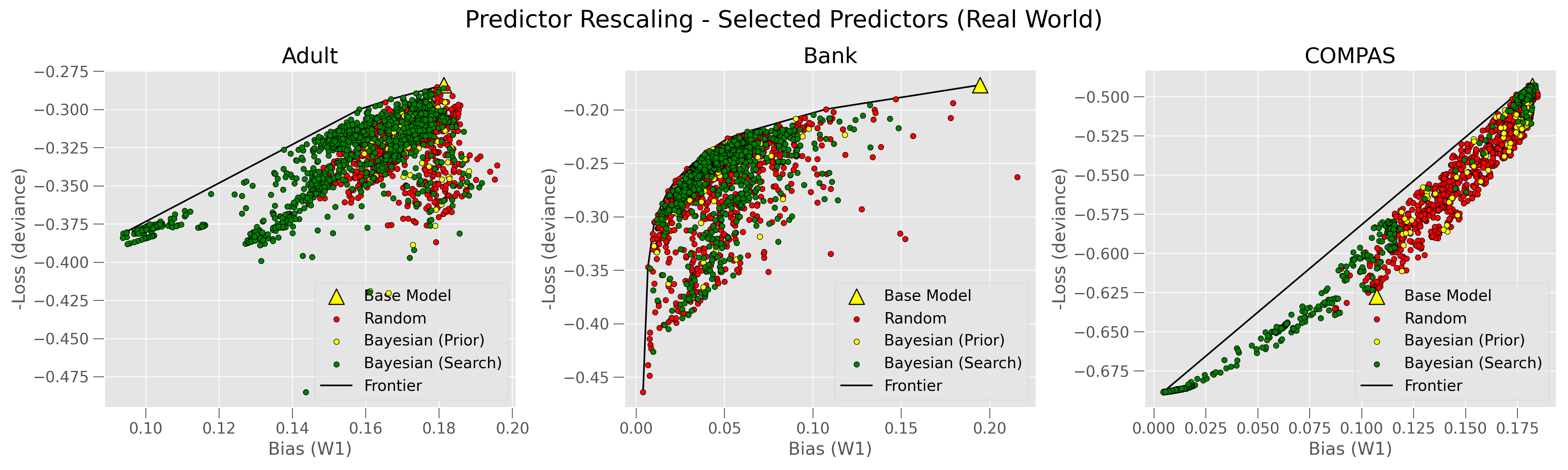}
\centering
\includegraphics[width=\textwidth]{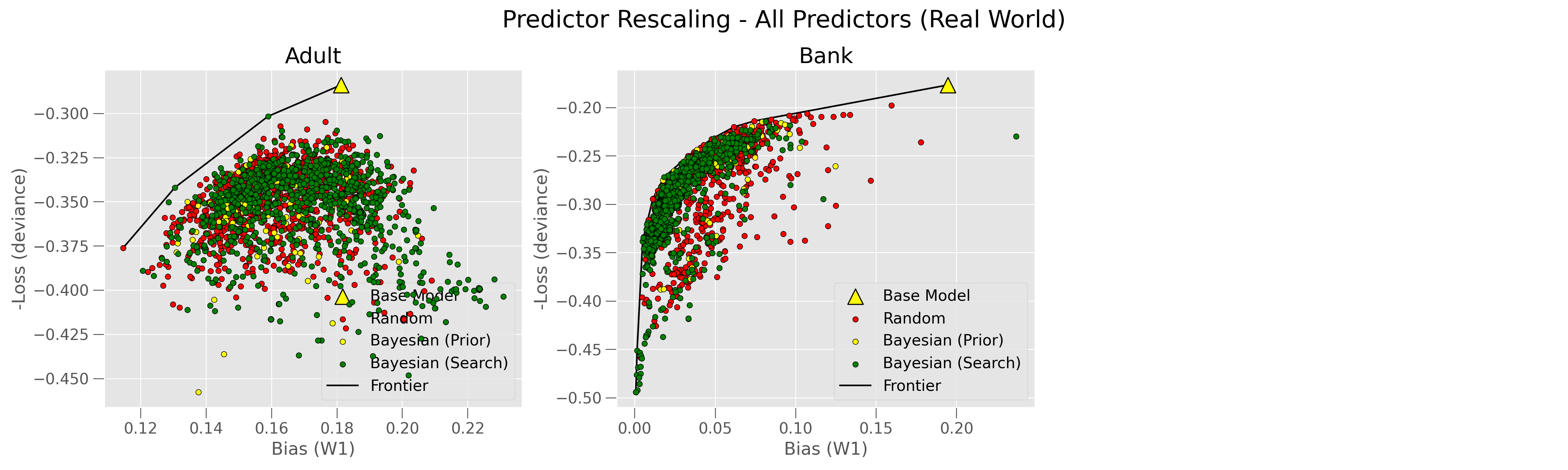}
\caption{\footnotesize  All models evaluated during Bayesian search and random search for the UCI Adult, UCI Bank Marketing, and COMPAS datasets. The first row displays the results for the predictor rescaling experiments presented earlier using selected predictors (for COMPAS, all predictors were selected). The second row displays results for analogous predictor rescaling experiments for UCI Adult and UCI Bank Marketing using all predictors. All results are presented on their respective test datasets.}\label{fig::BayesRealworld}
\end{figure}

\begin{figure}[ht!]
\centering
\includegraphics[width=\textwidth]{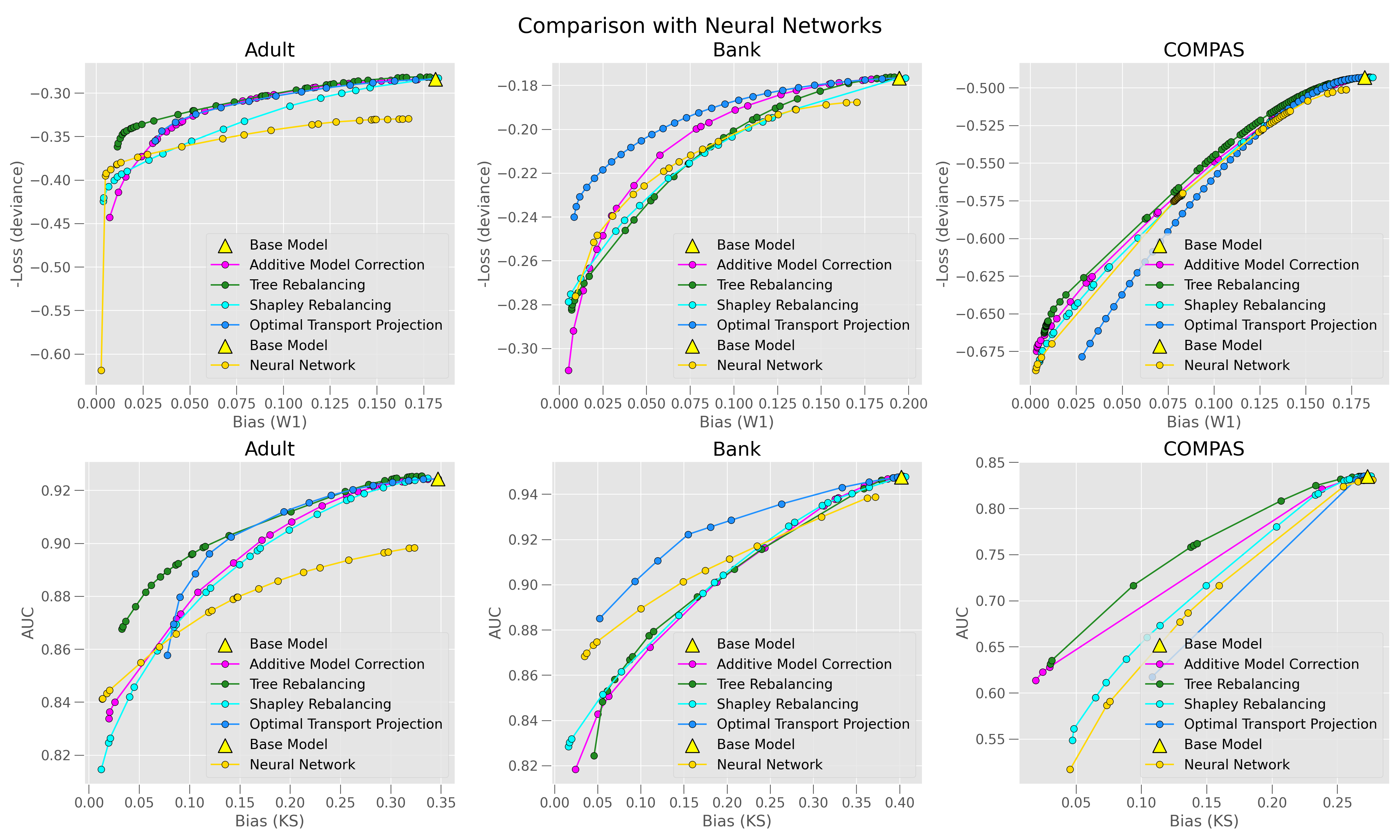}
\caption{Efficient frontiers for UCI Adult, Bank Marketing, and COMPAS datasets, evaluated on their test datasets.}
\label{fig::nnrealworld}
\end{figure}

\subsection{Bias mitigation using neural networks}
\label{app:nncomparison}
Many works have proposed to learn fairer models using gradient descent with a bias penalized loss function. For example, \cite{Jiang2020Jul} trained logistic models using a Wasserstein-based bias penalty. Similarly, \cite{Vogel2021} trained neural networks using a ROC-based bias penalty. While this work focuses on the application of gradient descent to explainable post-processing, we may employ similar procedures to train neural networks from scratch. Doing so allows optimization over larger families of functions but may pose challenges for explainability.

Figure \ref{fig::nnrealworld} compares the bias performance frontiers achieved by our various post-processing methods with the bias performance frontier achieved by training neural networks. Neural networks were trained using zero, one, two,  and three hidden layers with widths equal to the number of predictors in their corresponding training datasets. Depth was capped at three based on the observation that networks of depth three underperformed networks of depth two on UCI Adult and UCI Bank Marketing.

These results reveal some advantages and disadvantages of neural networks. On UCI Adult, neural networks are at a disadvantage relative to post-processing methods because the best performing neural network is far from the performance of the trained (CatBoost) base model used for post-processing. Thus, while one can reduce the bias of neural networks without large reductions in neural network performance, this does not fully compensate for the performance advantage of the CatBoost model.


In contrast, neural networks are better positioned on the UCI Bank Marketing dataset, achieving a best performance value lower than, but much closer to, that of the trained (CatBoost) base model. As a result, the neural network frontier bests several others (tree and Shapley rebalancing) outside the high performance regime. Furthermore, perhaps because linear models are more prone to score compression than non-linear models, neural networks beat all gradient descent based post-processing methods on the distribution invariant metric pair (AUC vs. KS) in the low bias regime.

Finally, on COMPAS, neural networks perform similarly to post-processing methods. When the number of features is small and the number of observations is limited, model complexity is not an important factor and most approaches may achieve similar results.

For more context on methodology, bias mitigation for neural networks was conducted using the Karush-Kuhn-Tucker approach
\citep{Karush,KuhnTucker} in a manner analogous to that used by our post-processing methodologies (using an adaptation of Algorithm \ref{GDOalgo} for parameters of non-linear models). As in \ref{app::perturbation}, we use minimization objectives of the form 
\[
(1-\omega_j)\cL + \omega_j\B,
\]
where $\cL$ is the binary cross-entropy and $\B$ is a modified version of \eqref{int_approx_est}, with $r_s(z)=\sigma(20z)$, $h(z)=z^2$, $\rho(t)=1$, $\Delta t=1/129$. We set $n_{perf}=n_{bias} = 1024$, learning rate $\alpha=0.01$, and $n_{epochs}=20$. Finally, we define $\omega_j = (C/20)\cdot j$ for $j\in \{0, 1,\dots, 20\}$, where $C$ is a suitable scaling constant.



\section{On optimal transport}\label{sec::kpminimization}

To formulate the transport problem we need to introduce the following notation. Let $\mathcal{B}(\RR^k)$ denote the $\sigma$-algebra of Borel sets. The space of all Borel probability measures on $\RR^k$ is denoted by $\mathscr{P}(\RR^k)$. The space of probability measure with finite $q$-th moment is denoted by 
\begin{equation*}\label{Borel_prob_meas}
\mathscr{P}_q(\RR^k)=\{ \mu\in \mathscr{P}(\RR^k): \int_{\RR^k} |x|^q d\mu(x) < \infty \}.
\end{equation*}

\begin{definition}[{\bf push-forward}]\label{def::push-forward}
\rm
\begin{itemize}

  \item []
  \item [(a)] Let $\PP$ be a probability measure on a measurable space $(\Omega,\mathcal{F})$. Let $X \in \RR^p$ be a random vector defined on $\Omega$. The push-forward probability distribution of $\PP$ by $X$ is defined by
  \[
  P_X(A):=\PP\big( \{ \omega \in \Omega: X(\omega) \in A \} \big).
  \]

  \item [(b)] Let $\mu \in \mathscr{P}(\RR^k)$ and $T:\RR^k \to \RR^m$ be Borel measurable, the pushforward of $\mu$ by $T$, which we denote by $T_{\#}\mu$ is the measure that satisfies
  \[
  (T_{\#}\mu)(B)=\mu\big(T^{-1}(B)\big), \quad B \subset \mathcal{B}(\RR^k).
  \]  

  \item [(c)] Given measure $\mu=\mu(dx_1,dx_2,...,dx_k) \in \mathscr{P}(\RR^k)$ we denote its marginals onto the direction $x_j$ by $(\pi_{x_j})_{\#}\mu$ and the cumulative distribution function by 
  \[
  F_{\mu}(a_1,a_2,\dots,a_k)=\mu( (-\infty,a_1]\times(-\infty,a_2] \dots, (-\infty,a_k])
  \]
\end{itemize}
\end{definition}

\begin{theorem}[\bf change of variable]\label{prop::changeofvar} Let $T:\RR^k \to \RR^m$ be Borel measurable map and $\mu \in \mathscr{P}(\RR)$. Let $g\in L^1(\RR^m, T_{\#}\mu)$. Then
\[
\int_{\RR^m} g(y) T_{\#}\mu(dy) = \int_{\RR^k} g(T(x)) \, \mu(dx).
\]
\end{theorem}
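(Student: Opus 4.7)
The statement is the standard change-of-variables formula for a pushforward measure, and the natural approach is the so-called ``standard machine'' of measure theory: verify the identity for indicator functions, extend it by linearity to simple functions, lift it to nonnegative measurable functions by monotone convergence, and finally reach general integrable $g$ by splitting into positive and negative parts.

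The plan is as follows. First, I would take $g = \mathbbm{1}_B$ for an arbitrary Borel set $B \in \mathcal{B}(\RR^m)$. Then the left-hand side is, by definition of the Lebesgue integral, equal to $(T_{\#}\mu)(B)$, while the right-hand side equals $\mu(T^{-1}(B))$. These agree directly from Definition \ref{def::push-forward}(b), which defines $T_{\#}\mu(B) := \mu(T^{-1}(B))$. One should also observe that $\mathbbm{1}_B \circ T = \mathbbm{1}_{T^{-1}(B)}$, which is exactly what makes the two expressions align. Second, by linearity of the integral, the identity extends to any nonnegative simple function $g = \sum_{i=1}^N c_i \mathbbm{1}_{B_i}$ with $c_i \geq 0$ and $B_i \in \mathcal{B}(\RR^m)$.

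Third, for a general nonnegative Borel measurable $g : \RR^m \to [0,\infty]$, I would invoke the standard approximation by an increasing sequence of nonnegative simple functions $g_n \uparrow g$ pointwise. Applying the identity to each $g_n$ and passing to the limit on both sides via the monotone convergence theorem yields the result for nonnegative $g$; note that $g_n \circ T \uparrow g \circ T$ pointwise on $\RR^k$, which is what allows the monotone convergence theorem to apply on the right-hand side. Fourth, for general $g \in L^1(\RR^m, T_{\#}\mu)$, I would write $g = g^+ - g^-$ with $g^\pm \geq 0$, apply the previous step to each of $g^+$ and $g^-$, observe that the integrability of $g$ with respect to $T_{\#}\mu$ transfers to integrability of $g \circ T$ with respect to $\mu$ (since both $\int g^\pm \, dT_{\#}\mu = \int g^\pm \circ T \, d\mu$ are finite), and then subtract.

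There is no substantial obstacle here; the only care needed is in the very first step, where one must unwind the definition of $T_{\#}\mu$ and verify that $\mathbbm{1}_B \circ T = \mathbbm{1}_{T^{-1}(B)}$, and in ensuring Borel measurability of $g \circ T$ at each stage (which follows from composition of Borel measurable maps). The result is classical and can also be cited directly from any standard measure theory reference, but the outline above gives a self-contained proof.
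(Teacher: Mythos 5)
Your proof is correct and complete. The paper's ``proof'' is just a citation to Shiryaev (p.~196), so there is no argument in the paper to compare against step by step; what you have written out is precisely the standard ``simple-function machine'' argument that the cited reference develops (indicators via the definition of $T_{\#}\mu$ and the identity $\mathbbm{1}_B\circ T=\mathbbm{1}_{T^{-1}(B)}$, linearity to simple functions, monotone convergence to nonnegative measurable $g$, and $g=g^+-g^-$ for $g\in L^1$). You also correctly note that integrability of $g$ with respect to $T_{\#}\mu$ transfers to integrability of $g\circ T$ with respect to $\mu$ as a byproduct of the nonnegative case, which is the only point where one needs to pause. So this is a faithful, self-contained expansion of what the paper delegates to a reference, not a genuinely different route.
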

\begin{proof}
See \citet[p.~196]{Shiryaev}.
\end{proof}

\begin{proposition}\label{prop::push_mu} Let $\mu \in \mathscr{P}(\RR)$ and $F_{\mu}^{[-1]}$ be the pseudo-inverse of its cumulative distribution function $F_{\mu}$, then $\mu=(F_{\mu}^{[-1]})_{\#} \lambda|_{[0,1]}$, where $\lambda$ is the Lebesgue probability measure on $\RR$.
\end{proposition}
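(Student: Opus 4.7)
The plan is to verify the identity $\mu = (F_\mu^{[-1]})_{\#}\lambda|_{[0,1]}$ by showing that the two probability measures assign the same mass to every set of the form $(-\infty,a]$, since such sets form a $\pi$-system generating $\mathcal{B}(\RR)$ and thus uniquely determine a Borel probability measure. On the left we simply have $\mu((-\infty,a]) = F_\mu(a)$ by definition of the CDF.

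For the right-hand side, by the definition of push-forward (Definition \ref{def::push-forward}(b)), I would write
\begin{equation*}
(F_\mu^{[-1]})_{\#}\lambda|_{[0,1]}\bigl((-\infty,a]\bigr) = \lambda|_{[0,1]}\bigl(\{u \in [0,1] : F_\mu^{[-1]}(u) \leq a\}\bigr).
\end{equation*}
The key step is then the standard Galois-type equivalence for the generalized inverse: for every $u \in (0,1)$ and $a \in \RR$,
\begin{equation*}
F_\mu^{[-1]}(u) \leq a \iff u \leq F_\mu(a).
\end{equation*}
This follows from the definition $F_\mu^{[-1]}(u) = \inf\{x : u \leq F_\mu(x)\}$ together with the right-continuity of $F_\mu$, which ensures the infimum is attained whenever $u \leq F_\mu(a)$. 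Granting this equivalence, the set $\{u \in [0,1] : F_\mu^{[-1]}(u) \leq a\}$ equals $[0,F_\mu(a)]$ (up to the endpoints $\{0,1\}$, which carry zero Lebesgue measure), whose Lebesgue measure is $F_\mu(a)$.

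The main (and only) obstacle is handling the equivalence above carefully, because the generalized inverse is neither continuous nor invertible in the classical sense when $F_\mu$ has flat spots or jumps. I would treat this as a small lemma: the forward implication uses the fact that $F_\mu^{[-1]}(u) \leq a$ implies there exist $x_n \downarrow F_\mu^{[-1]}(u)$ with $u \leq F_\mu(x_n)$ and $x_n \leq a + 1/n$, so right-continuity of $F_\mu$ gives $u \leq F_\mu(a)$ in the limit (after a short argument splitting the cases $F_\mu^{[-1]}(u) < a$ and $F_\mu^{[-1]}(u) = a$); the reverse implication is immediate from the definition of the infimum since $a \in \{x : u \leq F_\mu(x)\}$ when $u \leq F_\mu(a)$. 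Once this equivalence is in hand, equality of the CDFs yields equality of the measures and the proposition follows.
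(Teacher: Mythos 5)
Your proof is correct and self-contained. The paper does not actually supply an argument for this proposition; it defers entirely to \citet[p.~60]{Santambrogio2015}. What you have written out is the standard textbook proof that sits behind that citation: identify both measures through their CDFs via the $\pi$-system $\{(-\infty,a]\}$, reduce the right-hand side to $\lambda\bigl(\{u\in(0,1):F_\mu^{[-1]}(u)\le a\}\bigr)$, and invoke the non-strict Galois equivalence $F_\mu^{[-1]}(u)\le a\iff u\le F_\mu(a)$, whose forward direction rests on right-continuity of $F_\mu$ and whose reverse direction is immediate from the definition of the infimum. Two small remarks. First, the paper already records an equivalent strict-inequality form of this equivalence as \eqref{Galois_ineq}, namely $t<F_\mu^{[-1]}(q)\iff F_\mu(t)<q$; your version is its contrapositive, so you could have cited that display instead of re-deriving the lemma. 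Second, the case split on $F_\mu^{[-1]}(u)<a$ versus $F_\mu^{[-1]}(u)=a$ is not really needed: right-continuity of $F_\mu$ already forces $u\le F_\mu\bigl(F_\mu^{[-1]}(u)\bigr)$ (i.e.\ the infimum defining $F_\mu^{[-1]}(u)$ is attained for $u\in(0,1)$), and then monotonicity of $F_\mu$ gives $u\le F_\mu(a)$ in one step. Neither point is an error; both are stylistic streamlinings of an otherwise sound argument.
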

\begin{proof}
See \citet[p.~60]{Santambrogio2015}.
\end{proof}

\begin{definition}[\bf Kantorovich problem on $\RR$]\label{def::transport-cost}
 Let $\mu_1,\mu_2 \in \mathscr{P}(\RR)$ and $c(x_1,x_2) \geq 0$ be a cost function.  Consider the problem 
 \[
   \inf_{\gamma \in \Pi(\mu_1,\mu_2) } \Bigg\{ \int_{\RR^2} c(x_1,x_2) \gamma(dx_1,dx_2) \bigg\} =: \mathscr{T}_c(\mu_1,\mu_2)
\]
where  $\Pi(\mu_1,\mu_2)=\{ \gamma \in \mathscr{P}(\RR^2): (\pi_{x_j})_{\#}\gamma = \mu_j \}$
denotes the set of transport plans between $\mu_1$ and $\mu_2$, and $\mathscr{T}_c(\mu_1,\mu_2)$ denotes the minimal cost of transporting $\mu_1$ into $\mu_2$.
\end{definition}

\begin{definition} Let $q \geq 1$ and let $d(\cdot,\cdot)$ be a metric on $\RR^n$. Let the set 
\[\mathscr{P}_q(\RR^n;d)=\bigg\{\mu \in \mathscr{P}(\RR^n): \int d(x,x_0)^q d\mu(x) < \infty \bigg\}
\]
where $x_0$ is any fixed point. The Wasserstein distance $W_q$ on $\mathscr{P}_q(\RR^n;d)$ is defined by
\[
\begin{aligned}
  W_q(\mu_1,\mu_2;d) := \mathscr{T}^{1/q}_{d(x_1,x_2)^q}(\mu_1,\mu_2), \quad \mu_1,\mu_2 \in \mathscr{P}_q(\RR^n;d)
\end{aligned}
\]
where
\[
\mathscr{T}_{d(x_1,x_2)^q}(\mu_1,\mu_2) = \inf_{\gamma \in \mathscr{P}(\RR^2)} \bigg\{ \int_{\RR^2} d(x_1,x_2)^q d\gamma, \quad \gamma \in \Pi(\mu_1,\mu_2) \bigg\}.
\]
We drop the dependence on $d$ in the notation of the Wasserstein metric when $d(x,y)=|x-y|$.
\end{definition}

The following theorem contains well-known facts established in the texts such as \citet{Shorack1986,Villani2003,Santambrogio2015}.
\begin{theorem}\label{thm::transportprop} Let $\mu_1,\mu_2 \in \mathscr{P}(\RR)$. Let $c(x_1,x_2)=h(x-y) \geq 0$ with $h$ convex and let
\begin{equation*}
\pi^* := (F^{-1}_{\mu_1},F^{-1}_{\mu_2})_{\#} \lambda|_{[0,1]} \in \mathscr{P}(\RR^2)
\end{equation*}
where $\lambda|_{[0,1]}$ denotes the Lebesgue measure restricted to $[0,1]$. Suppose that $\mathscr{T}_c(\mu_1,\mu_2)<\infty$. Then

\begin{itemize}
  \item [(1)] $\pi^* \in \Pi(\mu_1,\mu_2)$ and $F_{\pi^*}=\min(F(a),F(b))$.

  \item [(2)] $\pi^*$ is an optimal transport plan that is
  \[
  \mathscr{T}_c(\mu_1,\mu_2)=\int_{\RR^2} h(x_1-x_2) \, d\pi^*(x_1,x_2).
  \]

  \item [(3)] $\pi^*$ is the only monotone transport plan, that is, it is the only plan that satisfies the property 
  \[
  (x_1,x_2),(x_1',x_2')\in {\rm supp(\pi^*)} \subset \RR^2\quad   x_1 < x_1' \quad \Rightarrow \quad x_2 \leq x_2'.
  \]

  \item [(4)] If $h$ is strictly convex then $\pi^*$ is the only optimal transport plan. 

  \item [(5)] If $\mu_1$ is atomless, then $\pi^*$ is determined by the monotone map $T^*=F_{\mu_2}^{[-1]}\circ F_{\mu_1}$, called an optimal transport map. Specifically, $\mu_2=T^*_{\#}\mu_1$ and hence $\pi^* = (I,T^*)_{\#}\mu_1$, where $I$ is the identity map. Consequently,
\[
\int_{\RR^2} h(x_1-x_2) \, d\pi^*(x_1,x_2) = \int_{\RR} h(x_1-T^*(x_1)) d\mu_1(x_1) = \E[X_1-T^*(X_1)], \quad \mu_1=P_{X_1}.
\]

\item [(6)] For $q \in [1,\infty)$, we have
\[
\begin{aligned}
{W_q}^q(\mu_1,\mu_2) &= \mathscr{T}_{|x_1-x_2|^q}(\mu_1,\mu_2) = \int_{\RR^2} |x_1-x_2|^q d\pi^*(x_1,x_2) \\
&= \int_0^1 |F^{[-1]}_{\mu_1}(p)-F^{[-1]}_{\mu_2}(p)|^q dp < \infty.
\end{aligned}
\]

\end{itemize}
\end{theorem}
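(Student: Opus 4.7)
The plan is to establish the six parts in sequence, leaning on the quantile representation of probability measures on $\RR$ and the classical switching inequality for convex costs.

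For part (1), I would invoke Proposition \ref{prop::push_mu}, which gives $\mu_i = (F_{\mu_i}^{[-1]})_{\#}\lambda|_{[0,1]}$, so the one-dimensional marginals of $\pi^* = (F_{\mu_1}^{[-1]}, F_{\mu_2}^{[-1]})_{\#}\lambda|_{[0,1]}$ are exactly $\mu_1$ and $\mu_2$; this gives $\pi^* \in \Pi(\mu_1,\mu_2)$. For the joint CDF, I would apply the standard equivalence $F^{[-1]}(p) \leq a \iff p \leq F(a)$ (valid $\lambda$-a.e.) to obtain
\[
F_{\pi^*}(a,b) = \lambda\bigl(\{p : F_{\mu_1}^{[-1]}(p)\leq a,\; F_{\mu_2}^{[-1]}(p)\leq b\}\bigr) = \min(F_{\mu_1}(a), F_{\mu_2}(b)).
\]

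For (2) and (3), the core tool is the rearrangement inequality for convex $h$: whenever $a < c$ and $b < d$, one has $h(a-b) + h(c-d) \leq h(a-d) + h(c-b)$. I would first verify that $\pi^*$ is monotone by construction, since $F_{\mu_1}^{[-1]}$ and $F_{\mu_2}^{[-1]}$ are non-decreasing, so any two points in $\mathrm{supp}(\pi^*)$ satisfy the required ordering. Uniqueness of the monotone plan (item (3)) follows because a monotone plan with prescribed marginals is determined by its joint CDF $\min(F_{\mu_1}(a),F_{\mu_2}(b))$; two monotone plans must share this CDF and hence coincide. Optimality in (2) is then obtained via $c$-cyclical monotonicity: the pairwise switching inequality upgrades to a cyclic version for convex cost on $\RR$, and the general theory (see \citet{Villani2003,Santambrogio2015}) asserts that a transport plan with $c$-cyclically monotone support is optimal for lower semicontinuous costs.

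For (4), if $h$ is strictly convex the switching inequality is strict whenever $a<c$ and $b<d$, so any non-monotone plan can be strictly improved; combined with (3), this forces the optimizer to be unique and equal to $\pi^*$. For (5), atomlessness of $\mu_1$ makes $F_{\mu_1}$ continuous, so $F_{\mu_1}(X_1) \sim \lambda|_{[0,1]}$ when $X_1 \sim \mu_1$, and hence $T^*_{\#}\mu_1 = (F_{\mu_2}^{[-1]})_{\#}\lambda|_{[0,1]} = \mu_2$; the plan $(I,T^*)_{\#}\mu_1$ is monotone with marginals $\mu_1,\mu_2$, and so equals $\pi^*$ by (3). The integral identity then follows from Theorem \ref{prop::changeofvar}. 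Finally (6) is a direct specialization: $h(z)=|z|^q$ is convex for $q\geq 1$, so (2) gives optimality of $\pi^*$, and Theorem \ref{prop::changeofvar} applied to $\pi^* = (F_{\mu_1}^{[-1]},F_{\mu_2}^{[-1]})_{\#}\lambda|_{[0,1]}$ yields $W_q^q(\mu_1,\mu_2) = \int_0^1 |F_{\mu_1}^{[-1]}(p) - F_{\mu_2}^{[-1]}(p)|^q\,dp$.

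The main obstacle will be item (2), where the pointwise switching inequality must be promoted to a global optimality statement. The cleanest route is through $c$-cyclical monotonicity plus the general duality/optimality criterion, but one must confirm the hypotheses (in particular that $\mathscr{T}_c(\mu_1,\mu_2) < \infty$ allows the use of this criterion and that $c(x,y)=h(x-y)$ is l.s.c., which follows from continuity of convex $h$ on $\RR$). An alternative is a direct coupling argument comparing $\pi^*$ with a competitor via Fubini on the product of quantile representations, but it requires delicate handling at atoms of $\mu_1$ or $\mu_2$ — which is precisely the reason (5) needs atomlessness of $\mu_1$ to express $\pi^*$ as the Monge coupling $(I,T^*)_{\#}\mu_1$.
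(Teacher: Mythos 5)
The paper does not prove Theorem~\ref{thm::transportprop}: it is presented as a collection of well-known one-dimensional optimal-transport facts and the paper simply cites \citet{Shorack1986}, \citet{Villani2003}, and \citet{Santambrogio2015}. Your sketch is a correct reconstruction of the standard textbook argument, and each of the six parts rests on the right ingredient: the quantile pushforward (Proposition~\ref{prop::push_mu}) for the marginals and the Galois equivalence $F^{[-1]}(p)\leq a \iff p\leq F(a)$ for the Fr\'echet--Hoeffding form of the CDF in (1); the submodularity/switching inequality $h(a-b)+h(c-d)\leq h(a-d)+h(c-b)$ for $a<c$, $b<d$ to drive (2)--(4); continuity of $F_{\mu_1}$ to get $F_{\mu_1}(X_1)\sim\lambda|_{[0,1]}$ in (5); and the change-of-variables theorem (Theorem~\ref{prop::changeofvar}) for the integral identities in (5)--(6). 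The one place that needs care is exactly the one you flag: promoting the pairwise switching inequality to global optimality in (2). Your proposed route via $c$-cyclical monotonicity of ${\rm supp}(\pi^*)$ and a general optimality criterion for l.s.c.\ costs does work here (convex finite $h$ is continuous and $\mathscr{T}_c<\infty$ is assumed, so the relevant hypotheses for the cyclical-monotonicity $\Rightarrow$ optimality implication are met), but it is heavier machinery than the one-dimensional problem demands. The route taken in the references the paper cites is the direct one: write $\int c\,d\gamma$ using Fubini and the layer-cake representation, apply the pointwise inequality $F_\gamma(a,b)\leq\min(F_{\mu_1}(a),F_{\mu_2}(b))=F_{\pi^*}(a,b)$ valid for any $\gamma\in\Pi(\mu_1,\mu_2)$, and integrate against the (positive) second mixed derivative of $h(x_1-x_2)$, which gives optimality of $\pi^*$ in one line without invoking duality. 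Either way the conclusion holds; the direct coupling/Fubini argument is more elementary and self-contained, while your cyclical-monotonicity argument generalizes painlessly to higher dimensions.
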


\begin{definition}
  Given a set of probability measures $\{ \mu_j \}_{j=1}^J \subset \mathscr{P}_2(\RR^n)$, with $J \geq 1$, with finite second moments, and weights $\{ \omega_j \}_{j=1}^J$, the 2-Wasserstein barycenter is the minimizer of the map $\nu \to \sum_{j\in J} \omega_j W_2^2(\nu, \mu_j).$
\end{definition}

\section{Model bias with multiple protected attributes}\label{sec::multattr}

In this section, following the formulation in \cite{Miroshnikov2020}, we generalize the notion of model bias to the case of protected attributes with multiple classes. To this end, let $f$ be a model, $X=(X_1,\dots,X_n)$ the predictors, and $G \in \{0,\dots,K-1\}$ a protected attribute, with $G=0$ denoting the non-protected (majority) class. Suppose that $\Bias(f|X, G \in \{k,m\})$ is a model bias metric that estimates the distance (or divergence) between any two subpopulation distributions $P_{f(X)|G=k}$ and $P_{f(X)|G=m}$. In that case,  the total weighted model bias is defined to be 
\[
\Bias^{(w)}(f|X,G) = \sum_{k=1}^{K-1} w_k \Bias(f|X,G \in \{0,k\}),
\] 
where $w_k \geq 0$ are nonnegative weights for $k \in \{1,\dots,K-1\}$. For instance, the  weights can be chosen as $w_k=\PP(G=k)$.

Suppose now that subpopulation distributions $P_{f(X)|G=k}$, $k \in \{0,\dots,K-1\}$, have densities and that the pairwise model bias metric is defined to be the $q$-Wasserstein distance, with $q \geq 1$. In that case, the total model bias can be defined using a weighted sum of pairwise distances (raised to the $q$-th power) between each subpopulation distribution  $P_{f(X)|G=k}$ and their $q$-Wasserstein barycenter $\mu^{(q)}_{B}$:
\[
\Bias_{W^q_q}(f|X,G) = \sum_{k=0}^{K-1} p_k W_q^q(P_{f(X)|G=k},\mu^{(q)}_{B}),
\]
where $p_k=\PP(G=k)$, and where the barycenter \cite{AguehCarlier2011,Brizzi2025} is defined by
\begin{equation}\label{def::q-barycenter}
\mu^{(q)}_{B}=\argmin_{\nu \in \mathscr{P}_q(\RR^n)} \sum_{k=0}^{K-1}p_k W^q_q(\nu,\mu_B^{(q)}).
\end{equation}

It is worth noting that for $q>1$, there exists a unique minimizer of the right-hand side of \eqref{def::q-barycenter}, while for $q=1$, uniqueness fails. For this reason, $\mu_B^{(1)}$ can be obtained as the limit of $\mu_B^{(q)}$ as $q \to 1^+$; for details, see \cite[Theorem 5.1]{Brizzi2025}.

\section{Relaxation of distributions}

\begin{definition}\label{relax_family}
Let $Z$ be a random variable and $F_Z$ be its CDF. Suppose $\{r_s(t)\}_{s \in \RR_+}$ is a family of continuous functions such that the map $z \mapsto r_s(z)$ is non-decreasing and globally Lipschitz, and satisfies $r_s(z) \to 0$ as $z \to -\infty$ and  $r_s(z)\to 1$ as $z \to \infty$. The family of relaxed distributions associated with $\{r_s\}_{s \in \RR_+}$ is then defined by
\[
F_Z^{(s)}(t):=1-\E[r_s(Z-t)], \quad s \in \RR_+.
\]
\end{definition}

In what follows, we let $H(z):=\1_{\{z>0\}}$ be the left-continuous version of the Heaviside function.

\begin{lemma}\label{lmm::relax_stat}
Let $Z$ be a random variable. Let $r_s$ and $F_Z^{(s)}$, $s\in \RR_+$, be as in Definition \ref{relax_family}.

\begin{itemize}

\item [$(i)$] For $s>0$, $F_Z^{(s)}$ is a CDF, which is Lipschitz continuous on $\RR$, with $\Lip (F^{(s)}_Z) \leq \Lip(r_s)$. Furthermore, its pointwise derivative, which exists $\lambda$-a.s., is given by
\begin{equation*}
\frac{d}{dt} F^{(s)}_k(t) = \E\bigg[\frac{dr_s}{dz}(Z-t)\bigg] \geq 0, \quad \text{$\lambda$-a.s.}
\end{equation*}

\item [$(ii)$] If $\lim_{s \to \infty} r_s(z)=H(z)$ for all $z \in \RR$, then
\begin{equation}\label{tot_conv}
\lim_{s \to \infty} F_Z^{(s)}(t) = F_Z(t), \quad \forall t \in \RR.
\end{equation}

\item [$(iii)$] If $\lim_{s \to \infty} r_s(z)=H(z)$ for all $z \in \RR \setminus \{0\}$ and $\lim_{s \to \infty} r_s(0)=r_0>0=H(0)$, then
\[
\lim_{s \to \infty} F_Z^{(s)}(t) = F_Z(t) - r_0 \cdot \PP(Z=t), \quad \forall t \in \RR,
\]
in which case, the limit \eqref{tot_conv} holds if and only if $t \in \RR$ is a point of continuity of $F_Z$.

\end{itemize}

\begin{proof}
The statement $(i)$ follows directly from the definition of $r_s$, Lipschitz continuity, and the dominated convergence theorem. 

Suppose $\lim_{s \to \infty} r_s(z)=H(z)$ for all $z \in \RR$. Then 
\[
\lim_{s \to \infty} r_s(Z(\omega)-t) = H(Z(\omega)-t) = \1_{\{Z(\omega) > t\}}, \quad \forall \omega \in \Omega, \quad \forall t \in \RR,
\]
and hence, since $0 \leq r_s \leq 1$, by the dominated convergence theorem \cite{Royden2010}, we obtain
\begin{equation*}\label{relax_conv}
\lim_{s \to \infty} \E[r_s(Z-t)] = \P(Z>t) = 1 - F_Z(t), \quad \forall t \in \RR.
\end{equation*}
This proves $(ii)$.

Suppose $r_s(0) \to r_0>0$ as $s \to \infty$. Let $\Omega_{t_0}=\{\omega: Z(\omega)=t_0\}$. Then for any $\omega \in \Omega \setminus \Omega_t$, we must have $\lim_{s \to \infty} r_s(Z(\omega)-t_0)=\1_{\{Z(\omega) > t_0\}}$. Then, by the dominated convergence theorem, we have
\begin{equation*}\label{spec_relax}
\begin{aligned}
\lim_{s \to \infty} \E[r_s(Z-t_0)] &=\lim_{s \to \infty} \E[r_s(Z-t_0) \1_{\Omega_{t_0}} ] + \lim_{s \to \infty} \E[r_s(Z-t_0) \1_{(\Omega \setminus\Omega_{t_0})}]  \\
& =\lim_{s \to \infty} \E[r_s(0) \1_{\Omega_{t_0}} ] + \lim_{s \to \infty} \E[r_s(Z-t) \1_{(\Omega \setminus\Omega_{t_0})}] \\
& = r_0 \cdot \PP(\Omega_{t_0}) + \E[\1_{\{Z>t_0\}} \1_{(\Omega \setminus \Omega_{t_0})}] = r_0 \cdot \PP(\Omega_{t_0}) + 1-F_Z(t_0).
\end{aligned}
\end{equation*}

Thus, $\lim_{s \to \infty} F_Z^{(s)}(t_0) = F_Z(t_0) - r_0 \PP(\Omega_{t_0})$. Given that $r_0>0$, we conclude that \eqref{tot_conv} holds at $t=t_0$ if and only if $\PP(\Omega_{t_0})=0$, which holds if and only if $t=t_0$ is a point of continuity of $F_Z$. This proves $(iii)$.
\end{proof}

\end{lemma}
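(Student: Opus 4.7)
The plan is to establish each item by exploiting monotonicity, Lipschitz control, and the dominated convergence theorem, with the derivative identity in (i) requiring a little extra care through Rademacher's theorem.

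For part (i), I would first verify the CDF properties of $F^{(s)}_Z$ directly from its definition. Monotonicity in $t$ follows from the fact that $r_s$ is non-decreasing: for $t_1 < t_2$, $r_s(Z-t_1) \geq r_s(Z-t_2)$ pointwise, so taking expectations gives $F^{(s)}_Z(t_1) \leq F^{(s)}_Z(t_2)$. The limits $F^{(s)}_Z(-\infty)=0$ and $F^{(s)}_Z(+\infty)=1$ follow from the limits of $r_s$ at $\pm\infty$ together with the dominated convergence theorem (DCT), using the envelope $0 \leq r_s \leq 1$. The Lipschitz estimate is then immediate:
\[
|F^{(s)}_Z(t_1)-F^{(s)}_Z(t_2)| \leq \E\bigl[|r_s(Z-t_1)-r_s(Z-t_2)|\bigr] \leq \Lip(r_s)\,|t_1-t_2|,
\]
which also gives continuity.

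For the derivative formula, I would form the difference quotient
\[
\frac{F^{(s)}_Z(t+h)-F^{(s)}_Z(t)}{h} = \E\!\left[\frac{r_s(Z-t)-r_s(Z-t-h)}{h}\right]
\]
and pass $h \to 0$ inside the expectation using DCT, dominated by $\Lip(r_s)$. Pointwise, the integrand converges to $\frac{dr_s}{dz}(Z-t)$ whenever $Z-t$ is a point of differentiability of $r_s$. The main obstacle here is a measure-theoretic one: $r_s$ (being Lipschitz) is differentiable only off a Lebesgue-null set $N$, and $P_Z$ could charge $t-N$. I would address this via a Fubini argument showing that $\{(t,z): z-t \in N\}$ is Lebesgue-null in $\RR^2$, so that for Lebesgue-a.e.\ $t$ the non-differentiability set $\{z-t \in N\}$ is $P_Z$-null — giving the identity $\lambda$-a.s. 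Non-negativity of the derivative follows since $r_s'\geq 0$ a.e.

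For part (ii), I would apply DCT directly: the hypothesis $r_s(z)\to H(z)$ for all $z \in \RR$ yields $r_s(Z-t)\to \mathbf{1}_{\{Z>t\}}$ pointwise on $\Omega$, and the uniform bound $0 \leq r_s \leq 1$ allows passing the limit inside the expectation, giving $\lim_s \E[r_s(Z-t)] = \P(Z>t) = 1-F_Z(t)$.

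For part (iii), I would split $\Omega$ along the atom $\{Z=t\}$ and its complement, applying DCT on each piece. On $\{Z=t\}$, $r_s(Z-t)=r_s(0) \to r_0$; on $\{Z\neq t\}$, $Z-t \in \RR\setminus\{0\}$ so $r_s(Z-t)\to \mathbf{1}_{\{Z>t\}}$. Summing gives $\lim_s \E[r_s(Z-t)] = r_0\,\P(Z=t) + \P(Z>t)$, and rearranging recovers the claimed identity. The characterization of equality in \eqref{tot_conv} is then immediate: since $r_0 > 0$, one has $\lim_s F^{(s)}_Z(t)=F_Z(t)$ iff $\P(Z=t)=0$, which by standard properties of CDFs is equivalent to continuity of $F_Z$ at $t$.
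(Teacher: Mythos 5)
Your proof follows the same route as the paper's for all three parts: dominated convergence is the engine throughout, and the key observation in (iii) is the split of $\Omega$ along the atom $\{Z=t\}$. The paper supplies no details for part (i), asserting that it ``follows directly,'' so your elaboration there is a genuine addition — and it is essentially correct, with one technical imprecision worth flagging. In the Fubini step you claim that $\{(t,z): z-t \in N\}$ being Lebesgue-null in $\RR^2$ implies that the slice $\{z : z-t \in N\}$ is $P_Z$-null for $\lambda$-a.e.\ $t$. That implication fails in general: Fubini applied to Lebesgue measure on $\RR^2$ controls only the Lebesgue measure of slices, and $P_Z$ may have a singular component. The fix is to run Tonelli with the product measure $\lambda \otimes P_Z$: integrate first in $t$ for each fixed $z$, where the slice $\{t: z-t\in N\}=z-N$ is $\lambda$-null, conclude $(\lambda\otimes P_Z)\bigl(\{(t,z):z-t\in N\}\bigr)=0$, and then Fubini gives $P_Z\bigl(\{z:z-t\in N\}\bigr)=0$ for $\lambda$-a.e.\ $t$. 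With that correction, your argument closes cleanly: the difference quotient is uniformly dominated by $\Lip(r_s)$, so DCT gives the derivative formula for $\lambda$-a.e.\ $t$, and the Lipschitz function $F^{(s)}_Z$ is itself differentiable $\lambda$-a.e.\ by Rademacher, so the two agree. Parts (ii) and (iii) match the paper's proof step for step.
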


\begin{lemma}
Let $Z_0,Z_1$ be random variables with $F_{Z_0},F_{Z_1}$ denoting their CDFs. Let $r_s$ and $F_{Z_k}^{(s)}$, $k \in \{0,1\}$, $s\in \RR_+$, be as in Definition \ref{relax_family}. Suppose $\mu$ is a Borel probability measure, and $c(\cdot,\cdot)$ is continuous on $[0,1]^2$.

\begin{itemize}

\item [$(i)$] If $\lim_{s \to \infty} r_s(z)=H(z)$ for all $z \in \RR$, then
\begin{equation}\label{bias_relax_lim}
\int c(F_{Z_0}(t),F_{Z_1}(t)) \, \mu(dt) = \lim_{s \to \infty} \int c \big(F_{Z_0}^{(s)}(t),F_{Z_1}^{(s)}(t) \big) \, \mu(dt).
\end{equation}

\item [$(ii)$] Suppose $\lim_{s \to \infty} r_s(z)=H(z)$ for all $z \in \RR \setminus \{0\}$ and $\lim_{s \to \infty} r_s(0)=r_0>0=H(0)$. Then \eqref{bias_relax_lim} holds if $\mu$ has the following property: $\mu(\{z_*\})=0$ whenever $z_* \in A_0 \cup A_1$, where $A_0$ and $A_1$ are sets containing atoms of $P_{Z_0}$ and $P_{Z_1}$, respectively.
\end{itemize}
\end{lemma}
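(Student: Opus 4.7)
The plan is to apply the dominated convergence theorem to the sequence of integrands
\[
g_s(t) := c\big(F_{Z_0}^{(s)}(t), F_{Z_1}^{(s)}(t)\big), \qquad g(t) := c\big(F_{Z_0}(t), F_{Z_1}(t)\big),
\]
against the probability measure $\mu$. A uniform $\mu$-integrable dominant is immediate: since each $F_{Z_k}^{(s)}$ takes values in $[0,1]$ and $c$ is continuous on the compact square $[0,1]^2$, one has $|g_s(t)| \leq M$ with $M := \max_{[0,1]^2} c < \infty$, and the constant $M$ is trivially $\mu$-integrable. The problem therefore reduces to establishing pointwise convergence $g_s \to g$ $\mu$-almost everywhere and invoking DCT.

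For part $(i)$, the hypothesis $r_s(z)\to H(z)$ for every $z\in\RR$ allows us to apply Lemma~\ref{lmm::relax_stat}$(ii)$ separately to $Z_0$ and $Z_1$, yielding $F_{Z_k}^{(s)}(t)\to F_{Z_k}(t)$ for every $t\in\RR$ and $k\in\{0,1\}$. Joint continuity of $c$ on $[0,1]^2$ then gives $g_s(t)\to g(t)$ everywhere, and DCT closes the argument.

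For part $(ii)$, I would first note that any CDF has at most countably many discontinuities, so the atom sets $A_0,A_1$ of $P_{Z_0},P_{Z_1}$ are at most countable; hence $D := A_0\cup A_1$ is countable. For any $t\in\RR\setminus D$, both $F_{Z_0}$ and $F_{Z_1}$ are continuous at $t$, so Lemma~\ref{lmm::relax_stat}$(iii)$ gives $F_{Z_k}^{(s)}(t)\to F_{Z_k}(t)$ for $k\in\{0,1\}$, and continuity of $c$ yields $g_s(t)\to g(t)$. The hypothesis $\mu(\{z_*\})=0$ for every $z_*\in D$ together with countable subadditivity applied to the countable set $D$ gives $\mu(D)=0$, so pointwise convergence holds $\mu$-a.e., and DCT delivers \eqref{bias_relax_lim}.

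The main subtlety is the interplay in $(ii)$ between the failure of the relaxation limit at atoms of $P_{Z_k}$ (where Lemma~\ref{lmm::relax_stat}$(iii)$ introduces the spurious $-r_0\cdot\PP(Z_k=t)$ correction) and the requirement that $\mu$ not detect these bad points; the stated hypothesis is precisely calibrated to neutralize this obstruction through countable additivity. Everything else is a routine application of DCT with a constant dominant.
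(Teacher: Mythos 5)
Your proof is correct and follows essentially the same route as the paper: apply Lemma~\ref{lmm::relax_stat}$(ii)$ (resp.\ $(iii)$ off the countable set $A_0\cup A_1$) for pointwise convergence of the CDFs, compose with continuity of $c$, and conclude by dominated convergence with the constant dominant $\max_{[0,1]^2}c$. The only difference is a slightly more explicit articulation of the domination and of why $A_0\cup A_1$ is countable, both of which the paper handles in one line.
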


\begin{proof}
Suppose $\lim_{s \to \infty} r_s(z)=H(z)$ for all $z \in \RR$. Since $c$ is continuous on $[0,1]^2$, by Lemma \ref{lmm::relax_stat}$(ii)$, we have
\[
\lim_{s \to \infty}c(F^{(s)}_{Z_0}(t),F^{(s)}_{Z_1}(t)) = c(F_{Z_0}(t),F_{Z_1}(t))\]
for all $t \in \RR$. Then, since $c$ is bounded on $[0,1]^2$, by the dominated convergence theorem, we obtain \eqref{bias_relax_lim}. This gives $(i)$. 

If $\mu(\{z_*\})=0$ whenever $z \in A_0 \cup A_1$, then $\mu(A_0 \cup A_1)=0$, as $A_0$ and $A_1$ are at most countable. Hence by Lemma \ref{lmm::relax_stat}$(iii)$, we get
$\lim_{s \to \infty}c(F^{(s)}_{Z_0}(t),F^{(s)}_{Z_1}(t)) = c(F_{Z_0}(t),F_{Z_1}(t))$ $\mu$-almost surely. Hence, using  the dominated convergence theorem again, we obtain \eqref{bias_relax_lim}. This establishes $(ii)$.

\end{proof}

\subsection {\bf Proof of Theorem \ref{thm::MCestimator}}

\begin{proof}\label{proof::MCestimator}
In what follows we suppress the dependence on $\theta$, assuming it is fixed. 

Fix $s>0$. To show that 
\[
\displaystyle \hat{I}_{m_0,m_1,T}^{(s)}:=\frac{1}{T} \sum_{j=1}^T h(\hat{\bf B}_{s,m_0,m_1}({\bf t}^{(j)}))
\]
converges in the mean squared sense to 
\[
\int h(B_s(t)) \, \mu(dt)=\E_{t \sim \mu} [ h(B_s(t))],\]
we require
  \begin{equation*}
   \lim_{m_0,m_1,T\to \infty}\E[(\mathcal{E}^{(s)}_{m_0,m_1,T})^2]=0, \quad \text{where}  \quad \mathcal{E}^{(s)}_{m_0,m_1,T}:=\hat{I}^{(s)}_{m_0,m_1,T} - \E_{t \sim \mu} [ h(B_s(t))].
  \end{equation*}

To this end, define $\displaystyle I_{T}^{(s)}:=\frac{1}{T} \sum_{j=1}^T h(B_s({\bf t}^{(j)}))$. We then have
  \begin{align*}
   \E[(\mathcal{E}^{(s)}_{m_0,m_1,T})^2] &= \E[(\hat{I}^{(s)}_{m_0,m_1,T} - \E_{t \sim \mu} [ h(B_s(t))])^2]\\
   &\le 2\left(\E[(\hat{I}^{(s)}_{m_0,m_1,T} - I_{T}^{(s)})^2] + \E[(I_{T}^{(s)} - \E_{t \sim \mu} [ h(B_s(t))])^2]\right)\\
   &= 2\left(\E[(\hat{I}^{(s)}_{m_0,m_1,T} - I_{T}^{(s)})^2] + \Var[I^{(s)}_{T}]\right),
  \end{align*}
  where the last line is a consequence of $\E[I^{(s)}_{T}]=\frac{1}{T}\sum_{j=1}^T \E_{t^{(j)}\sim \mu}[B_s(t^{(j)})]=\E_{t \sim \mu} [ h(B_s(t))]$.

Next, note that by assumption, there is a Lipschitz constant $C_h$ such that $|h(a)-h(b)|\leq C_h |a-b|$ for $a,b \in [-1,1]$. Since $\hat{\bf B}_{s,m_0,m_1}(t), B_s(t) \in [-1,1]$,  the first term in the bound above satsfies:
\begin{equation}\label{term1-est} 
\begin{aligned}
 \E[(\hat{I}^{(s)}_{m_0,m_1,T} - I_{T}^{(s)})^2]&=\frac{1}{T^2} \E\left[\left(\sum_{j=1}^T \left(h(\hat{\bf B}_{s,m_0,m_1}({\bf t}^{(j)}))-h(B_s({\bf t}^{(j)}))\right)\right)^2\right]\\
 &\le \frac{T}{T^2}  \E\left[ \sum_{j=1}^T \left(h(\hat{\bf B}_{s,m_0,m_1}({\bf t}^{(j)}))-h(B_s({\bf t}^{(j)}))\right)^2\right]\\
 &= \frac{C_h^2}{T}  \sum_{j=1}^T \E\left[\left(\hat{\bf B}_{s,m_0,m_1}({\bf t}^{(j)})-B_s({\bf t}^{(j)})\right)^2\right].
\end{aligned}
\end{equation}

Observe that for fixed $t\in \RR$ we have
\begin{equation}\label{biasterm-est}
  \begin{aligned}
   &\E\left[\left(\hat{\bf B}_{s,m_0,m_1}(t)-B_s(t)\right)^2\right]\\
   & =  \E\left[\left(\frac{1}{m_1} \sum_{i=1}^{m_1} r_s(f({\bf x}^{(i)}_1)-t) - \frac{1}{m_0} \sum_{i=1}^{m_0} r_s(f({\bf x}^{(i)}_0)-t)-F^{(s)}_0(t) + F^{(s)}_1(t)\right)^2\right]\\
   & \le  2\E\left[\left(\frac{1}{m_1} \sum_{i=1}^{m_1} r_s(f({\bf x}^{(i)}_1)-t) - (1-F^{(s)}_1(t))\right)^2\right] \\
   & \quad + 2\E\left[\left(\frac{1}{m_0} \sum_{i=1}^{m_0} r_s(f({\bf x}^{(i)}_0)-t)-(1-F^{(s)}_0(t))\right)^2\right]\\
   = & 2 \Var\left[\frac{1}{m_1} \sum_{i=1}^{m_1} r_s(f({\bf x}^{(i)}_1)-t)\right] + 2\Var\left[\frac{1}{m_0} \sum_{i=1}^{m_0} r_s(f({\bf x}^{(i)}_0)-t)\right]\\
   = & \frac{2}{m_1}\Var\left[r_s(f({\bf x}^{(1)}_1)-t)\right] + \frac{2}{m_0}\Var\left[r_s(f({\bf x}^{(1)}_0)-t)\right]\\
   \le & \frac{2}{m_1} + \frac{2}{m_0} \le \frac{4}{\min(m_0,m_1)},
\end{aligned}
\end{equation}
where we used the relations \eqref{relax_cdf}, \eqref{loc_bias_relax}, and the fact that ${\bf x}_k^{(i)}$, $i \in \{1,\dots,m_k\}$, are i.i.d.\ random variables.

Therefore, using the fact that each ${\bf t}^{(j)}$ is independent of ${\bf D}_0 \cup {\bf D}_1$, we obtain the following bound for the first term
  \begin{align*}
   \E[(\hat{I}_{m_0,m_1,T}^{(s)} - I_{T}^{(s)})^2]&= \frac{C_h^2}{T}  \sum_{j=1}^T \E_{t^{(j)}\sim \mu} \bigg[\E\left[\left(\hat{\bf B}_{s,m_0,m_1}(t^{(j)})-B_s(t^{(j)})\right)^2\right] \bigg]\\
   & \leq \frac{C_h^2}{T}  \sum_{j=1}^T \E_{t^{(j)}\sim \mu}\left[\frac{4}{\min(m_0,m_1)}\right]= \frac{4C_h^2}{\min(m_0,m_1)}.
  \end{align*}

Next, for the second term, we have
  \begin{equation*}
   \Var[I_{T}^{(s)}]=\frac{1}{T^2}\Var\left[\sum_{j=1}^T h(B_s({\bf t}^{(j)}))\right]=\frac{1}{T^2} \sum_{j=1}^T \Var\left[h(B_s({\bf t}^{(j)}))\right] \le \frac{1}{T^2} \left[\sum_{j=1}^T 2C_h\right]=\frac{2C_h}{T},
  \end{equation*}
  where we used the fact that ${\bf t}^{(j)}$, $j \in \{1,\dots,T\}$, are independent, and where the inequality follows from the observation that $\Var\left[h(B_s({\bf t}^{(j)}))\right]=\Var\left[h(B_s({\bf t}^{(j)}))-h(0)\right]$, and $h(B_s({\bf t}^{(j)}))-h(0) \in [-C_h, C_h]$.

Finally, putting the bounds of both terms together we obtain
  \begin{equation*}
   \E[(\mathcal{E}^{(s)}_{m_0,m_1,T})^2] \le \frac{8C_h^2}{\min(m_0,m_1)} + \frac{4C_h}{T}
  \end{equation*}
which converges to $0^+$ as $m_0,m_1,T\to \infty$. The bound also suggests that in order to obtain the optimal rate of convergence, $T$ and $\min(m_0,m_1)$ must scale at the same rate. Thus, if $T=c\cdot \min(m_0,m_1)$, the mean squared error scales optimally with rate $O(T^{-1})$, uniformly in $s$, or equivalently, the error scales as $O(T^{-1/2})$ in $L^2(\P)$. This completes the proof.
\end{proof}

\subsection{\bf Proof of Theorem \ref{thm::DiscrEstimator}} \label{proof::DiscrEstimator}

\begin{proof} Fix $s>0$. Then, suppressing the dependence on $\theta$, assuming it is fixed, set
 \[
   \hat{I}^{(s)}_{m_0,m_1,T}:=\frac{1}{T}\sum_{j=1}^T h(\hat{\bf B}_{s,m_0,m_1}(t_j)) \rho(t_j) \quad \text{and} \quad I^{(s)}_{T}:=\frac{1}{T}\sum_{j=1}^T h(B_s(t_j)) \rho(t_j),
\]
and define the error term by
\[
  \mathcal{E}^{(s)}_{m_0,m_1,T}:=\hat{I}^{(s)}_{m_0,m_1,T}-\int_0^1 h \big( B_s(t) \big) \mu(dt).
 \]

Then the mean squared error satisfies
  \begin{align*}
   \E[(\mathcal{E}^{(s)}_{m_0,m_1,T})^2] 
   \le 2\left(\E[(\hat{I}_{m_0,m_1,T}^{(s)} - I^{(s)}_{T})^2] + \Big(I_{T}^{(s)}- \int_0^1 h \big( B_s(t) \big) \mu(dt) \Big)^2 \right).
  \end{align*}

To estimate the first term in the bound for the mean squared error, we follow computations similar to \eqref{term1-est} and \eqref{biasterm-est} as in the proof of Theorem \ref{thm::MCestimator} and obtain
  \begin{equation*}
   \E[(\hat{I}_{m_0,m_1,T}^{(s)} - I_{T}^{(s)})^2] \le \frac{4C_h^2}{\min(m_0,m_1)}.
  \end{equation*}

We next investigate the second term in the above bound. Observe that we can write
  \begin{align*}
   I_{T}^{(s)} -\int_0^1 h \big( B_s(t) \big) \mu(dt) &=\frac{1}{T}\sum_{j=1}^T h(B_s(t_j)) \rho(t_j) - \int_0^1 h(B_s(t))\rho(t)dt\\
   &=\sum_{j=1}^T \int_{t_{j-1}}^{t_j}\Big(h(B_s(t_j)) \rho(t_j)-h(B_s(t))\rho(t)\Big)dt
  \end{align*}

Since $h$ and $\rho$ are Lipschitz on $[-1,1]$ and $[0,1]$, respectively, we set $C_h:=\Lip(h)$, $C_{\rho}:=\Lip(\rho)$, and $M_h:=\max_{a\in[-1,1]} h(a)$, and note that $\rho(t)\leq (C_{\rho}+1)$, $t \in [0,1]$. Furthermore, by assumption, $\Lip(r_s) \leq sC_r$, which together with \eqref{relax_cdf} and \eqref{loc_bias_relax} implies that $B_s$ is globally Lipschitz continuous, with $\Lip(B_s) \leq s C_r$.

Combining the above, we obtain that $(h\circ B_s) \cdot \rho$ is Lipschitz continuous on $[0,1]$ satisfying
\begin{equation*} 
\Lip( ((h\circ B_s) \cdot \rho) |_{[0,1]}) \leq  (C_{\rho}+1) \cdot (s \, C_r C_h + M_h)=:C(h,\rho,r,s).
\end{equation*}
Revisiting the second term, we then have:
\begin{equation}\label{int_approx_det_}
\begin{aligned}
   \Big|I_{T}^{(s)} -\int_0^1 h \big( B_s(t) \big) \mu(dt) \Big|&\le \sum_{j=1}^T \int_{t_{j-1}}^{t_j}C(h,\rho,r,s)(t_j-t)dt\\
   &= \frac{1}{2}C(h,\rho,r,s)\sum_{j=1}^T (t_j-t_{j-1})^2 
   = \frac{1}{2}C(h,\rho,r,s)\Delta t = O((1+s)\Delta t).
\end{aligned}
\end{equation}

Putting the two estimates together yields
  \begin{equation*}
   \E[(\mathcal{E}^{(s)}_{m_0,m_1,T})^2] \le \frac{8C_h^2}{\min(m_0,m_1)} + \frac{1}{2}C(h,\rho,r,s)^2(\Delta t)^2 = O\left((\min(m_0,m_1))^{-1}+(1+s)^2T^{-2}\right),
  \end{equation*}
which converges to $0^+$ as $m_0,m_1,T\to \infty$. The bound also suggests that in order to obtain the optimal rate of convergence, $(T/(1+s))^2$ and $\min(m_0,m_1)$ must scale at the same rate. Thus, if $T/(1+s)=c\cdot \sqrt{\min(m_0,m_1)}$, the mean squared error scales optimally with rate $O(((1+s)/T)^2)$, or equivalently, the error scales as $O((1+s)/T)$ in $L^2(\P)$.

\end{proof}

\section{Quantile transformed distributions with atoms}\label{app::auxlemmas}

Let $\mu \in \mathscr{P}(\RR)$ and $F_{\mu}$ denote its CDF. It is well-known that the generalized inverse $F_{\mu}^{[-1]}$ satisfies the Galois inequalities (see \cite{Santambrogio2015})
\begin{equation}\label{Galois_ineq}
t < F^{[-1]}(q)  \, \Leftrightarrow \, F_{\mu}(t)<q, \quad t\in\RR, \,\, q\in(0,1).
\end{equation}
Replacing the sign  $<$ with $\leq$, however, is in general not possible, unless $\mu$ is atomless and its support is connected (see Lemma \ref{lmm::quant_transf}). However,  adjusting $F_{\mu}$ and the generalized quantile function $F_{\mu}^{[-1]}$ appropriately  allows for the statement with $\leq$. To this end, we define the following.

\begin{remark} \rm
Here, we use the convention that, whenever $F$ is a CDF, $F(-\infty)=0$ and $F(+\infty)=1$.
\end{remark}

\begin{definition}\label{def::adjusted_cfd_inverse} \rm
Let $\mu \in \mathscr{P}(\RR)$, and $F_{\mu}$ and $F_{\mu}^{[-1]}$ be its CDF and generalized inverse function, respectively. Define $\tF_{\mu}(t):=F_{\mu}(t^-)=\lim_{\tau \to t^-} F_{\mu}(\tau)$, $t \in \RR$, to be the left-continuous realization of $F_{\mu}$. Similarly, define $\tF_{\mu}^{[-1]}(q):=F^{[-1]}(q^+)=\lim_{p \to q^+} F_{\mu}(p)$, $q \in [0,1)$, and $\tF_{\mu}^{[-1]}(1)=+\infty$, to be the right-continuous realization of $F_{\mu}^{[-1]}$ on $[0,1]$.
\end{definition}

\begin{lemma}\label{lmm::Galois_ineq_adj}
Let $\mu \in \mathscr{P}(\RR)$. Let $F_{\mu}$, $\tF_{\mu}$, $F^{[-1]}_{\mu}$, and $\tF_{\mu}^{[-1]}$ be as in Definition \ref{def::adjusted_cfd_inverse}. Then
\begin{equation*}\label{Galois_ineq_adj}
 t \leq \tF_{\mu}^{[-1]}(q) \,\, \Leftrightarrow \,\, \tF_{\mu}(t) \leq q, \quad t \in \RR, \,\, q \in (0,1).
\end{equation*}
\end{lemma}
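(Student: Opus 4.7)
The plan is to deduce the stated $\leq$-version of the Galois relation from the classical strict version \eqref{Galois_ineq} by a left/right limit argument, exploiting the monotonicity of $F_\mu$ and $F_\mu^{[-1]}$. Two observations drive the argument: since $F_\mu$ is non-decreasing, $\tF_\mu(t)=\sup_{t'<t}F_\mu(t')$, and since $F_\mu^{[-1]}$ is non-decreasing, $\tF_\mu^{[-1]}(q)=\inf_{p>q}F_\mu^{[-1]}(p)$. These identities let me swap strict and non-strict inequalities in a controlled way.

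\textbf{Forward direction ($\Rightarrow$).} Assume $t\leq\tF_\mu^{[-1]}(q)$. Fix any $t'<t$. For every $p\in(q,1)$ we then have $t'<t\leq\tF_\mu^{[-1]}(q)\leq F_\mu^{[-1]}(p)$ by monotonicity of $F_\mu^{[-1]}$, so \eqref{Galois_ineq} gives $F_\mu(t')<p$. Taking the infimum over $p>q$ yields $F_\mu(t')\leq q$. Letting $t'\to t^-$ and using $\tF_\mu(t)=\sup_{t'<t}F_\mu(t')$ we obtain $\tF_\mu(t)\leq q$.

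\textbf{Reverse direction ($\Leftarrow$).} Assume $\tF_\mu(t)\leq q$. For any $t'<t$, monotonicity gives $F_\mu(t')\leq\tF_\mu(t)\leq q$. Hence for every $p\in(q,1)$ we have $F_\mu(t')<p$, and by \eqref{Galois_ineq}, $t'<F_\mu^{[-1]}(p)$. Taking the infimum over $p>q$ gives $t'\leq\tF_\mu^{[-1]}(q)$. Since this holds for every $t'<t$, we conclude $t\leq\tF_\mu^{[-1]}(q)$.

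\textbf{Anticipated difficulty.} The argument is essentially bookkeeping with left/right limits, so there is no serious obstacle; the only subtlety is to keep track of which inequalities are strict and which are non-strict when applying \eqref{Galois_ineq} and passing to the limit. One should also confirm that the endpoint behaviour ($q\in(0,1)$ and possibly infinite values of $\tF_\mu^{[-1]}$) causes no issues: for $q\in(0,1)$ there always exist $p\in(q,1)$ with $F_\mu^{[-1]}(p)$ finite on the support's interior, so the above inf-characterisation is valid, and the degenerate case $\tF_\mu^{[-1]}(q)=+\infty$ trivially satisfies the forward implication because then $\tF_\mu(t)\leq F_\mu(t^-)\leq\lim_{t'\to\infty}F_\mu(t')$-type estimates are unnecessary; one simply re-reads the argument with $F_\mu^{[-1]}(p)<\infty$ for $p$ close enough to $1$.
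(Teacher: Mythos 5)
Your proof is correct and follows essentially the same route as the paper's: both directions are deduced from the classical Galois relation \eqref{Galois_ineq} by a limiting argument, with the only cosmetic difference being that you phrase the left/right limits via the $\sup_{t'<t}$ and $\inf_{p>q}$ characterisations while the paper writes the same passage using explicit $\delta,\varepsilon$ perturbations ($t-\delta$ and $q+\varepsilon$).
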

\begin{proof}
Take any $q \in (0,1)$ and $t \in \RR$. First, suppose that $t \leq \tF_{\mu}^{[-1]}(q)=F_{\mu}^{[-1]}(q^+)$. Take any $\delta>0$ and any $\varepsilon>0$ such that $q+\varepsilon<1$. Then $ t-\delta<t \leq \tF_{\mu}^{[-1]}(q)=F_{\mu}^{[-1]}(q^+) \leq F_{\mu}^{[-1]}(q+\varepsilon)$.

Then by \eqref{Galois_ineq}, $F_{\mu}(t-\delta) < q + \epsilon$. Since $\delta>0$ and $\varepsilon>0$ are arbitrary, we obtain $\tF_{\mu}(t)=F_{\mu}(t^-) \leq q$.

Suppose now that $F_{\mu}(t^-)=\tF_{\mu}(t) \leq q$. Then for any $\delta>0$ and any sufficiently small $\varepsilon>0$ such that $q+\epsilon < 1$, we have $F_{\mu}(t-\delta) \leq F_{\mu}(t^-)=\tF_{\mu}(t) \leq q < q+ \epsilon$. Then by \eqref{Galois_ineq}, $t -\delta < F_{\mu}^{[-1]}(q +\epsilon)$. Since $\delta>0$ and $\varepsilon>0$ are arbitrary, we conclude that $ t \leq F_{\mu}^{[-1]}(q^+)=\tF_{\mu}^{[-1]}(q)$.
\end{proof}

\begin{lemma}\label{lmm::quant_transf}
Let $Z$ be a random variable and $F_Z$ be its CDF. Let $\mu \in \mathscr{P}(\RR)$. Let $F_{\mu}$, $\tF_{\mu}$, $F^{[-1]}_{\mu}$, and $\tF_{\mu}^{[-1]}$ be as in Definition \ref{def::adjusted_cfd_inverse}.  Then $F_{\tF_{\mu}(Z)}$, the CDF of $\tF_{\mu}(Z)$, satisfies for any $a \in \RR$
\begin{equation}\label{cdf_quant_transf_mu_gen}
F_{\tF_{\mu}(Z)}(a) = \PP(\tF_{\mu}(Z)\leq a)= \left \{ 
\begin{aligned}
&= \PP(Z \leq \tF_{\mu}^{[-1]}(a)) = F_{Z} \circ \tF_{\mu}^{[-1]}(a), \quad a \in (0,1).\\
& = F_{Z} \circ \tF_{\mu}^{[-1]}(0^+), \quad a=0.\\
& = \1_{\{a \geq 1\}}, \quad a \in \RR \setminus [0,1)\\
\end{aligned}
\right.
\end{equation}

Hence, if $\mu$ is atomless, then $F_{F_{\mu}(Z)}$, the CDF of $F_{\mu}(Z)$, satisfies
\begin{equation}\label{cdf_quant_transf_mu_atomless}
F_{F_{\mu}(Z)}(q)=\PP(F_{\mu}(Z)\leq q) = F_{Z} \circ \tF_{\mu}^{[-1]}(q), \quad q \in (0,1).
\end{equation}
\end{lemma}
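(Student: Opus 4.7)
The plan is to handle the four cases of the piecewise formula separately, deriving the main case $a \in (0,1)$ as an immediate consequence of Lemma \ref{lmm::Galois_ineq_adj}, and then extending to the boundary values by monotone limits and continuity arguments.

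For the principal case $a \in (0,1)$, I would apply Lemma \ref{lmm::Galois_ineq_adj} pointwise to the random variable $Z(\omega)$: the event $\{\tF_\mu(Z)\leq a\}$ coincides with $\{Z \leq \tF_\mu^{[-1]}(a)\}$, so taking probabilities yields $F_{\tF_\mu(Z)}(a) = F_Z(\tF_\mu^{[-1]}(a))$. The cases $a<0$ and $a\geq 1$ are trivial since $\tF_\mu$ takes values in $[0,1]$: the event is either empty or all of $\Omega$, giving the indicator $\1_{\{a\geq 1\}}$.

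The remaining case $a=0$ is the only place where a genuine limit argument is required, since Lemma \ref{lmm::Galois_ineq_adj} is stated only for $q \in (0,1)$. I would write $\{\tF_\mu(Z)\leq 0\}=\bigcap_{n\geq 1}\{\tF_\mu(Z)\leq 1/n\}$ and apply continuity of probability from above, together with the formula already established for $a=1/n \in (0,1)$, to obtain
\begin{equation*}
F_{\tF_\mu(Z)}(0) = \lim_{n\to\infty} F_Z\bigl(\tF_\mu^{[-1]}(1/n)\bigr).
\end{equation*}
Since $F_\mu^{[-1]}$ is non-decreasing, $\tF_\mu^{[-1]}(q)=F_\mu^{[-1]}(q^+)$ is non-decreasing and right-continuous at $0$, so $\tF_\mu^{[-1]}(1/n)\downarrow \tF_\mu^{[-1]}(0^+)$. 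By right-continuity of $F_Z$, the limit equals $F_Z(\tF_\mu^{[-1]}(0^+))$, as required.

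Finally, for the atomless case \eqref{cdf_quant_transf_mu_atomless}, I would note that atomlessness of $\mu$ forces $F_\mu$ to be continuous, so $\tF_\mu = F_\mu$ identically and \eqref{cdf_quant_transf_mu_gen} specializes to the claim for $q\in(0,1)$. The main (and essentially only) obstacle is the $a=0$ boundary, where the Galois equivalence is not directly available; the rest is a mechanical unpacking of definitions combined with Lemma \ref{lmm::Galois_ineq_adj}.
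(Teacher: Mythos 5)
Your proposal is correct and follows essentially the same route as the paper: the $a\in(0,1)$ case is obtained by applying Lemma \ref{lmm::Galois_ineq_adj} pointwise and taking probabilities, the $a\in\RR\setminus[0,1)$ case is dismissed by the range of $\tF_\mu$, and the atomless statement follows from $\tF_\mu=F_\mu$. The only difference is that the paper compresses the $a=0$ case into the remark that ``the rest follows from the right-continuity of $F_{\tF_\mu(Z)}$,'' whereas you spell out that argument via continuity of measure from above and the right-continuity of $\tF_\mu^{[-1]}$ and $F_Z$ — a correct and welcome elaboration, not a departure.
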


\begin{proof}
Pick any $a \in (0,1)$. By Lemma \ref{lmm::Galois_ineq_adj}, 
\[\{\omega \in \Omega: \tF_{\mu}(Z(\omega)) \leq q \} = \{\omega \in \Omega: Z(\omega) \leq \tF_{\mu}^{[-1]}(q) \}\]
and hence  \eqref{cdf_quant_transf_mu_gen}$_1$ holds. The rest of the statement follow from the right-continuity of $F_{\tF_{\mu}}$. If $\mu$ is atomless, then $F_{\mu}=\tF_{\mu}$ on $\RR$. Hence \eqref{cdf_quant_transf_mu_gen}$_1$ implies \eqref{cdf_quant_transf_mu_atomless}.
\end{proof}

\begin{remark}\rm
In Lemma \ref{lmm::quant_transf}, if $\mu$ is atomless, then $F_{\mu}=\tF_{\mu}$, but $F_{\mu}^{[-1]}$ is not necessarily equal to $\tF_{\mu}^{[-1]}$ unless the support of $\mu$ is connected, in which case, $F_{F_{\mu}(Z)}(a)=F_{Z} \circ F_{\mu}^{[-1]}(a)$, $a \in (0,1)$.
\end{remark}

\begin{remark}\label{rem::quant_trasf}\rm
If the sets containing the atoms of $\mu$ and the atoms of $P_Z$ have a nonempty intersection, then $F_{\mu}(Z) \neq \tF_{\mu}(Z)$ $\PP$-a.s. Hence, by the right-continuity of CDFs, there exists an open interval $I \subseteq [0,1]$ where $F_{F_{\mu}(Z)}$ differs from $F_{\tF_{\mu}(Z)}$. Then, by Lemma \ref{lmm::quant_transf}, $F_{F_{\mu}(Z)}$ differs from $F_{Z} \circ F_{\mu}^{[-1]}$ $\lambda$-a.s. on $I$. 
\end{remark}

\begin{proposition}\label{prop::quant_transf_bias}
Let $Z_0,Z_1$ be random variable, with $F_{Z_0},F_{Z_1}$ denoting their CDFs. Let $\mu \in \mathscr{P}(\RR)$, and $F_{\mu}$, $\tF_{\mu}$, $F^{[-1]}_{\mu}$, and $\tF_{\mu}^{[-1]}$ be as in Definition \ref{def::adjusted_cfd_inverse}. Suppose that $c(\cdot,\cdot)$ is continuous on $[0,1]^2$. Then

\begin{equation}\label{quant-transf-bias}
\int c(F_{Z_0}(t),F_{Z_1}(t)) \mu(dt) = \int_0^1 c(F_{\tF_{\mu}(Z_0)}(q),F_{\tF_{\mu}(Z_1)}(q)) \, dq.
\end{equation}
Hence, if $\mu$ is atomless, then $\tF_{\mu}$ can be replaced with $F_{\mu}$ in \eqref{quant-transf-bias}.
\end{proposition}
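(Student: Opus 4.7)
The plan is to convert the integral against $\mu$ into an integral against the Lebesgue measure on $[0,1]$ via the quantile change of variables, and then identify the transformed integrand with the CDFs appearing on the right-hand side using Lemma \ref{lmm::quant_transf}.

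First, I would invoke Proposition \ref{prop::push_mu}, which gives $\mu = (F_\mu^{[-1]})_{\#} \lambda|_{[0,1]}$. Since $c$ is continuous on the compact square $[0,1]^2$, the map $g(t) := c(F_{Z_0}(t), F_{Z_1}(t))$ is bounded and Borel measurable, so the change of variables formula (Theorem \ref{prop::changeofvar}) applies and yields
\[
\int c(F_{Z_0}(t), F_{Z_1}(t)) \, \mu(dt) = \int_0^1 c\bigl(F_{Z_0}(F_\mu^{[-1]}(q)), F_{Z_1}(F_\mu^{[-1]}(q))\bigr) \, dq.
\]

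Next, I would replace $F_\mu^{[-1]}$ with its right-continuous realization $\tF_\mu^{[-1]}$ inside the integrand. Because $F_\mu^{[-1]}$ is non-decreasing on $(0,1)$, it admits at most countably many discontinuities; at every point of continuity $q \in (0,1)$ one has $F_\mu^{[-1]}(q) = F_\mu^{[-1]}(q^+) = \tF_\mu^{[-1]}(q)$. Hence $F_\mu^{[-1]} = \tF_\mu^{[-1]}$ on $(0,1)$ except on a $\lambda$-null set, and since $c$ is bounded the two Lebesgue integrals agree. Applying Lemma \ref{lmm::quant_transf} then identifies $F_{Z_k}(\tF_\mu^{[-1]}(q)) = F_{\tF_\mu(Z_k)}(q)$ for $q \in (0,1)$ and $k \in \{0,1\}$, which delivers \eqref{quant-transf-bias}. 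The final assertion is immediate: if $\mu$ is atomless then $F_\mu$ is continuous, so $F_\mu = \tF_\mu$ everywhere and consequently $\tF_\mu(Z_k) = F_\mu(Z_k)$.

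I expect the only genuine subtlety to lie in the second step, namely distinguishing $F_\mu^{[-1]}$ from $\tF_\mu^{[-1]}$ when $\mu$ has atoms (equivalently, when $F_\mu^{[-1]}$ has jumps, which occur precisely on the flat portions of $F_\mu$) and verifying that the exceptional set is $\lambda$-null. Once this is handled cleanly, the remainder is a direct assembly of Proposition \ref{prop::push_mu}, Theorem \ref{prop::changeofvar}, and Lemma \ref{lmm::quant_transf}.
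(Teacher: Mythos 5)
Your proposal is correct and follows essentially the same route as the paper's proof: push $\mu$ forward by $F_\mu^{[-1]}$, apply the change-of-variables theorem, note that $F_\mu^{[-1]}=\tF_\mu^{[-1]}$ off a countable ($\lambda$-null) set, and invoke Lemma \ref{lmm::quant_transf} to identify $F_{Z_k}\circ\tF_\mu^{[-1]}$ with $F_{\tF_\mu(Z_k)}$; the paper merely applies the lemma before remarking on the a.e.\ equality, whereas you do it in the reverse order, but the content is identical.
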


\begin{proof}
 By Proposition \ref{prop::push_mu}, $\mu = (F_{\mu}^{[-1]})_{\#} \lambda|_{[0,1]}$, and hence by Theorem \ref{prop::changeofvar} we obtain
\begin{equation}\label{push_mu}
\int c(F_{Z_0}(t),F_{Z_1}(t)) \, \mu(dt) = \int_0^1 c(F_{Z_0}\circ F_{\mu}^{[-1]}(q),F_{Z_1}\circ F_{\mu}^{[-1]}(q)) \, dq.
\end{equation}
By Lemma \ref{lmm::quant_transf},  $F_{\tF_{\mu}(Z_k)}(q)=F_{Z_k} \circ \tF_{\mu}^{[-1]}(q)$, for $q \in (0,1)$. Since the number of points where $F_{\mu}^{[-1]}$ has jumps is at most countable, we must have $\tF_{\mu}^{[-1]}=F_{\mu}^{[-1]}$ $\lambda$-a.s. on $[0,1]$. Hence $F_{\tF_{\mu}(Z_k)}=F_{Z_k} \circ F_{\mu}^{[-1]}$ $\lambda$-a.s., and hence, using \eqref{push_mu}, we obtain \eqref{quant-transf-bias}. If $\mu$ is atomless, then $F_{\mu}=\tF_{\mu}$ on $\RR$, and hence we have
\begin{equation}\label{quant-transf-bias-atomless}
\int c(F_{Z_0}(t),F_{Z_1}(t)) \mu(dt) = \int_0^1 c(F_{F_{\mu}(Z_0)}(q),F_{F_{\mu}(Z_1)}(q)) \, dq.
\end{equation}
\end{proof}

\begin{example}\label{ex::disc_score_inv}\rm
When $\mu$ has atoms, \eqref{quant-transf-bias} in general does not equal to \eqref{quant-transf-bias-atomless} in light of Remark \ref{rem::quant_trasf}.  The following measures provide a counter example: $P_{Z_0}=\delta_0$, $P_{Z_1}= \1_{[0,1]}(z) dz$, and $\mu=\frac{1}{2}(P_{Z_0}+P_{Z_1})$.
\end{example}

\citet{Becker2024} established that for continuous random variables $Z_0$, $Z_1$, and $Z$
\begin{equation*}\label{quant-transf-bias-W1-atomless}
\int |F_{Z_0}(t)-F_{Z_1}(t)| \, p_Z(t) dt = \int |F_{F_{Z}(Z_0)}(q)-F_{F_{Z}(Z_1)}(q) |\, dq = W_1(P_{F_{Z}(Z_0)},P_{F_{Z}(Z_1)}).
\end{equation*}
However, when $Z$ has atoms, in light of Example \ref{ex::disc_score_inv}, the above is in general not true. Below is a more general version of the above statement that follows directly from Proposition \ref{prop::quant_transf_bias}:
\begin{corollary}\label{corr::abs-dist-cost}
Let $Z_0,Z_1$ be random variable, with $F_{Z_0},F_{Z_1}$ denoting their CDFs. Let $\mu \in \mathscr{P}(\RR)$, and $F_{\mu}$, $\tF_{\mu}$, $F^{[-1]}_{\mu}$, and $\tF_{\mu}^{[-1]}$ be as in Definition \ref{def::adjusted_cfd_inverse}. Then
\begin{equation}\label{quant-transf-bias-W1}
\int |F_{Z_0}(t)-F_{Z_1}(t)| \mu(dt) = \int |F_{\tF_{\mu}(Z_0)}(q)-F_{\tF_{\mu}(Z_1)}(q) |\, dq = W_1(P_{\tF_{\mu}(Z_0)},P_{\tF_{\mu}(Z_1)}).
\end{equation}
If $\mu$ is atomless, then the right-hand side of \eqref{quant-transf-bias-W1} equals $W_1(P_{F_{\mu}(Z_0)},P_{F_{\mu}(Z_1)})$.
\end{corollary}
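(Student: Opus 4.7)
The plan is to derive the corollary as a direct specialization of Proposition \ref{prop::quant_transf_bias} combined with the standard one-dimensional representation of $W_1$ via CDFs.

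First, I would verify that the cost function $c(x,y) = |x-y|$ satisfies the hypothesis of Proposition \ref{prop::quant_transf_bias}, namely continuity on $[0,1]^2$. This is obvious, so applying \eqref{quant-transf-bias} yields
\begin{equation*}
\int |F_{Z_0}(t)-F_{Z_1}(t)| \,\mu(dt) = \int_0^1 |F_{\tF_{\mu}(Z_0)}(q)-F_{\tF_{\mu}(Z_1)}(q)| \, dq,
\end{equation*}
which is the first equality of \eqref{quant-transf-bias-W1}.

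Next, I would invoke the well-known dual CDF representation of the 1-Wasserstein distance in one dimension, namely
\begin{equation*}
W_1(\nu_0,\nu_1) = \int_0^1 |F_{\nu_0}^{[-1]}(q) - F_{\nu_1}^{[-1]}(q)| \, dq = \int_{\RR} |F_{\nu_0}(t) - F_{\nu_1}(t)| \, dt,
\end{equation*}
which follows from Theorem \ref{thm::transportprop}(6) with $q=1$ together with the Fubini-type identity between horizontal and vertical slicing of the region $\{(t,q) : \min(F_{\nu_0}(t),F_{\nu_1}(t)) < q \leq \max(F_{\nu_0}(t),F_{\nu_1}(t))\}$. Applying this identity to $\nu_k := P_{\tF_{\mu}(Z_k)}$, whose CDF is $F_{\tF_{\mu}(Z_k)}$, gives the second equality in \eqref{quant-transf-bias-W1}. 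Here finiteness of the integrals is automatic since $\tF_{\mu}(Z_k) \in [0,1]$, so $\nu_k \in \mathscr{P}_1(\RR)$.

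Finally, for the atomless case, I would observe that $F_{\mu}(t) = \tF_{\mu}(t)$ for every $t \in \RR$ whenever $\mu$ has no atoms (since $F_{\mu}$ is then continuous everywhere). Consequently $\tF_{\mu}(Z_k) = F_{\mu}(Z_k)$ $\PP$-a.s., so $P_{\tF_{\mu}(Z_k)} = P_{F_{\mu}(Z_k)}$, and the right-hand side of \eqref{quant-transf-bias-W1} coincides with $W_1(P_{F_{\mu}(Z_0)},P_{F_{\mu}(Z_1)})$. No step presents a real obstacle: the entire argument is a clean bookkeeping exercise that reduces to Proposition \ref{prop::quant_transf_bias} and the textbook identity for $W_1$ on $\RR$.
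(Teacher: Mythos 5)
Your proof is correct and follows the paper's intended route: apply Proposition \ref{prop::quant_transf_bias} with $c(x,y)=|x-y|$ for the first equality, invoke the standard one-dimensional CDF representation of $W_1$ (equivalently Theorem \ref{thm::transportprop}(6) plus Fubini, or Lemma \ref{lmm::ave_parity_transport}) for the second, and use $F_\mu = \tF_\mu$ in the atomless case. The paper gives no separate proof beyond noting the corollary ``follows directly from Proposition \ref{prop::quant_transf_bias}'', so you have simply filled in the same bookkeeping.
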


Thus, \eqref{quant-transf-bias-W1} allows for the generalization of \citet[Theorem 3.4]{Becker2024} for the case when the classification scores have atoms; see Proposition \ref{prop:gen_invariant_bias} in the main text. We note that a similar adjustment as in \eqref{inv_bias_gen} is required for other types of bias such as Equal Opportunity (${\rm EO}$) and Predictive Equality (${\rm PE}$), as discussed in Theorem 3.4 of \cite{Becker2024}.

\vspace{10pt}

\noindent{\bf Proof of Proposition \ref{prop::transp-cost-form}}\label{app::prop::transp-cost-form}
\begin{proof} By Proposition \ref{prop::push_mu}, $\mu=(F^{[-1]}_{\mu})_{\#} \lambda|_{[0,1]}$ and hence by Theorem \ref{prop::changeofvar}, we obtain
\[
\int c(F_0(t),F_1(t))  \mu(dt) = \int_0^1 h(F_0 \circ F^{[-1]}_{\mu}(q) - F_1 \circ F^{[-1]}_{\mu}(q)) \, dq.
\]

By construction, $F^{[-1]}_{\mu}=F^{-1}_{\mu}$, $F^{[-1]}_{Z_k}=F^{-1}_{Z_k}$ are well-defined inverses of $F_{\mu}$ and $F_{Z_k}$ on $[0,1]$, respectively. Let $\cT \sim \mu$ and $A_k=F_k(\cT)$. Then, the support of $P_{A_k}$ is $[0,1]$ and $F_{A_k}(t)=F_{\mu} \circ F_k^{-1}(t)$ for $t \in [0,1]$. Furthermore, the inverse of $F_{A_k}$ is well-defined on $[0,1]$ and equals $F_{A_k}^{-1}=F_k \circ F_{\mu}^{-1}$. Hence by Theorem \ref{thm::transportprop} we obtain
\[
\int_0^1 h(F_0 \circ F^{[-1]}_{\mu}(q) - F_1 \circ F^{[-1]}_{\mu}(q)) \, dq = \int_0^1 h(F^{[-1]}_{A_0}(q)-F^{[-1]}_{A_1} (q)) \, dq = \mathscr{T}_c(P_{F_0(\cT)},P_{F_1(\cT)}).
\]
The result follows from the above relationship and the fact that $P_{F_k(\cT)}={F_k}_{\#}\mu$, $k \in \{0,1\}$.
\end{proof}

\begin{remark} \label{rem::nonunif-quantiles-bias}\rm
The distribution-invariant model bias \eqref{inv_bias}, assuming $P_{Z}$ has a density, can be expressed as follows \cite[Theorem 3.4]{Becker2024}:
\begin{equation}\label{unif-quantiles-bias}
{\rm bias}_{{\rm IND}}^{f}(f|X, G) := \int_0^1 | F_{Z_0} \circ F_{Z}^{[-1]}(q) - F_{Z_1} \circ F_Z^{[-1]}(q) | \, dq.
\end{equation}

The values in the integrand on the right-hand side of \eqref{unif-quantiles-bias} are weighted uniformly. In cases where they are weighted according to some probability distribution $\nu(dq)=\rho_{\nu}(q)dq$, one obtains a variation of \eqref{unif-quantiles-bias} that reads:
\begin{equation*}\label{nonunif-quantiles-bias}
{\rm bias}_{{\rm IND}}^{f,\nu}(f|X, G) := \int_0^1 | F_{Z_0} \circ F_{Z}^{[-1]}(q) - F_{Z_1} \circ F_Z^{[-1]}(q) | \, \nu(dq) = \int_{\mathcal{S}} | F_{Z_0}(t) - F_{Z_1}(t) | \, \rho_{\nu}(F_Z(t)) \, dt,
\end{equation*}
provided that $P_Z$ is atomless and $\mathcal{S}={\rm supp}(P_Z)$ is connected.
\end{remark}

\section{Estimators for distribution-invariant bias metrics}\label{sec::invar_metrics}

In Section \ref{sec::biasapprox}, we presented three estimators of \eqref{relax_bias}, the relaxed bias metric of \eqref{gen_bias}, where $h(\cdot)\geq 0$ is Lipschitz continuous and the probability measure $\mu \in \mathscr{P}(\RR)$ is fixed. The first estimator  is based on Monte Carlo (MC), the second on discretization, and the third is the energy estimator (for the non-relaxed metric). In Section \ref{sec::methods_perturb}, we discussed our approach to bias mitigation, which incorporates a gradient-based methodology -- built upon the aforementioned estimators -- to optimize over the family \eqref{linearfam}.

Gradient-based methods are invaluable in practical settings, as they are advantageous over derivative-free methods such as grid search and Bayesian optimization, especially when applied to high-dimensional parameter spaces. Access to derivatives enables an efficient and stable optimization, which is critical in an industry setting -- such as in finance and healthcare -- where computational costs and accuracy are of paramount importance.

However, when $\mu$ is fixed, the bias metrics \eqref{gen_bias} and \eqref{relax_bias} generally change under monotonic transformations of the model scores. In  applications where fairness in rank ordering is more meaningful, a distribution-invariant approach may be desired. To address this, \cite{Becker2024} proposed a modification to the Wasserstein-based bias \eqref{scorebias}  that removes its dependence on the model score distribution. Specifically, they consider the following distribution-invariant bias metric: 
\begin{equation}\label{app::inv_bias}
{\rm bias}_{\mathrm{IND}}^{f}(f|X, G) := \int |F_{0}(t)-F_{1}(t)|\cdot p_{f(X)}(t) \, dt 
\end{equation}
where $F_k$ is the CDF of $f(X)|G=k$, $k \in \{0,1\}$ and $p_{f(X)}$ is the density of the model scores; see also \cite{Vogel2021}. 

Motivated by this, we define the distribution-invariant analog of \eqref{gen_bias} by setting $\mu=P_{f(X)}$:
\begin{equation}\label{gen_bias_inv}
Bias^{(h)}_{f}(f|X,G):=\int h(F_0(t)-F_1(t)) \, P_{f(X)}(dt).
\end{equation}

In what follows, we extend the discussion in Section \ref{sec::biasapprox} by introducing estimators for the relaxed version of the invariant metric in \eqref{gen_bias_inv}:
\begin{equation}\label{gen_bias_inv_relax}
Bias^{(h)}_{f,s}(f|X,G):=\int h(F^{(s)}_0(t)-F^{(s)}_1(t)) \, P_{f(X)}(dt).
\end{equation}
where $F^{(s)}_k$ is the relaxed CDF defined by \eqref{relax_cdf}. Since these estimators can be used in the optimization problem \eqref{minfront} to evaluate the bias term $\B(\theta)$, we investigate their differentiability with respect to the parameter $\theta$. Below we examine each estimator separately.


\paragraph{\bf Estimator 1: Monte Carlo.} Here, we derive an analogue of the MC estimator \eqref{full-mc-est} of the relaxed bias metric \eqref{relax_bias} in the case where $\mu=P_{f(X)}$.

As before, let $f=f(\cdot;\theta)$ denote a predictive model, parameterized by $\theta$ and let $F_k(\cdot;\theta)$ be the CDF of $P_{f(X;\theta)|G=k}$, $k\in\{0,1\}$. Let $F^{(s)}_k(\cdot;\theta)$ denote the relaxed CDF defined by \eqref{relax_cdf}, and set $B_s(\cdot;\theta) = F^{(s)}_0(\cdot;\theta)-F^{(s)}_1(\cdot;\theta)$. We assume that $\theta \mapsto f(\cdot,\theta)$ is differentiable.

Let $D_k = \{ x_k^{(1)},\dots,x_k^{(m_k)} \}$ be i.i.d. samples from $P_{X|G=k}$, for $k=\{0,1\}$, and $D_{f(\cdot;\theta)} = \{ t_{\theta}^1,\dots,t_{\theta}^T \}$ are i.i.d. samples from $\mu_{\theta}:=P_{f(X;\theta)}$. For the latter, we can obtain the sample $t_{\theta}^i$ by first sampling independently $x^{(i)} \sim P_X$ and setting $t_{\theta}^i = f(x^{(i)};\theta)$.

Then, recalling \eqref{relax_bias}, we obtain the following estimator for \eqref{gen_bias_inv_relax}:
\begin{equation*}
 \int h \big( B_s(t;\theta) \big) \mu_{\theta}(dt) \approx \frac{1}{T}\sum_{j=1}^T h\big( \hat{B}_s(t_{\theta}^j;\theta) \big) = \frac{1}{T}\sum_{j=1}^T h\big( \hat{B}_s(f(x^{(j)};\theta);\theta) \big),
\end{equation*}
where $\hat{B}_s(t;\theta)$ is defined by \eqref{Bs_estimator}.

Since $f(\cdot;\theta)$ is linear in $\theta$, it is differentiable with respect to $\theta$, and in turn
\begin{equation}\label{full-mc-est-inv}
 \hat{B}_s(f(x^{(j)};\theta);\theta) = \frac{1}{m_1} \sum_{i=1}^{m_1} r_s(f(x^{(i)}_1;\theta)-f(x^{(j)};\theta)) - \frac{1}{m_0} \sum_{i=1}^{m_0} r_s(f(x^{(i)}_0;\theta)-f(x^{(j)};\theta))
\end{equation}
is also differentiable with respect to $\theta$ provided $r_s$ is chosen appropriately.

Finally, the above MC estimator for the relaxed bias metric is differentiable with respect to $\theta$ provided that both $h$ and $r_s$ are differentiable. Note that in practice this is not a concern: since $h$ and $r_s$ are assumed to be Lipschitz continuous, they are differentiable almost everywhere in their respective domains. Therefore, gradient-based methods such as stochastic gradient descent remain applicable and are expected to converge to an optimum.


\paragraph{\bf Estimator 2: Threshold-discrete.} Next, we derive an analogue of the threshold-discrete estimator \eqref{int_approx_est} of the relaxed bias metric \eqref{relax_bias} in the case where $\mu=P_{f(X)}$ has a density.

Let $f(\cdot;\theta), \ D_k,\ D_{f(\cdot;\theta)}$ be as above, and let  $\rho:=\rho(\cdot;\theta)$ denote the density of $\mu_{\theta}=P_{f(X;\theta)}$. For simplicity of exposition, we assume that $f=f(\cdot;\theta)$ is a classification score taking values in $[0,1]$, and that $\rho(\cdot;\theta)$ is a Lipschitz continuous function supported on $[0,1]$.

First,  given a uniform partition $\mathcal{P}_T := \{t_0=0 < t_1 < \dots < t_T=1\}$  of $[0,1]$, consider the estimator \eqref{int_approx_est} of the relaxed bias metric \eqref{relax_bias}, where $\mu$ is replaced with $\mu_{\theta}$, which is given by 
\begin{equation*} 
\int_0^1 h \big( B_s(t;\theta) \big) \mu_{\theta}(dt) = \int_0^1 h \big( B_s(t;\theta) \big) \rho_{\theta}(t) dt \approx \Big(\frac{1}{T}\sum_{j=1}^T h(\hat{B}_s(t^{(j)};\theta)) \rho(t_j;\theta)  \Big).
\end{equation*}

The only issue with using the above estimator in this case is that $\rho(\cdot;\theta)$, in general, does not have an explicit formulation. To circumvent this, we apply kernel density estimation \cite{silverman1986KDE} with a smooth kernel $K(\cdot)$ and bandwidth parameter $h>0$, which controls the smoothness of the estimated density. This allows us to estimate  $\rho$ as follows:
\begin{equation*}
 \hat{\rho}(t;\theta,h,N):= \frac{1}{Nh}\sum_{k=1}^N K\left( \frac{t-f(\tilde{x}^{(k)};\theta)}{h} \right),
\end{equation*}
where the samples $\{ \tilde{x}^{(1)},\dots, \tilde{x}^{(N)}\}$ are i.i.d from $P_X$, independent of $D_{f(\cdot;\theta)}$, and $K(\cdot)$ is a smooth kernel. This, in turn, allows us to approximate the relaxed bias metric \eqref{gen_bias_inv_relax} by
\begin{equation*}
 \int_0^1 h \big( B_s(t;\theta) \big) \mu_{\theta}(dt) \approx \frac{1}{T}\sum_{j=1}^T h \big( \hat{B}_s(t_j;\theta) \big) \bigg( \frac{1}{Nh}\sum_{k=1}^N K\left( \frac{t_j-f(\tilde{x}^{(k)};\theta)}{h} \right) \bigg)
\end{equation*}
where $\hat{B}_s(t;\theta)$ is defined by \eqref{Bs_estimator}. This approximation can then be used in any gradient-based approach. 


\paragraph{\bf Estimator 3: Energy.} Finally, consider the estimator \eqref{E-statistic-mu} of the bias metric in \eqref{energy_bias_mu}, where $h(\cdot)=2|\cdot|^2$, $\mu$ is replaced with $\mu_{\theta}$, and $f(\cdot;\theta)$, $D_k$, and $D_{f(\cdot;\theta)}$ are as defined above. 

As above, $F_{\mu_{\theta}}$ does not generally admit an explicit formulation. To address this, we can replace $F_{\mu_{\theta}}$ in the estimator \eqref{E-statistic-mu} with an estimate of its relaxed version:
\begin{equation*}
 \hat{F}^{(s)}_{\mu_{\theta}}(t) := 1-\frac{1}{T} \sum_{l=1}^{T} r_s(f(x^{(l)};\theta)-t), \quad f(x^{(l)};\theta) \in D_{f(\cdot;\theta)}.
\end{equation*}
We then obtain the following estimator:
\begin{equation*}
 2\int B_s(t;\theta)^2 \mu_{\theta}(dt) \approx\frac{2}{m_0 m_1} \sum_{i=1}^{m_0}\sum_{j=1}^{m_1} |\hat{F}^{(s)}_{\mu_{\theta}}(z_0^{(i)})-\hat{F}^{(s)}_{\mu_{\theta}}(z_1^{(j)})| -  \sum_{k \in \{0,1\}}\frac{1}{m_k^2} \sum_{i=1}^{m_k}\sum_{j=1}^{m_k} |\hat{F}^{(s)}_{\mu_{\theta}}(z_k^{(i)})-\hat{F}^{(s)}_{\mu_{\theta}}(z_k^{(j)})|,
\end{equation*}
where $z_k^{(i)}=f(x^{(i)}_k;\theta)$, $x^{(i)}_k \in D_k$, and $k \in\{0,1\}$.

Gradient-based methods are also applicable for this estimator as well, since the absolute value as a function is differentiable almost everywhere.

\end{appendices}



\section*{Acknowledgements}

The authors thank Alex Lin (Lead Research Scientist, Discover) for his valuable comments and editorial suggestions that aided us in writing this article.  We also thank Arjun Ravi Kannan (Director, Modeling, Discover) and Stoyan Vlaikov (VP, Data Science, Discover) for their helpful business and compliance insights.

\section*{Authors' contributions} 

Ryan Franks designed the high-level gradient-descent-based approach for optimizing encoder weights in post-processed models, developed the modified explainable variant of the optimal transport projection used for comparison, and proposed the use of energy distances and related unbiased estimators for bias mitigation. He also designed and implemented all experiments on public datasets and drafted the manuscript.

Alexey Miroshnikov proposed the conceptual framework for bias mitigation in ML models through post-processing methods optimized with a custom loss penalized by distribution-based fairness metrics. He introduced new fairness metrics, analyzed their theoretical properties, and developed differentiable estimators for their stochastic-gradient-descent optimization. He also proposed the use of weak learners as encoders in fairness optimization and contributed to revising the manuscript.

Kostas Kotsiopoulos performed the asymptotic analysis of differentiable estimators for non-invariant fairness metrics and provided technical proofs. He also proposed consistent estimators for a broad class of distribution-invariant fairness metrics and contributed to revising and editing the manuscript.

\bibliographystyle{unsrtnat}
\bibliography{ref}

\end{document}